\documentclass[12pt,letterpaper]{article}
\usepackage{amssymb,amsmath}
\usepackage{mathtools} %ceiling operators
\usepackage{amsthm}
\usepackage{hyperref}
\usepackage{graphicx}
\usepackage[table]{xcolor}
\usepackage{natbib}
\usepackage{ulem}
\usepackage{bbm} % for \mathbbm 1
\usepackage{algorithm2e}
\usepackage{url} % not crucial - just used below for the URL 

\addtolength{\oddsidemargin}{-1.4cm}
\addtolength{\evensidemargin}{-1.4cm}
\addtolength{\textwidth}{3cm}
\addtolength{\textheight}{3cm}
\addtolength{\topmargin}{-2cm}

\DeclarePairedDelimiter\abs{\lvert}{\rvert}

\newcommand{\bX}{\boldsymbol{X}}

\newcommand\maT{\mathcal{T}}
\newcommand\maE{\mathbb{E}}
\newcommand\maR{\mathbb{R}}

\newtheorem{theorem}{Theorem}
\newtheorem{lemma}[theorem]{Lemma} 
\newtheorem{proposition}[theorem]{Proposition} 
\newtheorem{remark}[theorem]{Remark}
\newtheorem{corollary}[theorem]{Corollary}

\begin{document}

\title{Fast Linear Model Trees by PILOT}

\author{Jakob Raymaekers\\
  {\normalsize Department of Quantitative 
	Economics, Maastricht University, 
	The Netherlands}\\ \\
	Peter J. Rousseeuw\\
	{\normalsize Section of Statistics and
	Data Science,  
	University of Leuven, Belgium}\\ \\
	Tim Verdonck\\
	{\normalsize Department of Mathematics,
	 University of Antwerp - imec, Belgium}\\ \\
	Ruicong Yao\\
	{\normalsize Section of Statistics and
	Data Science,  
	University of Leuven, Belgium}\\ \\}
	%{\normalsize 
	%\texttt{ruicong.yao@kuleuven.be}}\\ \\}
\date{\normalsize{February 7, 2023}}
\maketitle

\begin{abstract}
Linear model trees are regression trees that incorporate linear models in the leaf nodes. This preserves the intuitive interpretation of decision trees and at the same time enables them to better capture linear relationships, which is hard for standard decision trees. But most existing methods for fitting linear model trees are time consuming and therefore not scalable to large data sets. In addition, they are more prone to overfitting and extrapolation issues than standard regression trees. In this paper we introduce PILOT, a new algorithm for linear model trees that is fast, regularized, stable and interpretable. PILOT trains in a greedy fashion like classic regression trees, but incorporates an $L^2$ boosting approach and a model selection rule for fitting linear models in the nodes. The abbreviation PILOT stands for \textbf{PI}ecewise \textbf{L}inear \textbf{O}rganic \textbf{T}ree, where `organic' refers to the fact that no pruning is carried out. PILOT has the same low time and space complexity as CART without its pruning. An empirical study indicates that PILOT tends to outperform standard decision trees and other linear model trees on a variety of data sets. Moreover, we prove its consistency in an additive model setting under weak assumptions. When the data is generated by a linear model, the convergence rate is polynomial.
\end{abstract}

\noindent {\it Keywords:} Consistency, 
Piecewise linear model, 
Regression trees, Scalable algorithms.

%%%%%%%%%%%%%%%%%%%%%%%%%%%%%%%%%%%%%%%%%
\section{Introduction}

Despite their long history, decision trees such as CART \citep{cart} and C4.5 \citep{C4.5} remain popular machine learning tools. Decision trees can be trained quickly with few parameters to tune, and the final model can be easily interpreted and visualized which is an appealing advantage in practice. As a result, these methods are widely applied in a variety of disciplines including engineering \citep{M5app5}, bioinformatics \citep{dtapp2}, agriculture \citep{dtapp3}, and business analysis \citep{dtapp1,M5app4}. In addition to their standalone use, decision trees have seen widespread adoption in ensemble methods, often as the best available ``weak learner''. Prime examples are random forests \citep{breiman2001random} and gradient boosting methods such as XGBoost \citep{xgboost} and LightGBM \citep{lightgbm}. In this work we assume that the target variable is continuous, so we focus on regression trees rather than on classification.

A limitation of classical regression trees is that their piecewise constant nature makes them ill-suited to capture continuous relationships. They require many splits in order to approximate linear functions, which is undesirable. There are two main approaches to overcome this issue. The first is to use ensembles of decision trees such as random forests or gradient boosted trees. These ensembles smooth the prediction and can therefore model a more continuous relation between predictors and response. A drawback of these ensembles is the loss of interpretability. Combining multiple regression trees no longer allows for a simple visualization of the model, or for an explainable stepwise path from predictors to prediction. The second approach to  capture continuous relationships is to use model trees. Model trees have a tree-like structure, but allow for non-constant fits in the leaf nodes of the tree. Model trees thus retain the intuitive interpretability of classical regression trees while being more flexible. Arguably the most intuitive and common model tree is the linear model tree, which allows for linear models in the leaf nodes.

To the best of our knowledge, the first linear model tree algorithm was introduced by \cite{fried}. We will abbreviate it as FRIED. In each node it uses univariate piecewise linear fits, to replace the piecewise constant fits in CART. Once a model is fit on a node, its residuals are passed on to its child nodes for further fitting. The final prediction is given by the sum of the linear models along the path. Our experiments suggest that this method may suffer from overfitting and extrapolation issues. The FRIED algorithm received much less attention than its more involved successor MARS \citep{friedman1991multivariate}.

The M5 algorithm \citep{M5} is by far the most popular linear model tree, and is still commonly used today \citep{M5app3,M5app1,M5app4,M5app2,M5app5}. It starts by fitting CART. Once this tree has been built, linear regressions are introduced at the leaf nodes. Pruning and smoothing are then applied to reduce its generalization error. One potential objection against this algorithm is that the tree structure is built completely oblivious of the fact that linear models will be used in the leaf nodes.

The GUIDE algorithm \citep{guide} fits multiple linear models to numerical predictors in each node and then applies a $\chi^2$ test comparing positive and negative residuals to decide on which predictor to split. The algorithm can also be applied to detect interactions between predictors. However, no theoretical guarantee is provided to justify the splitting procedure. There are also algorithms that use ideas from clustering in their splitting rule. For example, SECRET \citep{secret} uses the EM algorithm to fit two Gaussian clusters to the data, and locally transforms the regression problem into a classification problem based on the closeness to these clusters. Experimental results do not favor this method over GUIDE, and its computational cost is high.

The SMOTI method \citep{smoti} uses two types of nodes: regression nodes and splitting nodes. In each leaf, the final model is the multiple regression fit to its `active' variables. This is achieved by `regressing out' the variable of a regression node from both the response and the other variables. The resulting fit is quite attractive, but the algorithm has a complexity of $\mathcal{O}(n^2p^3)$, which can be prohibitive for large data sets. This makes it less suitable for an extension to random forests or gradient boosting.

Most of the above algorithms need a pruning procedure to ensure their generalization ability, which is typically time consuming. An exception is LLRT \citep{llrt} which uses stepwise regression and evaluates the models in each node via $k$-fold cross validation. To alleviate the computational cost, the algorithm uses quantiles of the predictors as the potential splitting points. It also maintains the data matrices for fitting linear models on both child nodes so that they can be updated. 
Unfortunately, the time complexity of LLRT is quite high at $\mathcal{O}(knp^3 + n p^5)$ where $k$ is the depth of the tree.

It is also interesting to compare ensembles of linear model trees with ensembles of classical decision trees. Recently, \cite{gbplt} combined piecewise linear trees with gradient boosting. For each tree, they used additive fitting as in \cite{fried} or half additive fitting where the past prediction was multiplied by a factor. To control the complexity of the tree, they set a tuning parameter for the maximal number of predictors that can be used along each path. Empirical results suggest that this procedure outperformed classical decision trees in XGBoost \citep{xgboost} and in LightGBM \citep{lightgbm} on a variety of data sets, and required fewer training iterations. This points to a potential strength of linear model trees for ensembles.

To conclude, the main issue with existing linear model trees is their high computational cost. Methods that apply multiple regression fits to leaf and/or internal nodes introduce a factor $p^2$ or $p^3$ in their time complexity. And the methods that use simple linear fits, such as FRIED, still require pruning which is also costly on large data sets. In addition, these methods can have large extrapolation errors \citep{extrapolation}. Finally, to the best of our knowledge there is no theoretical support for linear model trees in the literature. 

In response to this challenging combination of issues we propose a novel linear model tree algorithm. Its acronym PILOT stands for \textbf{PI}ecewise \textbf{L}inear \textbf{O}rganic \textbf{T}ree, where `organic' refers to the fact that no pruning is carried out. The main features of PILOT are:
  
\begin{itemize}
    \item \textbf{Speed}: It has the same low time complexity as CART without its pruning.
    \item \textbf{Regularized}: In each node, a model selection procedure is applied to the potential linear models. This requires no extra computational complexity.
    \item \textbf{Explainable}: Thanks to the simplicity of linear models in the leaf nodes, the final tree remains highly interpretable. Also a measure of feature importance can be computed.
    \item \textbf{Stable extrapolation}: Two truncation procedures are applied to avoid extreme fits on the training data and large extrapolation errors on the test data, which are common issues in linear model trees.
    \item \textbf{Theoretically supported}: PILOT has proven consistency in additive models. When the data is generated by a linear model, PILOT attains a polynomial convergence rate, in contrast with CART.
\end{itemize}
The paper is organized as follows. Section \ref{sec:description} describes the PILOT algorithm and discusses its properties, including a derivation of its time and space complexity. Section \ref{sec:theory} presents the two main theoretical results. First, the consistency for general additive models is discussed and proven. Second, when the true underlying function is indeed linear an improved rate of convergence is demonstrated. We refer to the Appendix for proofs of the theorems, propositions and lemmas. Empirical evaluations are provided in Section \ref{sec:empirical}, where PILOT is compared with several alternatives on a variety of benchmark data sets. It outperformed other tree-based methods on data sets where linear models are known to fit well, and outperformed other linear model trees on data sets where CART typically performs well. Section \ref{sec:conclusions} concludes.

\section{Methodology} \label{sec:description}
In this section we describe the workings of the PILOT learning algorithm. We begin by explaining how its tree is built and motivate the choices made, and then derive its computational cost.

We will denote the $n \times p$ design matrix as $\bX=({X}_1, \dots,{X}_n)^\top$ and the $n \times 1$ response vector as $Y=(y_1,\dots,y_n)^\top$. We consider the standard regression model $y = f(X) + \epsilon$, where $f: \mathbb R^p\rightarrow\mathbb R$ is the unknown regression function and $\epsilon$ has mean zero.

As a side remark, the linear models and trees we consider here are equivariant to adding a constant to the response, that is, the predictive models would keep the same fitted parameters except for the intercept terms. Therefore, in the presentation we will assume that the responses are centered around zero, that is, $Y_{min} = -Y_{max}$ without loss of generality.

\subsection{Main structure of PILOT} \label{sec:mainstructure}
A typical regression tree has four ingredients: a construction rule, the evaluation of the models/splits, a stopping rule, and the prediction. Most regression tree algorithms are built greedily from top to bottom. That is, they split the original space along a predictor and repeat the procedure on the subsets. This approach has some major advantages. The first is its speed. The second is that it starts by investigating the data from a global point of view, which is helpful when detecting linear relationships.

Algorithm \ref{alg:overview} presents a high-level description of PILOT. It starts by sorting each predictor and storing its ranks.
At each node, PILOT then selects a predictor and a univariate piecewise linear model via a selection procedure that we will describe in Sections \ref{sec:models} and \ref{sec:modelselect}. Then it fits the model and passes the residuals on to the child nodes for further fitting. This recursion is applied until the algorithm stops. There are three stopping triggers. In particular, the recursion in a node stops when:
\begin{itemize}
\item the number of cases in the node is below a threshold value $n_{\text{fit}}$;
\item the number of cases in one of the child nodes would be less than a threshold $n_{\text{leaf}}$;
\item none of the candidate models do substantially better than a constant prediction, as we will describe in Section \ref{sec:modelselect}.
\end{itemize}
When all nodes have reached a stopping trigger, the algorithm is done.
The final prediction is given by aggregating the predictions of the linear models from the root to the leaves, as in the example in Figure \ref{treeplot}.

\RestyleAlgo{ruled}
\SetKwComment{Comment}{/* }{ */}
\SetKwInOut{Global}{Globals}
\normalem
\begin{algorithm}[hbt!]
\caption{Sketch of the PILOT algorithm}\label{alg:overview}
\KwIn{Training set $(\bX,Y)$, $n_{\text{fit}}$: minimal number of cases to fit a model; $n_{\text{leaf}}$: minimal number of cases in a leaf;  $K_{\text{max}}$: maximal tree depth. }
\KwOut{$T$, a decision tree object.}
\SetKwFunction{FMain}{PILOT}
\SetKwProg{Fn}{function}{:}{}
   \Fn{\FMain{$\bX,Y,n_{\text{fit}},n_{\text{leaf}},K_{\text{max}}$}}{
        $I\leftarrow\{1,\dots,\mbox{length}(Y)\}$\\
        $Y_0\leftarrow Y$; $p\leftarrow$ number of predictors\\
        $B\leftarrow Y_{max}$
        
       Sort the values $X^{(j)}_i$ in each predictor variable $X^{(j)}$, and store the indices of the sorted cases in an $n \times p$ integer matrix $idx$. \\
        
       Recursively build the tree using $(\bX,Y),n_{\text{fit}},n_{\text{leaf}},K_{\text{max}},I,Y_0,p,idx,B$, and function \texttt{Model\_Selection}. Fit the selected model, update the residuals $Y$, and the index set $I$ for cases in each child node.\\ 
       \Return{T}{}}
\textbf{end}\\
\end{algorithm}
\ULforem

\begin{figure}[!ht]
\center{\includegraphics[scale=0.90]
   {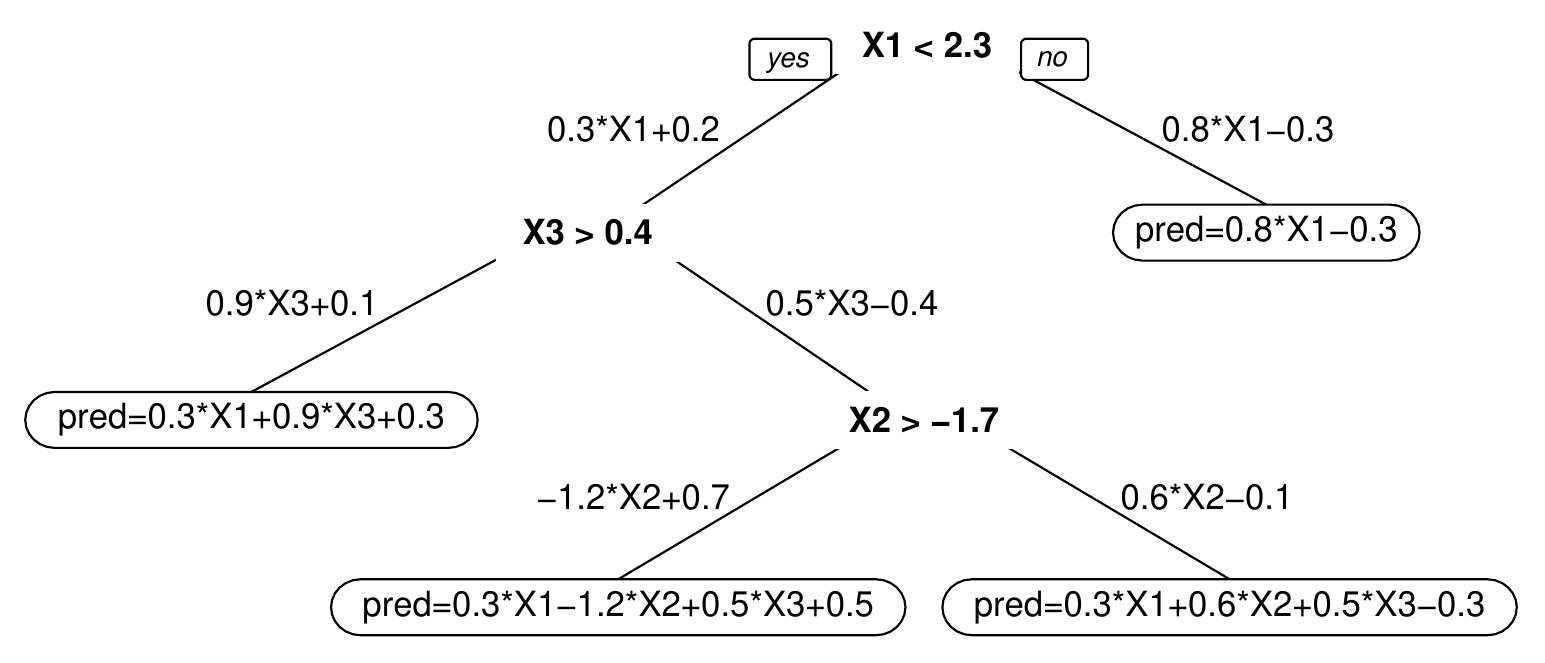}}
\caption{An example of a PILOT tree.} 
\label{treeplot}
\end{figure}

Note that PILOT retains good interpretability. The tree structure visualizes the decision process, while the simple linear models in each step reveal which predictor was used and how the prediction changes with respect to a small change in this predictor. Moreover, one can define a measure of feature importance similar to that of CART, based on the impurity gain of each node. This is because PILOT selects only one predictor in each node, which makes the gain fully dependent on it. This differs from methods such as M5, GUIDE, and LLRT that use multiple predictors in each node, making it harder to fairly distribute the gain over several predictors in order to derive an overall measure of each predictor's importance.

\subsection{Models used in the nodes}\label{sec:models}

As stated in the high-level summary in Algorithm \ref{alg:overview}, at each node PILOT selects a fit from a number of linear and piecewise linear models. In particular, PILOT considers the following regression models on each predictor:

\begin{itemize}
    \item \textsc{con}: A \textbf{CON}stant fit. We stop the recursion in a node after a \textsc{con} model is fitted.
    \item \textsc{lin}: Simple \textbf{LIN}ear regression. 
    \item \textsc{pcon}: A \textbf{P}iecewise \textbf{CON}stant fit, as in CART.
    \item \textsc{blin}: A \textbf{B}roken \textbf{LIN}ear fit: a continuous function consisting of two linear pieces.
    \item \textsc{plin}: A two-\textbf{P}iece \textbf{LIN}ear fit that need not be continuous.
\end{itemize}
To guard against unstable fitting, the \textsc{lin} and \textsc{blin} models are only considered when the number of unique values in the predictor is at least 5. Similarly, the \textsc{plin} model is only considered if both potential child nodes have at least 5 unique values of the predictor. 

These models extend the piecewise constant \textsc{pcon} fits of CART to linear fits. In particular, \textsc{plin} is a direct extension of \textsc{pcon} while \textsc{blin} can be regarded as a regularized and smoothed version of \textsc{plin}. For variables that are categorical or only have a few unique values, only \textsc{pcon} and \textsc{con} are used. Finally, if \textsc{con} is selected in a node, the recursion in that node stops.

PILOT reduces to CART (without its pruning) if only \textsc{pcon} models are selected, since the least squares constant fit in a child node equals its average response.

Note that a node will not be split when \textsc{lin} is selected. Therefore, when a \textsc{lin} fit is carried out in a node, PILOT does not increase its reported depth. This affects the reported depth of the final tree, which is a tuning parameter of the method. 

It is possible that a consecutive series of \textsc{lin} fits is made in the same node, and this did happen in our empirical studies.
In a node where this occurs, PILOT is in fact executing $L^2$ boosting  \citep{l2boosting}, which fits multiple regression using repeated simple linear regressions. It has been shown that $L^2$ boosting  is consistent for high dimensional linear regression and produces results comparable to the Lasso. It was also shown in \cite{NPBLR} that its convergence rate can be relatively fast under certain assumptions. PILOT does not increment the depth value of the node for \textsc{lin} models to avoid interrupting this boosting procedure.  

\subsection{Model selection rule}\label{sec:modelselect}

In each node, PILOT employs a selection procedure to choose between the five model types \textsc{con}, \textsc{lin}, \textsc{pcon}, \textsc{blin} and \textsc{plin}. It would be inappropriate to select the model based on the largest reduction in the residual sum of squares ($\mathrm{RSS}$), since this would always choose \textsc{plin} as that model generalizes all the others. Therefore, we need some kind of regularization to select a simpler model when the extra $\mathrm{RSS}$ gain of going all the way to \textsc{plin} is not substantial enough.

After many experiments, the following regularization scheme was adopted. 
In each node, PILOT chooses the combination of a predictor and a regression model that leads to the lowest BIC value
\begin{equation} \label{eq:BIC}
  n\log\Big(\frac{\mathrm{RSS}}{n}\Big)+v\log(n)
\end{equation}
which is a function of the residual sum of squares $\mathrm{RSS}$, the number of cases $n$ in the node, and the degrees of freedom $v$ of the model. With this selection rule PILOT mitigates overfitting locally, and applying it throughout the tree leads to a global regularization effect. 

It remains to be addressed what the value of $v$ in \eqref{eq:BIC} should be for each of the five models. The degrees of freedom are determined by aggregating the number of model parameters excluding the splits, and the degrees of freedom of a split. The model types \textsc{con}, \textsc{lin}, \textsc{pcon}, \textsc{blin} and \textsc{plin} contain 1, 2, 2, 3, and 4 coefficients apart from any split points. Based on empirical and theoretical arguments, \cite{discont} found that a discontinuity point deserves 3 degrees of freedom. We follow this recommendation for \textsc{pcon} and \textsc{plin}, which each receive 3 additional degrees of freedom. Also \textsc{blin} contains a split point, but here the fitted function is continuous. To reflect this intermediate complexity we add 2 degrees of freedom to \textsc{blin}. In conclusion, we end up with $v$ = 1, 2, 5, 5, and 7 for the model types \textsc{con}, \textsc{lin}, \textsc{pcon}, \textsc{blin} and \textsc{plin}.

The BIC in~\eqref{eq:BIC} is one of several model selection criteria that we could have used used.
We also tried other measures, such as the Akaike information criterion (AIC) and the adjusted AIC. It turned out that the AIC tended to choose \textsc{plin} too often, which reduced the regularization effect. The adjusted AIC required a pruning procedure to perform comparably to the BIC criterion. As pruning comes at a substantial additional computational cost, we decided against the adjusted AIC.

Alternatively, one could follow \cite{l2boosting} to compute the degrees of freedom for the aggregated model in each node, and then compute the adjusted AIC to decide when to stop. But in this approach hat matrices have to be maintained for each path, which requires more computation time and memory space. Moreover, as the number of cases in the nodes changes, the evaluations of the hat matrices become more complicated. For these reasons, we preferred to stay with the selection rule~\eqref{eq:BIC} in each node.

Algorithm~\ref{alg:modelsel} shows the pseudocode of our model selection procedure in a given node. We iterate through each predictor separately. If the predictor is numerical we find the univariate model with lowest BIC among the 5 candidate models. For \textsc{con} and \textsc{lin} this evaluation is immediate. For the other three models, we need to consider all possible split points. We consider these in increasing order (recall that the ordering of each predictor was already obtained at the start of the algorithm), which allows the Gram and moment matrices for the linear models to be maintained and updated efficiently in each step. On the other hand, if the predictor is categorical, we follow the approach of CART as in Section 9.2.4 of \cite{esl}. That is, we first compute the mean $m_c$ of the response of each level $c$ and then sort the cases according to $m_c$. We then fit the model \textsc{pcon} in the usual way.

Once the best model type has been determined for each predictor separately, Algorithm~\ref{alg:modelsel} returns the combination of predictor and model with the lowest BIC criterion.

\RestyleAlgo{ruled}
\SetKwInOut{Global}{Globals}
\normalem
\begin{algorithm}[hbt!]
\caption{Model selection and split finding in a node of the training set}\label{alg:modelsel}
\Global {$(\bX,Y)$: predictors and residual vector in the current node; $p$: number of predictors; $idx$: indices matrix of the $p$ presorted predictors; $n_{\text{leaf}}$: minimal number of cases in the leaf.}
\KwIn{$I$, indices of cases in the current node.}
\KwOut{$best\_pivot$; $best\_model$; $best\_pred$; $lm\_l$, $lm\_r$, linear model for the left and right child; $range$, range of the selected predictors in the current node.}
\SetKwFunction{FMain}{Model\_Selection}
\SetKwProg{Fn}{function}{:}{}
\SetKwProg{Init}{init:}{}{}
\SetKwProg{Return}{return:}{}{}
    \Fn{\FMain{$I$}}{
        \Init{best\_pivot, best\_model, best\_bic, best\_pred, lm\_l, lm\_r, range}{}

        \For{each predictor $X^{(j)}$ with $j=1,\ldots,p$}
        {Select the cases sorted according to $X^{(j)}$ using $I$ and $idx$.

        Fit \textsc{con} and initialize the parameters of the best model using its result.\\
        \uIf{$X^{(j)}$ is a numerical predictor}
            {Compute $(D^{\textsc{lin}})^\top D^{\textsc{lin}}$ and $(D^{\textsc{lin}})^\top Y$, where $D^{\textsc{lin}}$ is the     design matrix of the linear model. It is $|I| \times 1$ for \textsc{pcon}, and  $|I| \times 2$ otherwise. \\
            
            Fit \textsc{lin} using the above matrices. Store the parameters of the best model. 
            
            For \textsc{pcon} and \textsc{plin} initialize the Gram matrices $G^{\textsc{plin}}_{l,r}$ and moment matrices $M^{\textsc{plin}}_{l,r}$ of the simple linear models in the left and right child such that $G^{\textsc{plin}}_{l}=M^{\textsc{plin}}_l=\boldsymbol 0$, $G^{\textsc{plin}}_{r}=(D^{\textsc{lin}})^\top D^{\textsc{lin}}$, $M^{\textsc{plin}}_r=(D^{\textsc{lin}})^\top Y$.
             
            For \textsc{blin} initialize the Gram matrix $G^{\textsc{blin}} = (D^{\textsc{blin}})^\top D^{\textsc{blin}}$ and the moment matrix $M^{\textsc{blin}}=(D^{\textsc{blin}})^\top Y$ where $D^{\textsc{blin}}$ is the design matrix of \textsc{blin} with knot equal to $\min X^{(j)}$. 
           
            \For{each possible splitting pivot}{
            {Update $G^{\textsc{blin}},M 
  ^{\textsc{blin}}$, $G^{\textsc{plin}}_{l,r},M^{\textsc{plin}}_{l,r}$. 
               
           {If $n_{\text{leaf}}$ is satisfied, fit \textsc{plin}, \textsc{pcon} and \textsc{blin} using the Gram and moment matrices, yielding the left and right models $lm\_l$ and $lm\_r$.}\\
           Store the parameters of the best model.}
           }
           }
        \uElse 
            {Compute the mean $m_c$ of the response for each categorical level $c$.
            
            Re-sort the cases according to their $m_c$.
            
            Greedily search the best $value$ such that \textsc{pcon} with partition \{$c|m_c\leqslant value$\} and \{$c|m_c>value$\} has the lowest BIC.
            
            Store the parameters of the best model. Pivot is the set \{$c|m_c \leqslant  value$\}.} 
           }     
    \Return{best\_pivot, best\_model, best\_bic, best\_pred, lm\_l, lm\_r, range}{} }
\textbf{end}\\
\end{algorithm}
\ULforem

\subsection{Truncation of predictions} \label{sec:trunc}
PILOT relies on two truncation procedures to avoid unstable predictions on the training data and the test data. 

\begin{figure}[!ht]
\centering
\includegraphics[width=0.49\textwidth]
   {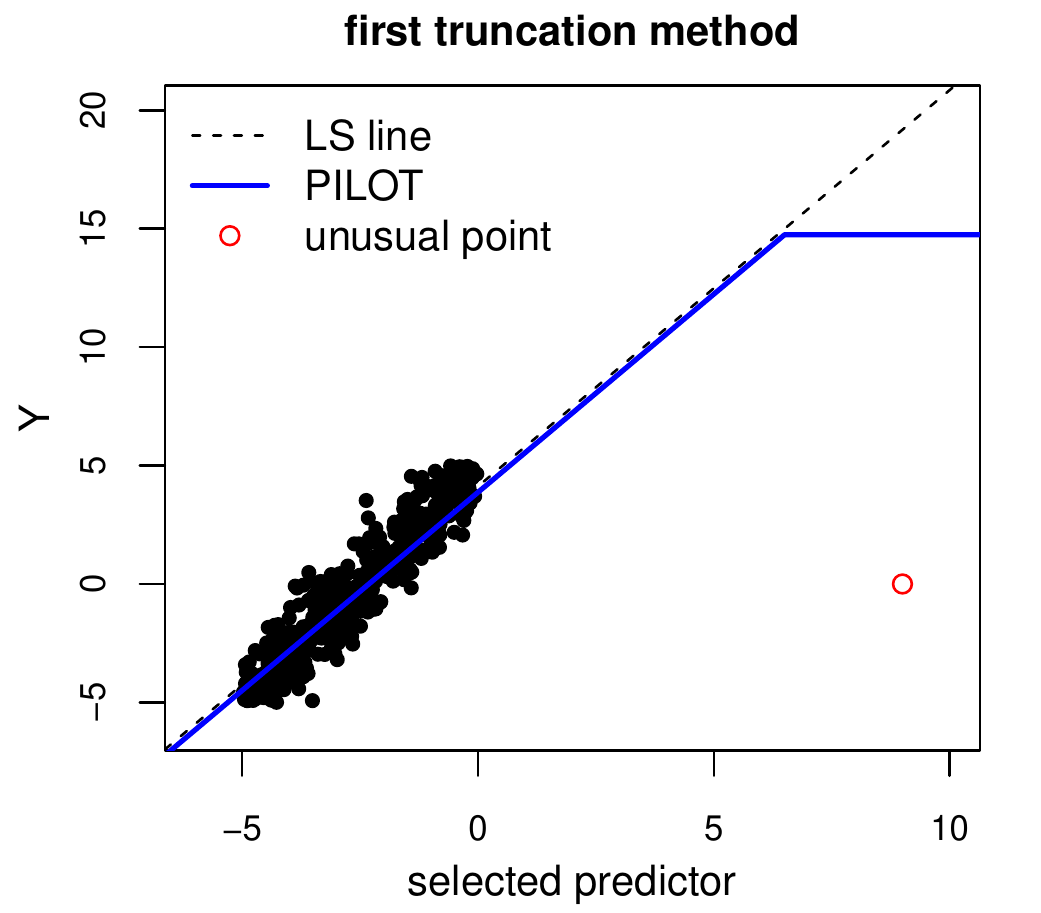}
\includegraphics[width=0.49\textwidth]
   {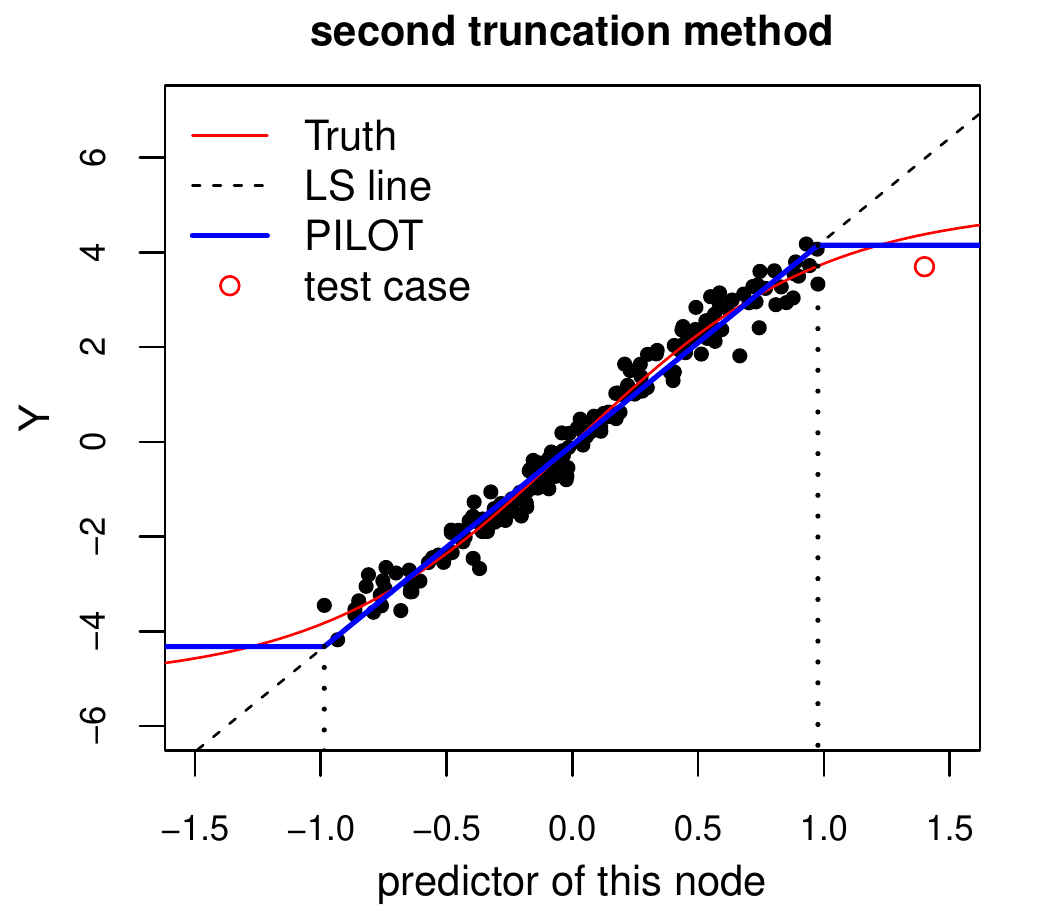}
\caption{Left: An example of the first truncation method in one node of the tree. Right: An example of the second truncation method.}
\label{truncations}
\end{figure}

The first truncation procedure is motivated as follows. The left panel of Figure \ref{truncations} shows the data in a node of PILOT. The selected predictor is on the horizontal axis. The plot illustrates a situation where most of the data points in a node are concentrated in a part of the predictor range, causing the linear model to put a lot of weight on these points and little weight on the other(s). This can result in extreme predictions, for instance for the red point on the right. Although such unstable predictions might be corrected by deeper nodes, this could induce unnecessarily complicated models with extra splits. Moreover, if this effect occurs at a leaf node, the prediction cannot be corrected. To avoid this issue we will truncate the prediction function. Note that if we add a constant to the response, PILOT will yield the same estimated parameters except for the intercept terms that change the way they should. Therefore, we can assume without loss of generality that $Y_{max}=-Y_{min}$\,. Calling this quantity $B$, the original responses have the range $[-B,B]$. We then clip the prediction so it belongs to $[-3B,3B]$. Then we compute the response $Y$ minus the truncated prediction to get the new residuals, and proceed with building the tree for the next depth. The whole process is summarized in Algorithm \ref{alg:buildTree}. Truncating at $\pm 3B$ may seem rather loose, and one could truncate at a smaller multiple of $B$ if so desired. But we found empirically that truncating at $\pm 3B$ works fine for our purpose.

\normalem
\begin{algorithm}[hbt!]
\caption{Truncation during tree building on the training data}\label{alg:buildTree}
\Global{$p$; $idx$; $n_{\text{fit}}$: minimal number of cases required to fit a model; $n_{\text{leaf}}$: minimal number of cases in a leaf; $K_{\text{max}}$: maximal depth; $Y_0$: original response; $B$: maximum of centered response.}
\KwIn{$(\bX,Y)$: Training set; $K$: current depth; $I$: indices of cases. }
\KwOut{A tree object $T$ with attributes 
$m$: the model; $l,r$: the left and the right child node; 
$lm_l,lm_r$: model coefficients in the left and right child nodes.}
\SetKwFunction{FMain}{Build\_Tree}
\SetKwProg{Fn}{function}{:}{}
    \Fn{\FMain{$(\bX,Y),K,I$}}{
        $T\leftarrow$ \text{a None node}\\
         \uIf{$K< K_{max}\text{ and length($I$)}\geqslant n_{\text{fit}}$}
        {
        Do \texttt{Model\_Selection}($I$) and record the parameters for the best model\\
        $K\leftarrow K+1$ if the $best\_model$ is not \textsc{lin} or \textsc{con}\\
        $T\leftarrow$ A tree node with the recorded parameters\\
        \uIf{best\_model is \textsc{con}}{\Return{$T$}{}}
        \uElseIf{best\_model is \textsc{lin}}
        {
        $raw\_pred[I]\;\leftarrow\;$ the prediction of $lm_l$ on $\bX[I]$\\
        Truncate the actual prediction $Y_0[I]-Y[I]+raw\_pred[I]$ at $\pm3B$. Update the residuals $Y[I]$ with this new prediction.\\
        $T.l\leftarrow \texttt{Build\_Tree}((\bX,Y),K,I)$}
        \Else{
        $I_l$, $I_r\leftarrow$ the indices of cases in the left and right child\\
        $raw\_pred[I_l]\;\leftarrow\;$ the prediction of $lm_l$ on $\bX[I_l]$\\
        $raw\_pred[I_r]\;\leftarrow\;$ the prediction of $lm_r$ on $\bX[I_r]$\\    
        Truncate the actual predictions $Y_0[I_l] - Y[I_l] + raw\_pred[I_l]$ and 
        $Y_0[I_r] - Y[I_r] + raw\_pred[I_r]$ at $\pm 3B$, and update the residuals $Y[I_l]$ and $Y[I_r]$ in the child nodes. \\
        $T.l\leftarrow \texttt{Build\_Tree}((\bX,Y),K,I_l)$\\
        $T.r\leftarrow \texttt{Build\_Tree}((\bX,Y),K,I_r)$
        }
        }
        \Else{\Return{$T$}{}}
        \textbf{return} $T$}
\end{algorithm}
\ULforem  

The first truncation method is not only applied during training, but also to the prediction on new data. The range of the response of the training data is stored, and when a prediction for new data would be outside the proposed range $[-3B,3B]$ it is truncated in the same way. This procedure works as a basic safeguard for the prediction. However, this is not enough because a new issue can arise: it may happen that values of a predictor in the test data fall outside the range of the same predictor in the training data. This is illustrated in the right panel of Figure \ref{truncations}. The vertical dashed lines indicate the range of the predictor on the training data. However, the predictor value of the test case (the red point) lies quite far from that range. Predicting its response by the straight line that was fitted during training would be an extrapolation, which could be unrealistic. This is a known problem of linear model trees. For instance, in our experiments in Section \ref{sec:empirical} the predictions of FRIED and M5 exploded on some data sets, which turned out to be due to this problem.

Therefore we add a second truncation procedure, which is only applied to new data (because in the training data, all predictor values are inside their range). Therefore, we propose to truncate the prediction value as done by  \cite{extrapolation}. More precisely, during training we record the range $[x_{\mbox{\tiny min}}, x_{\mbox{\tiny max}}]$ of the predictor selected in this node, and store the range of the corresponding predictions $[\hat f(x_{\mbox{\tiny min}}),\hat f(x_{\mbox{\tiny max}})]$. When predicting on new data, we truncate the predictions so they stay in the training range. More precisely, we replace the original predictions $\hat f(x_{\mbox{\tiny test}})$ on this node by $\max(\min(\hat f(x_{\mbox{\tiny test}}),\hat f(x_{\mbox{\tiny max}})),\hat f(x_{\mbox{\tiny min}}))$. 

The two truncation methods complement each other. For instance, the first approach would not suffice in the right panel of Figure \ref{truncations} since the linear prediction at the new case would still lie in $[-3B,3B]$. Also, the second approach would not work in the left panel of Figure \ref{truncations} as the unusual case is included in the range of the training data. Our empirical studies indicate that combining both truncation approaches helps eliminate extreme predictions and therefore improve the stability of our model. Moreover, this did not detract from the overall predictive power. Algorithm \ref{alg:predict} describes how both truncation methods are applied when PILOT makes predictions on new data.

\normalem
\begin{algorithm}[hbt!]
\caption{Truncation during prediction on new data}\label{alg:predict}
\KwIn{$T$: a decision tree object; $x$: a new observation; $B$: maximum value of the centered response in the training data. }
\KwOut{$pred$, the prediction in $x$.}
\SetKwFunction{FMain}{Predict}
\SetKwProg{Fn}{function}{:}{}
    \Fn{\FMain{$T,X$}}{
        $pred\leftarrow0$
        
        \While{T is not None}
        {{\uIf {$x$ is in the left child of $T$}
          {$X_{min}, X_{max}\leftarrow$ The lower and upper bound of $T.range$ on the selected predictor $j$ in the training data.\\
          $X^*\leftarrow\min(\max(X^{(j)},X_{min}),X_{max})$
          
          $pred\_add$ $\leftarrow$ The prediction of $T.lm\_l$ on $X^*$

          $pred$ $\leftarrow$ $\max(\min(pred + pred\_add,3B)-3B)$
          
          $T\leftarrow T.l$} 
         \Else{
         $X_{min}, X_{max}\leftarrow$ The lower and upper bound of $T.range$ on the selected predictor $j$ in the training data.\\
         $X^*\leftarrow\max(\min(X^{(j)},X_{min}),X_{max})$

         $pred\_add$ $\leftarrow$ The prediction of $T.lm\_r$ on $X^*$
          
         $pred$ $\leftarrow$ $\max(\min(pred + pred\_add,3B),-3B)$
          
         $T\leftarrow T.r$}}}
        \Return{$pred$}{}}
\textbf{end}\\
\end{algorithm}
\ULforem

\subsection{Stopping rules versus pruning}

It is well known that decision trees have a tendency to overfit the data if the tree is allowed to become very deep. A first step toward avoiding this is to require a minimal number of cases in a node before it can be split, and a minimal number of cases in each child node. In addition, PILOT also stops splitting a node if the BIC model selection criterion selects \textsc{con}. 

Several decision trees, including CART, take a different approach. They let the tree grow, and afterward prune it. This indeed helps to achieve a lower generalization error. However, pruning can be very time consuming. One often needs a cross-validation procedure to select pruning parameters. It would be possible for PILOT to also incorporate cost-complexity pruning as in CART. (If a series of \textsc{lin} fits occurs in the same node, PILOT would either keep all of them or remove them together.) However, in our empirical study we found that adding this pruning to PILOT did not outperform the existing \textsc{con} stopping rule on a variety of datasets. Also, we will see in the next section that the generalization error vanishes asymptotically. Moreover, not pruning is much more efficient computationally. For these reasons, PILOT does not employ pruning. The computational gain of this choice makes it more feasible to use PILOT in ensemble methods.

\subsection{Time and space complexity}

Finally, we demonstrate that the time complexity and space complexity of PILOT are the same as for CART without its pruning. The main point of the proof is that for a single predictor, each of the five models can be evaluated in a single pass through the predictor values. Evaluating the next split point requires only a rank-one update of the Gram and moment matrices, which can be done in constant time. Naturally, the implicit proportionality factors in front of the $\mathcal{O}(.)$ complexities are higher for PILOT than for CART, but the algorithm is quite fast in practice.

\begin{proposition}\label{complexity}
PILOT has the same time and space complexities as CART without its pruning.
\end{proposition}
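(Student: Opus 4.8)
The plan is to match PILOT's cost to that of CART without pruning by bounding the work done at a single node and then summing over the tree. Recall that CART without pruning sorts each predictor once in $\mathcal{O}(np\log n)$, spends $\mathcal{O}(mp)$ at a node of $m$ cases to scan all split points of all predictors, and therefore costs $\mathcal{O}(np)$ per depth level and $\mathcal{O}(np\log n + npK_{\text{max}})$ overall, using $\mathcal{O}(np)$ space. I would show PILOT meets each of these bounds. The initial presorting into the index matrix $idx$ is identical to CART's, giving the shared $\mathcal{O}(np\log n)$ term, so the crux is to prove that a call to \texttt{Model\_Selection} on a node with $m=\lvert I\rvert$ cases costs $\mathcal{O}(mp)$, i.e. $\mathcal{O}(m)$ per predictor.

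For a fixed numerical predictor I would verify the single-pass claim model by model. \textsc{con} and \textsc{lin} require one pass to accumulate the (at most) $2\times2$ matrices $(D^{\textsc{lin}})^\top D^{\textsc{lin}}$ and $(D^{\textsc{lin}})^\top Y$, after which the fit and its $\mathrm{RSS}$ follow in $\mathcal{O}(1)$. For \textsc{pcon} and \textsc{plin} I would scan the candidate knots in increasing sorted order; advancing to the next knot moves exactly one case from the right accumulators $G^{\textsc{plin}}_r,M^{\textsc{plin}}_r$ to the left ones $G^{\textsc{plin}}_l,M^{\textsc{plin}}_l$, which is a rank-one update of fixed-size matrices and hence $\mathcal{O}(1)$, as is the subsequent refit and $\mathrm{RSS}$ evaluation; with at most $m$ knots this is $\mathcal{O}(m)$. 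The delicate case is \textsc{blin}, where the knot $s$ enters the basis through $(x-s)_+$, so moving the knot changes many rows of $D^{\textsc{blin}}$ at once rather than transferring a single row. I would resolve this by writing each entry of $G^{\textsc{blin}}$ and $M^{\textsc{blin}}$ as a low-degree polynomial in $s$ whose coefficients are the running power sums $\sum x_i$, $\sum x_i^2$, $\sum y_i$, $\sum x_iy_i$ and the count, taken over the cases to the right of the knot; each is updated in $\mathcal{O}(1)$ when a single case crosses the knot, so every Gram and moment entry, and therefore the fit and its $\mathrm{RSS}$, is recomputed in $\mathcal{O}(1)$ per knot. I would also note that obtaining the child's cases in sorted order is done by partitioning the parent's sorted lists in $\mathcal{O}(m)$, exactly as in CART, rather than by rescanning the global order of length $n$.

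Summing these per-node costs, the nodes sharing a reported depth partition (a subset of) the data, so $\sum m\le n$ and each level costs $\mathcal{O}(np)$; over the at most $K_{\text{max}}$ levels this is $\mathcal{O}(npK_{\text{max}})$, matching CART once the sort is added. For space, the data and $idx$ take $\mathcal{O}(np)$, the accumulators are $\mathcal{O}(1)$, and the tree stores $\mathcal{O}(1)$ per node with a bounded number of nodes (each leaf holding at least $n_{\text{leaf}}$ cases), for $\mathcal{O}(np)$ overall. The step I expect to be the main obstacle is controlling the consecutive \textsc{lin} fits: because a \textsc{lin} selection neither splits the node nor increments its reported depth, a node may undergo a chain of $L^2$-boosting steps, and the level-partition argument only yields $\mathcal{O}(np)$ per level if the number of such fits per node is bounded. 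I would handle this by arguing the chain terminates quickly, using that each \textsc{lin} step strictly decreases $\mathrm{RSS}$ while the BIC comparison in~\eqref{eq:BIC} selects \textsc{con} over \textsc{lin} as soon as the per-step reduction factor drops below roughly $1+\log(n)/n$; bounding the number of steps before this threshold is reached, or absorbing them into an effective depth, is what keeps the total cost at the CART level.
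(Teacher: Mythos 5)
Your proposal follows essentially the same route as the paper's proof: presort once in $\mathcal{O}(np\log n)$, show that all five candidate models on one predictor can be evaluated in a single pass using $\mathcal{O}(1)$-size Gram and moment accumulators updated in $\mathcal{O}(1)$ per candidate knot, conclude $\mathcal{O}(np)$ per level, and sum over $K_{\text{max}}$ levels. Two remarks are worth making. First, your treatment of \textsc{blin} is more careful than the paper's, which only says the reasoning is ``analogous'' to \textsc{plin}; you correctly observe that moving the knot $s$ changes the column $(x-s)_+$ in every row, and your device of expressing each Gram and moment entry as a low-degree polynomial in $s$ with running power sums as coefficients is exactly what is needed to keep the update at $\mathcal{O}(1)$ per knot. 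Second, the obstacle you flag at the end --- that a chain of consecutive \textsc{lin} selections neither splits the node nor increments the reported depth, so the ``$\mathcal{O}(np)$ per level times $K_{\text{max}}$ levels'' accounting does not obviously bound the number of \texttt{Model\_Selection} calls --- is a real issue, and the paper's own proof does not address it at all (it simply charges $\mathcal{O}(n)$ work to each depth $1\leqslant k\leqslant K_{\text{max}}$). Your sketched resolution, however, is not yet airtight: the BIC comparison in~\eqref{eq:BIC} only forces each accepted \textsc{lin} step to shrink the node's RSS by a multiplicative factor $t^{-1/t}=\exp(-\log t/t)$ relative to \textsc{con}, which in the worst case permits on the order of $t/\log t$ consecutive \textsc{lin} fits before the criterion switches to \textsc{con}; turning this into an $\mathcal{O}(1)$ or even $\mathcal{O}(\log)$ bound per node would require an eigenvalue-type condition on the local design (as in the paper's Condition 2) or an explicit cap on the boosting chain. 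So on this point your proposal is more honest than the paper but still leaves the gap open; everything else matches the paper's argument and is correct.
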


\begin{proof}
 For both PILOT and CART we assume that the $p$ predictors have been presorted, which only takes $\mathcal{O}(np\log(n))$ time once. 
 
 We first check the complexity of the model selection in Algorithm \ref{alg:modelsel}. It is known that the time complexity of CART for split finding is $\mathcal{O}(np)$. For the \textsc{pcon} model, PILOT uses the same algorithm as CART. For \textsc{con} we only need to compute one average in $\mathcal{O}(n)$ time. For \textsc{lin}, the evaluation of $(D^{\textsc{lin}})^\top D^{\textsc{lin}}$ and $(D^{\textsc{lin}})^\top Y$ also takes $\mathcal{O}(n)$. In the evaluation of the \textsc{plin} model, the Gram and moment matrices always satisfy $G^{\textsc{plin}}_l+G^{\textsc{plin}}_r=G^{\textsc{lin}}=(D^{\textsc{lin}})^\top D^{\textsc{lin}}$ and $M^{\textsc{plin}}_l+M^{\textsc{plin}}_r=(D^{\textsc{lin}})^\top Y$ by definition. These matrices have at most 4 entries, and can be incrementally updated in $\mathcal{O}(1)$ time as we evaluate the next split point. For \textsc{blin} the reasoning is analogous. 
In each model, inverting the Gram matrix only takes $\mathcal{O}(1)$ because its size is fixed. Therefore, we can evaluate all options on one presorted predictor in one pass through the data, which remains $\mathcal{O}(n)$, so for all predictors this takes $\mathcal{O}(np)$ time, the same as for CART. For the space complexity, the Gram and moment matrices only require $\mathcal{O}(1)$ of storage. Since CART also has to store the average response in each child node, which takes $\mathcal{O}(1)$ storage as well, the space complexity of both methods is the same.

For the tree building in Algorithm \ref{alg:buildTree}, the time complexity of computing the indices $I_{l,r}$ and the residuals is $\mathcal{O}(n)$ at each depth $1 \leqslant  k \leqslant  K_{max}$. CART also computes the indices $I_{l,r}$ which requires the same complexity $\mathcal{O}(n)$ for each step. Therefore, the overall time for both methods is $\mathcal{O}(K_{max}n)$. Since the PILOT tree only has $\mathcal{O}(1)$ more attributes in each node than CART, the space complexity of PILOT remains the same as that of CART.

For the initialization and prediction parts, the results are straightforward.
\end{proof}

%%%%%%%%%%%%%%%%%%%%%%%%%%%%%%%%%%%%%%%%%%%%%%%
\section{Theoretical results}\label{sec:theory}
\subsection{Universal consistency}
In this section we prove the consistency of PILOT. We follow \cite{UCDT} and assume the underlying function $f\in\mathcal{F}\subset L^2([0,1]^p)$ admits an additive form
\begin{equation} \label{eq:additive}
  f(X):=f_1(X^{(1)})+\dots+f_p(X^{(p)})
\end{equation}
where $f_i$ has bounded variation and $X^{(j)}$ is the $j$-th predictor. We define the total variation norm $||f||_{TV}$ of $f\in\mathcal{F}$ as the infimum of $\sum_{i=1}^p||f_i||_{TV}$ over all possible representations of $f$, and assume that the representation in \eqref{eq:additive} attains this infimum.

For all $f, g \in L^2([0,1]^p)$ and a dataset $X_1, \ldots, X_n$ of size $n$, we define the empirical norm and the empirical inner product as 
\begin{equation}
  ||f||_n^2\,:=\frac{1}{n}\sum_{i=1}^{n}|f( X_i)|^2 \quad \text{and} \quad \langle f,g \rangle_n:=\frac{1}{n}\sum_{i=1}^{n}f( X_i)g( X_i).
\end{equation}
For the response vector $Y = (Y_1,\ldots, Y_n)$ we denote
\begin{equation}
  ||Y-f||_n^2\,:=\frac{1}{n}\sum_{i=1}^{n}(Y_i - f( X_i))^2 \quad \text{and} \quad \langle Y,f \rangle_n:=\frac{1}{n}\sum_{i=1}^{n}Y_if( X_i).
\end{equation}
The general $L^2$ norm on vectors and functions will be denoted as $||\cdot||$ without subscript.

To indicate the norm and the inner product of a function on a node $T$ with t observations we replace the subscript $n$ by $t$. We denote by $\mathcal{T}_k$ the set of tree nodes at depth $k$, plus the leaf nodes of depth lower than $k$. In particular, $\mathcal{T}_K$ contains all the leaf nodes of a $K$-depth tree.

Now we can state the main theorem:
\begin{theorem}\label{theorem:consistency}
Let $f\in \mathcal{F}$ with\ $||f||_{TV}\, \leqslant A$ and denote by $\hat f(\mathcal{T}_K)$ the prediction of a \mbox{$K$-depth} PILOT tree. Suppose $X\sim P$ on $[0,1]^{p_n}$, the responses  are a.s.\ bounded in $[-B,B]$, and the depth $K_n$ and the number of predictors $p_n$ satisfy $K_n\rightarrow\infty$ and $2^{K_n}p_n\log(np_n)/n\rightarrow0$. Then PILOT is consistent, that is
\begin{equation}
\lim_{n\rightarrow\infty} \maE[||f-\hat f(\mathcal T_{K_n})||^2]=0.
\end{equation}
\end{theorem}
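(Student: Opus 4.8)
The plan is to decompose the population error into a generalization (estimation) term and an in-sample approximation term, and to control each separately. Writing $g=\hat f(\mathcal{T}_{K_n})$ for the fitted tree, I would start from
\begin{equation*}
\maE\big[||f-g||^2\big] \;\leqslant\; \maE\Big[\,\big|\,||f-g||^2-||f-g||_n^2\,\big|\,\Big] \;+\; \maE\big[||f-g||_n^2\big].
\end{equation*}
The first term measures how far the empirical norm of the fitted tree is from its population counterpart, and the second is the empirical error of the greedy fit. A crucial enabling fact is that the $\pm 3B$ truncation of Algorithm~\ref{alg:buildTree} forces $|g|\leqslant 3B$ everywhere, so that both $f$ and $g$ are uniformly bounded; this keeps $g$ inside a controlled function class and makes the concentration arguments below applicable.

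For the estimation term I would bound the uniform deviation $\sup_{h\in\mathcal G_{K_n}}\big|\,||f-h||_n^2-||f-h||^2\,\big|$, where $\mathcal G_{K_n}$ is the class of all functions that a depth-$K_n$ PILOT tree can output. Such a tree has $\mathcal O(2^{K_n})$ nodes, and each node is specified by the choice of one of $p_n$ predictors, a split point among at most $n$ candidates, and a bounded-coefficient linear piece; hence the metric entropy of $\mathcal G_{K_n}$ at scale $1/n$ is of order $2^{K_n}p_n\log(np_n)$. A standard covering-number/chaining bound for uniformly bounded function classes then gives an expected supremum of order $\sqrt{2^{K_n}p_n\log(np_n)/n}$, which tends to $0$ precisely under the hypothesis $2^{K_n}p_n\log(np_n)/n\to0$. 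This is exactly where the stated scaling condition enters.

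For the empirical approximation term I would show $\maE[||f-g||_n^2]\to0$ through a per-level impurity-reduction argument. The key lemma is that on any node, the best of PILOT's candidate fits (which already includes the CART-style \textsc{pcon} split, as well as the richer \textsc{lin}, \textsc{blin} and \textsc{plin} models) decreases that node's residual sum of squares by an amount bounded below in terms of the remaining additive signal on the node and $A=||f||_{TV}$. Because an additive function of bounded variation lies in a scaled convex hull of the piecewise-constant and piecewise-linear ``stumps'' that PILOT can fit on single coordinates, this gain admits a Jones--Barron / $L^2$-boosting type lower bound; iterating the resulting recursion over the $K_n$ levels yields an in-sample bias of order $A^2/K_n$ (plus lower-order noise and penalty contributions), which vanishes as $K_n\to\infty$. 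Along the way one must verify that the BIC penalty $v\log(n)$ in~\eqref{eq:BIC} is negligible relative to the signal whenever a node is still large, that the \textsc{con} stopping rule fires only once the remaining signal is already small, and that the noisy training target $Y=f+\epsilon$ can be replaced by $f$ up to terms absorbed into the estimation step.

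The main obstacle is this greedy-gain step and its propagation through the tree. The difficulty is that the partition into nodes is itself random and data-dependent, the model is chosen by BIC rather than by raw RSS reduction, and PILOT fits residuals that still contain noise, so the progress made on the signal must be decoupled from the accumulation of noise. Establishing the per-node gain lower bound under additivity and bounded variation, and then summing it over a random collection of nodes uniformly in $n$, is the crux of the argument; the truncation and the entropy bound from the estimation step are precisely what allow the overfitting and noise contributions to be controlled while the bias is driven to zero.
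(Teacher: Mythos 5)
Your proposal follows essentially the same route as the paper: the paper also splits the population error into an empirical-process (estimation) term, controlled via covering-number bounds (Theorems 11.4 and 9.4 and Lemma 13.1 of Gy\"orfi et al., which is exactly where the condition $2^{K_n}p_n\log(np_n)/n\to0$ enters), and an in-sample term controlled by a per-node greedy-gain recursion in the spirit of Klusowski's analysis of CART, with the BIC comparison (Proposition~\ref{prop:bic}) guaranteeing the selected model's gain is comparable to that of \textsc{pcon} and the \textsc{con}-node errors $R_{C_K^+}$ shown to vanish separately. The key lemma you identify as the crux --- a lower bound on the node-wise impurity gain in terms of the remaining signal and $\|f\|_{TV}$, valid on the data-dependent partition --- is precisely the paper's Lemma~\ref{l3}, so your plan is a faithful outline of the actual argument.
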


\begin{remark}
Note that the conditions on the depth and the dimension can easily be satisfied if we let $K_n=\log_2(n)/r$ for some $r>1$ and $p_n=n^{s}$ such that $0<s<1-1/r$. The resulting convergence rate is $\mathcal{O}(1/\log(n))$. This is the same rate as obtained by \cite{UCDT} for CART under very similar assumptions.
\end{remark}

The key of the proof of Theorem~\ref{theorem:consistency} is to establish a recursive formula for the training error $R_k:=||Y-\hat f(\mathcal{T}_k)||^2_n-||Y-f||^2_n$ for each depth $k$. Then we can leverage results from empirical process theory \citep{distribution} to derive an oracle inequality for PILOT, and finally prove consistency. 

For the recursive formula we first note that $R_k=\sum_{T\in \mathcal{T}_K}w(T)R_k(T)$ where $w(T):=t/n$ is the weight of node $T$ and $R_k(T):=||Y-\hat f(\mathcal{T}_k)||^2_t-||Y-f||^2_t$. Then we immediately have $R_{k+1}=R_k-\sum_{T\in \mathcal{T}_k}w(T)\Delta^{k+1}(T)$ where
$$\Delta^{k+1}(T):=||Y-\hat{f}(\mathcal{T}_k)||^2_t\;-\;t_l||Y-\hat{f}(\mathcal{T}_{k+1})||^2_{t_l}/t\;-\;t_r||Y-\hat{f}(\mathcal{T}_{k+1})||^2_{t_r}/t$$
is the impurity gain of the model on $T$. Here $t_l$ and $t_r$ denote the number of cases in the left and right child node. If no split occurs, one of these numbers is set to zero. For PILOT we need to remove \textsc{con} nodes that do not produce an impurity gain from the recursive formula. To do this, we define  $C_k^+=\{T|T=\textsc{con}, T\in \mathcal T_{k-1}, R_k(T)>0\}$, i.e., the set of `bad' nodes on which \textsc{con} is fitted, and similarly $C_k^-$ for those with $R_k(T)  \leqslant  0$. Then we can define $R_{C_k^+}:=\sum_{T\in C_k^+}w(T)R_k(T)$, the positive training error corresponding to the cases in these \textsc{con} nodes. We can then consider $\widetilde{R}_k:=R_k-R_{C_k^+}$ and show that asymptotically both $R_k$ and $R_{C_k^+}$ become small. 

Now the problem is reduced to relating the impurity gain of the selected model to the training error before this step. Recently, \cite{UCDT} introduced a novel approach for this kind of estimation in CART. However, his proof used the fact that in each step the prediction is piecewise constant, which is not the case here. Therefore, we derived a generalized result for $\textsc{pcon}$ in PILOT (Lemma \ref{l3} in Appendix \ref{app:B.1}). For controlling the gain of the other models, we can use the following proposition:

\begin{proposition}\label{prop:bic}
Let $\Delta_1$, $\Delta_2$ and $v_1$, $v_2$ be the impurity gains and degrees of freedom of two regression models on some node $T$ with $t$ cases. Let $R_0$ be the initial residual sum of squares in $T$. We have that
\begin{itemize}
\item If model 1 does better than \textsc{con}, i.e.\ $BIC_{con}> BIC_1$, we have
$t\Delta_1/R_0> C(v_1,t)>0$ for some positive function $C$ depending on $v_1$ and $t$. 
\item If $BIC_{con} > BIC_1$ and $BIC_2 > BIC_1$ with $v_2 \geqslant v_1$ we have
$\frac{\Delta_1}{\Delta_2} \geqslant \frac{v_1-1}{v_2-1}.$
\end{itemize}
Moreover, if \textsc{con} is chosen at a node, it will also be chosen in subsequent models.
\end{proposition}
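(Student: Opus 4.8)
The plan is to turn each BIC comparison into an inequality between residual sums of squares and then re-express those through the impurity gains. Since in a node with $t$ cases the criterion \eqref{eq:BIC} is $BIC = t\log(\mathrm{RSS}) - t\log t + v\log t$, the comparison $BIC_{con}>BIC_1$ reduces to $t\log(\mathrm{RSS}_{con}/\mathrm{RSS}_1)>(v_1-1)\log t$, i.e. $\mathrm{RSS}_1/\mathrm{RSS}_{con}<t^{-(v_1-1)/t}$, and likewise $BIC_2>BIC_1$ gives $\mathrm{RSS}_2/\mathrm{RSS}_1>t^{(v_1-v_2)/t}$. I would write $\mathrm{RSS}_i=R_0-t\Delta_i$, so that $\Delta_i$ is the gain of model $i$ over the incoming residuals, and use throughout the fact that every candidate model contains the constant fit as a special case, whence $\mathrm{RSS}_{con}\leqslant R_0$ and each $\Delta_i\geqslant 0$.

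For the first bullet I would combine $\mathrm{RSS}_1<\mathrm{RSS}_{con}\,t^{-(v_1-1)/t}$ with $\mathrm{RSS}_{con}\leqslant R_0$ to get $\mathrm{RSS}_1<R_0\,t^{-(v_1-1)/t}$; dividing by $R_0$ and rearranging gives $t\Delta_1/R_0>1-t^{-(v_1-1)/t}=:C(v_1,t)$, which is strictly positive because $v_1>1$ for every non-\textsc{con} model and $t>1$.

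For the second bullet I would normalise by setting $x_i:=\mathrm{RSS}_i/R_0=1-t\Delta_i/R_0$, so the hypotheses read (a) $x_1<t^{-(v_1-1)/t}$ and (b) $x_2>x_1\,t^{(v_1-v_2)/t}$, while the conclusion $\Delta_1/\Delta_2\geqslant(v_1-1)/(v_2-1)$ is equivalent to $(1-x_1)(v_2-1)\geqslant(1-x_2)(v_1-1)$. I would lower-bound the difference of the two sides by first replacing $x_2$ by its bound (b) (legitimate since its coefficient $v_1-1$ is positive) and then $x_1$ by its bound (a) (legitimate since, after the first step, $x_1$ enters with coefficient $-[(v_2-1)-t^{(v_1-v_2)/t}(v_1-1)]$, whose bracket is nonnegative when $v_2\geqslant v_1$). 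Writing $a:=\log(t)/t$, this reduces everything to the scalar inequality $F(v_1,v_2):=(v_2-v_1)-e^{-(v_1-1)a}(v_2-1)+e^{-(v_2-1)a}(v_1-1)\geqslant 0$ for $v_2\geqslant v_1>1$. I would prove $F\geqslant 0$ by observing that $F(v,v)=0$ and $\partial F/\partial v_2=1-e^{-(v_1-1)a}-a(v_1-1)e^{-(v_2-1)a}$, which at $v_2=v_1$ equals $1-e^{-u}(1+u)\geqslant 0$ with $u=(v_1-1)a$ (the elementary bound $e^{u}\geqslant 1+u$) and is increasing in $v_2$; hence $F$ is nondecreasing in $v_2$ and stays nonnegative.

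The final claim follows from a softer argument using the equivariance noted earlier: each of the five model types keeps the same fit up to its intercept when a constant is added to the response, so its $\mathrm{RSS}$ is invariant under such a shift. Fitting \textsc{con} merely subtracts the node mean from the residuals, which is exactly such a shift, so the $\mathrm{RSS}$, and therefore the BIC, of every candidate is unchanged by the \textsc{con} step, and \textsc{con} remains the minimiser. I expect the main obstacle to be the second bullet: the two BIC inequalities must be applied in the correct directions and then combined, and the crux is the monotonicity argument for $F$, since a naive linearisation of the logarithms would destroy the exact boundary equality $F(v,v)=0$ on which the bound depends.
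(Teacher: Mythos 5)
Your proof is correct, and for the first two bullets it runs along essentially the same track as the paper's: both translate the BIC comparisons into $\mathrm{RSS}_1/\mathrm{RSS}_{con} < t^{-(v_1-1)/t}$ and $\mathrm{RSS}_2/\mathrm{RSS}_1 > t^{(v_1-v_2)/t}$, and both ultimately reduce to the same scalar inequality --- with $a=\log(t)/t$, your $F(v_1,v_2)\geqslant 0$ is, after clearing denominators, exactly the paper's final step that $\bigl(1-e^{-(v_1-1)a}\bigr)\big/\bigl(1-e^{-(v_2-1)a}\bigr)\geqslant (v_1-1)/(v_2-1)$ (the paper labels the selected model ``2'' in its proof, so the indices are swapped there). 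Two of your choices are genuinely different and in fact tidier. First, in the second bullet you substitute the bound on $x_2$ in terms of $x_1$ \emph{before} eliminating $x_1$; because the resulting coefficient of $x_1$ has a controllable sign using only $v_2\geqslant v_1$, you never need to know whether model 2 itself beats \textsc{con}. The paper instead divides by the gain of the non-selected model early and is forced into a case split on whether that model satisfies the \textsc{con}-beating bound, resolved by an extra monotonicity argument for $(1+ax)/(1+bx)$; your ordering makes that case disappear. Your proof of $F\geqslant 0$ via $F(v,v)=0$ and the sign and convexity of $\partial F/\partial v_2$ is a valid substitute for the paper's integral-ratio comparison of $\bigl(1-e^{-(v_2-1)a}\bigr)$ and $\bigl(1-e^{-(v_1-1)a}\bigr)$. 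Second, your argument for the persistence of \textsc{con} is cleaner than the paper's: since every candidate model absorbs an additive shift of the response into its intercept(s), centering the residuals leaves every candidate's RSS, hence every BIC value, literally unchanged, so the argmin is unchanged; the paper instead decomposes each gain into a mean part plus a centered part and re-derives the selection threshold for the next round. Both are valid, and yours needs only the equivariance remark already made in Section~\ref{sec:description}.
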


We thus know that the gain of the selected model is always comparable to that of \textsc{pcon}, up to a constant factor. 

The proposition also justifies the \textsc{con} stopping rule from a model selection point of view: when \textsc{con} is selected for a node, all the subsequent models for this node will still be \textsc{con}. A \textsc{con} model is selected when the gain of other regression models do not make up for the complexity they introduce. Since \textsc{con} only regresses out the mean, we can stop the algorithm within the node once the first \textsc{con} fit is encountered. 

We can show the following recursive formula:
\begin{theorem}\label{theorem:recursion1}
Under the same assumptions as Theorem \ref{theorem:consistency} we have for any $K\geqslant1$ and any $f\in \mathcal{F}$ that
\begin{equation}
||Y-\hat f(\mathcal{T}_K)||^2_n\, \leqslant ||Y-f||^2_n \, + \frac{(||f||_{TV}+6B)^2}{6(K+3)} + R_{C_K^+}\;.
\end{equation}
Moreover, if we let $K=\log_2(n)/r$ with $r>1$ we have $R_{C_K^+}\sim\mathcal{O}(\sqrt{\log (n)/n^{(r-1)/r}})$.
\end{theorem}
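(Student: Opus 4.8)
The plan is to convert the bound into a statement about the modified training error $\widetilde R_k := R_k - R_{C_k^+}$ and to derive a one-step recursion that forces it to decay at rate $1/k$. Writing $M := (\|f\|_{TV}+6B)^2$, the target inequality is exactly $\widetilde R_K \le M/(6(K+3))$, so it suffices to establish a recursion of the form $\widetilde R_{k+1} \le \widetilde R_k - (6/M)\,\widetilde R_k^2$ together with a base case $\widetilde R_0 \le M/18$. Given these two facts, taking reciprocals and using $1/(1-x)\ge 1+x$ yields $1/\widetilde R_{k+1} \ge 1/\widetilde R_k + 6/M$; telescoping from the base case gives $1/\widetilde R_K \ge 18/M + 6K/M = 6(K+3)/M$, which rearranges to the claim. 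The base case is immediate from the bounded responses, since $\widetilde R_0 \le R_0 \le \|Y\|_n^2 \le B^2 \le M/18$ (as $M \ge 36B^2$).

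The heart of the argument is the per-level gain bound behind the recursion. First I would fix a depth $k$ and a node $T\in\mathcal T_k$ on which a non-\textsc{con} model is selected. Proposition~\ref{prop:bic} guarantees that the gain of the selected model is within a fixed factor of the \textsc{pcon} gain, so it is enough to lower-bound the latter; Lemma~\ref{l3} does this, giving $\Delta^{k+1}(T) \gtrsim R_k(T)^2 / (\|f\|_{TV,T}+cB)^2$, where $\|f\|_{TV,T}$ is the total variation of $f$ restricted to $T$. This is the step that replaces Klusowski's piecewise-constant estimate for CART, which fails here because the node fits are linear rather than constant. Next I would sum over the level with weights $w(T)=t/n$: discarding the \textsc{con} leaves (which contribute no gain) and the nodes with $R_k(T)\le 0$ (which only help), I would apply the Cauchy--Schwarz/Jensen inequality $\sum_T w(T)\,R_k(T)^2/M(T) \ge \big(\sum_T w(T)R_k(T)\big)^2/\sum_T w(T)M(T)$, and use additivity of the total variation across the partition, $\sum_T w(T)\|f\|_{TV,T}\le\|f\|_{TV}\le A$, to control the denominator by $M/6$. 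Since $R_{k+1}=R_k-\sum_T w(T)\Delta^{k+1}(T)$ and since \textsc{con} nodes persist once created, so that $R_{C_{k+1}^+}\ge R_{C_k^+}$, this produces exactly $\widetilde R_{k+1}\le\widetilde R_k-(6/M)\widetilde R_k^2$; the explicit constant $6$ and the shift $6B$ are precisely what the constants in Lemma~\ref{l3} and the Jensen step deliver.

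For the ``moreover'' estimate I would bound $R_{C_K^+}$ node by node. When \textsc{con} is selected at $T$, the BIC comparison in Proposition~\ref{prop:bic} forces the best available gain to satisfy $\Delta \lesssim (\log t/t)\,\|Y-\hat f\|_t^2$, while Lemma~\ref{l3} gives $\Delta \gtrsim R_K(T)^2/M(T)$; combining the two and using $\|Y-\hat f\|_t\lesssim B$ and $\|f\|_{TV,T}\le A$ yields $R_K(T)\lesssim AB\sqrt{\log t / t}$. Hence $R_{C_K^+}=\sum_{T\in C_K^+}(t/n)R_K(T)\lesssim (AB/n)\sum_T\sqrt{t\log t}$, and Cauchy--Schwarz over the at most $2^K$ \textsc{con} leaves, together with $\sum_T t\le n$, bounds this by $AB\sqrt{2^K\log n/n}$. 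Substituting $K=\log_2(n)/r$, so that $2^K=n^{1/r}$, gives the stated $\mathcal O\big(\sqrt{\log(n)/n^{(r-1)/r}}\big)$.

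I expect the main obstacle to be Lemma~\ref{l3} and its aggregation, that is, establishing the per-node gain lower bound in the PILOT setting where the fitted model on a node is a (possibly piecewise) linear function rather than a constant, and where a single node may first absorb a chain of \textsc{lin} fits before splitting. Controlling this requires reducing a general selected model back to \textsc{pcon} via Proposition~\ref{prop:bic}, keeping track of the cross term $\langle Y-f,\, f-\hat f\rangle_t$ hidden inside $R_k(T)$, and verifying that the \textsc{lin} fits (which do not increment the depth) do not break the level-by-level recursion. Once that bound is in hand, the reciprocal recursion and the Cauchy--Schwarz summation for $R_{C_K^+}$ are routine.
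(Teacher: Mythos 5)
Your proposal follows essentially the same route as the paper's proof: it introduces the corrected error $\widetilde R_k=R_k-R_{C_k^+}$, lower-bounds the per-level gain via Lemma~\ref{l3} combined with Proposition~\ref{prop:bic}, aggregates with Jensen/Cauchy--Schwarz to get the quadratic recursion $\widetilde R_{k+1}\leqslant \widetilde R_k-c\,\widetilde R_k^2/M$, inverts and telescopes to obtain the $1/(K+3)$ decay, and then bounds $R_{C_K^+}$ exactly as the paper does, by playing the BIC inequality for \textsc{con} against Lemma~\ref{l3} to get $R_K(T)\precsim\sqrt{\log t/t}$ per node and applying Cauchy--Schwarz over the at most $2^K$ leaves. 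The one caveat is your claim that the explicit constant $6$ in the recursion ``is precisely what Lemma~\ref{l3} and the Jensen step deliver'': those results actually give a factor of at most $1/4$ (the worst case of the BIC comparison, attained when \textsc{lin} is selected over \textsc{pcon}), but the paper is equally non-explicit here, deferring the final constant to an induction ``similar to Lemma 4.1 of Klusowski,'' so this is a shared looseness rather than a gap relative to the published argument.
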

The BIC criterion ensures that the training errors in all \textsc{con} nodes vanish as $n\rightarrow \infty$. This allows us to prove the main theorem by using the preceding results and Theorems 11.4 and 9.4 and Lemma 13.1 of \cite{distribution}. All intermediate lemmas and proofs can be found in the Appendix \ref{app:B}.

\subsection{Convergence rates on linear models}
The convergence rate resulting from Theorem \ref{theorem:consistency} is the same as that of CART derived by \cite{UCDT}. This is because we make no specific assumptions on the properties of the true underlying additive function. Therefore this rate of convergence holds for a wide variety of functions, including poorly behaved ones. In practice however, the convergence could be much faster if the function is somehow well behaved. In particular, given that PILOT incorporates linear models in its nodes, it is likely that it will perform better when the true underlying function is linear, which is a special case of an additive model. We will show that PILOT indeed has an improved rate of convergence in that setting. This result does not hold for CART, and to the best of our knowledge was not proved for any other linear model tree algorithm.

In order to prove the convergence rate of PILOT in linear models, we apply a recent result of \cite{NPBLR}. For the convergence rate of the $L^2$ boosting algorithm, \cite{NPBLR} showed in their Theorem 2.1 that the mean squared error (MSE) decays exponentially with respect to the depth $K$, for a fixed design matrix. The proof is based on the connection between $L^2$ boosting and quadratic programming (QP) on the squared loss $||Y-\boldsymbol X\beta||^2_n$. In fact, the $L^\infty$ norm of the gradient of the squared loss is equal to the largest impurity gain among all simple linear models, assuming each predictor has been centered and standardized. Therefore, the gain of the $k$-th step depends linearly on the mean squared error in step $k-1$, which results in a faster convergence rate. By the nature of quadratic programming, the rate depends on the smallest eigenvalue $\lambda_{min}$ of $\boldsymbol{X}^\top\boldsymbol{X}$.

In our framework we can consider a local QP problem in each node to estimate the gain of \textsc{lin}, which gives a lower bound for the improvement of the selected model (except when \textsc{con} is selected). To ensure that we have a constant ratio for the error decay, we have to make some assumptions on the smallest eigenvalue of the local correlation matrix of the predictors. This eigenvalue can be regarded as a measure of multicollinearity between the predictors, which is a key factor that affects the convergence rate of least squares regression. 

To be precise, we require that the smallest eigenvalue of the theoretical correlation matrix (the $jl$-th entry of which is $\maE[(X-\bar X)(X-\bar X)^\top]_{jl}/(\sigma_j\sigma_l)$ where $\sigma_j$ is the standard deviation of the $j$-th predictor) is lower bounded by some constant $\lambda_0$ in any cubic region that is sufficiently small. Then we apply two concentration inequalities to show that the smallest eigenvalue of a covariance matrix of data in such a cube is larger than $\lambda_0$ with high probability. Finally, we can show that the expectation of the error decays exponentially,  which leads to a fast convergence rate.

\noindent Our conditions are:
\begin{itemize}
\item \textbf{Condition 1:} The PILOT algorithm stops splitting a node whenever
\begin{itemize}
\vspace{-0.2cm}
\item the number of cases in the node is less than $n_{\mbox{\tiny min}}=n^\alpha$, $0<\alpha<1$. 
\item the variance of some predictor is less than $2\sigma_0^2$ where $0<\sigma_0<1$. 
\item the volume of the node is less than a threshold $\eta$, $0<\eta<1$.
\end{itemize}
\item \textbf{Condition 2:} We assume that $X\sim P$ on $[0,1]^p$ and the error $\epsilon$ has a finite fourth moment. Moreover, for any cube $C$ with volume smaller than $\eta$ we assume that\linebreak $\lambda_{min}(Cor(X|X\in C)) \geqslant 2\lambda_0>0$, where $Cor(X)$ is the correlation matrix.
\end{itemize}

The first condition is natural for tree-based methods. For the second one, an immediate example would be a setting when all the predictors are independent. Another example is when they follow a multivariate Gaussian distribution, which becomes a truncated Gaussian when restricted to $[0,1]^p$ or to some other cube $C$. It can be shown that the Gaussian distribution has relatively small correlation in cubes $C$ of small volume; see, e.g., \cite{truncated_gaussian} for the two-dimensional case. This is not surprising since on small cubes the density tends to a constant. As a result, the smallest eigenvalue of the correlation matrix on such cubes will be bounded from below.

Under these conditions we obtain the following result, whose proof can be found in  Appendix \ref{app:C}.
\begin{theorem}\label{theorem:recursion2}
Assume the data are generated by the linear model $Y\sim\boldsymbol X\beta + \epsilon$ with $n$ cases. Under conditions 1 and 2, the difference between the training loss $L^k_n$ of PILOT at depth $k$ and the training loss $L^*_n$ of least squares linear regression satisfies
\begin{equation}
\maE[L^k_n-L^*_n] \leqslant \gamma^k\sqrt{\maE[(L^0_n-L^*_n)^2]}+\mathcal{O}(N_{\text{leaves}}\log(n)/n)
\end{equation}
where 
$$\gamma:=1-\frac{\lambda_0}{4p}\;.$$
\end{theorem}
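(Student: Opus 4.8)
The plan is to establish a one-step contraction of the excess training loss at the level of individual nodes and then propagate it through the whole tree. Fix a node $T$ at depth $k$ with $t$ cases, and let $L^k_n(T)$ denote the local training loss after the fits made up to depth $k$. Because the data are generated by a linear model, fitting \textsc{lin} in $T$ is exactly one step of $L^2$ boosting toward the local least-squares linear fit, so I would invoke the quadratic-programming viewpoint of \cite{NPBLR}: after centering and standardizing the predictors, the largest impurity gain among the simple linear models equals (up to standardization) the squared $L^\infty$ norm of the gradient of the local squared loss. Combining a Polyak--{\L}ojasiewicz-type inequality relating $\|\nabla\|^2$ to $\lambda_{\min}(\text{local Gram})$ times the local excess loss with the elementary bound $\|\nabla\|_\infty^2 \geq \|\nabla\|^2/p$ yields a lower bound of the form $\Delta_{\textsc{lin}}(T) \gtrsim (\lambda_{\min}^{\mathrm{loc}}/p)\,(L^k_n(T)-L^*_n(T))$, i.e.\ each \textsc{lin} step removes a fixed fraction of the remaining local excess loss.

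The model PILOT actually selects need not be \textsc{lin}, so the second step transfers this bound to the chosen model. Whenever a non-constant model is selected it beats \textsc{con}, hence \textsc{lin} beats \textsc{con}, and the second part of Proposition~\ref{prop:bic} gives $\Delta_{\mathrm{selected}}(T) \geq \frac{v_{\textsc{lin}}-1}{v_{\mathrm{selected}}-1}\,\Delta_{\textsc{lin}}(T) \geq \tfrac{1}{6}\Delta_{\textsc{lin}}(T)$, so the selected gain still controls a fixed fraction of the local excess loss. The remaining case is a node where \textsc{con} is chosen: by the first part of Proposition~\ref{prop:bic} this can only happen when no model beats \textsc{con}, which forces the local excess loss to be at most $\mathcal{O}(\log(t)/t)$ through the BIC threshold $C(v,t)$. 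Such nodes are leaves, and summing $w(T)\cdot\mathcal{O}(\log(t)/t)$ over them contributes the $\mathcal{O}(N_{\text{leaves}}\log(n)/n)$ term. Aggregating the gains over all active nodes at depth $k$ then produces a recursion $L^{k+1}_n-L^*_n \leq \gamma\,(L^k_n-L^*_n)+\mathcal{O}(N_{\text{leaves}}\log(n)/n)$ with $\gamma=1-\lambda_0/(4p)$, valid on the event that the local eigenvalues are large enough.

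That event is controlled in the third step. Condition~1 guarantees that every node reached before stopping has at least $n^\alpha$ cases, predictor variance bounded below by $2\sigma_0^2$, and volume below $\eta$; Condition~2 then gives $\lambda_{\min}(\mathrm{Cor}(X\mid X\in C))\geq 2\lambda_0$ for every such cube $C$. I would use a matrix concentration inequality, together with the finite fourth moment of $\epsilon$ and the boundedness of the responses, to show that the empirical local correlation matrix has smallest eigenvalue at least $\lambda_0$ with probability $1-o(1)$ on each node, and then take a union bound over the $\mathcal{O}(2^k)$ nodes (the $n^\alpha$ lower bound on node sizes is what makes the data-dependent partition tractable and is where the $\log(n)$ factors enter). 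The factor $4$ in $\gamma$ absorbs both the passage from the theoretical eigenvalue $2\lambda_0$ to the empirical value $\lambda_0$ and the universal constant in the {\L}ojasiewicz inequality.

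Finally I would iterate the recursion from $k$ down to $0$, obtaining $L^k_n-L^*_n \leq \gamma^k(L^0_n-L^*_n) + \mathcal{O}(N_{\text{leaves}}\log(n)/n)$ on the good event, while on its complement I would fall back on the trivial monotonicity bound $L^k_n-L^*_n \leq L^0_n-L^*_n$, since boosting never increases the training loss. Taking expectations and applying Cauchy--Schwarz to the failure contribution, $\maE[(L^0_n-L^*_n)\mathbbm{1}_{\mathrm{bad}}] \leq \sqrt{\maE[(L^0_n-L^*_n)^2]}\,\sqrt{\mathrm{P}(\mathrm{bad})}$, lets the exponentially small failure probability be absorbed and produces the stated bound with $\sqrt{\maE[(L^0_n-L^*_n)^2]}$ in front of $\gamma^k$. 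I expect the main obstacle to be the third step: because the tree partition is itself data-dependent, the eigenvalue concentration must hold uniformly over all nodes PILOT could possibly produce, and reconciling this uniform probabilistic control with the per-node, per-step contraction — so that a single deterministic $\gamma$ governs the global recursion — is the delicate part of the argument.
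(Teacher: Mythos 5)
Your proposal follows the paper's proof almost step for step: the NPBLR gradient/QP lower bound $\Delta_{\textsc{lin}}\geq \frac{\lambda_0}{4p}\,(L_n^k(T)-L_n^*(T))$ on the event that the local Gram matrix has smallest eigenvalue at least $\lambda_0$, the matrix-concentration-plus-union-bound control of that event (the paper bounds the number of nodes by $Kn^{1-\alpha}$ using $n_{\min}=n^\alpha$), the $\mathcal{O}(N_{\text{leaves}}\log(n)/n)$ accounting of the \textsc{con} leaves via the BIC threshold, and the final Cauchy--Schwarz split over the good/bad event with the monotonicity fallback --- which is exactly how the factor $\sqrt{\maE[(L_n^0-L_n^*)^2]}$ arises in the paper as well.

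The one step that does not work as written is the transfer from $\Delta_{\textsc{lin}}$ to the selected model. The second bullet of Proposition \ref{prop:bic} lower-bounds the gain of the \emph{selected, lower-degrees-of-freedom} model by a fraction of a \emph{rejected, higher-degrees-of-freedom} competitor's gain; you invoke it in the opposite direction (selected model with $v\geq 5$ versus rejected \textsc{lin} with $v=2$), which the proposition does not cover, and even if it did, the resulting factor $1/6$ would give $\gamma=1-\lambda_0/(24p)$ rather than the stated $\gamma=1-\lambda_0/(4p)$. The observation the paper actually uses is simpler and loses nothing: since \textsc{lin} has the fewest degrees of freedom among the non-\textsc{con} models, any other model selected over \textsc{lin} has both a lower BIC and a larger penalty $v\log(t)$, hence a strictly smaller RSS, so $\Delta_{\mathrm{selected}}\geq\Delta_{\textsc{lin}}$ with constant $1$. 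With that substitution your recursion yields the stated $\gamma$. (Your closing worry about uniformity over data-dependent partitions is legitimate, but the paper resolves it no more finely than you propose: it simply union-bounds over the at most $n^{1-\alpha}$ nodes per level.)
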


Combining this result with the techniques in the proof of Theorem \ref{theorem:consistency}, we can show polynomial rate convergence of our method on linear functions.

\begin{corollary}[Fast convergence on linear models]\label{cor:rate}
Assume the conditions of Theorem \ref{theorem:recursion2} hold and that $|Y|$ is a.s. bounded by some constant $B$. Let $K_n= log_\gamma(n)$. Then we have for any $0<\alpha<1$ and corresponding $n_{\mbox{\tiny min}} = n^{\alpha}$ that
\begin{equation}
  \maE[||\hat f(\mathcal{T}_K) - \boldsymbol X\beta||^2] \leqslant \mathcal{O}\Big(\frac{\log(n)}{n^{\alpha}}\Big)\;.
\end{equation}
\end{corollary}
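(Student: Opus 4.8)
The plan is to feed the training-loss bound of Theorem~\ref{theorem:recursion2} into the empirical-process machinery already developed for Theorem~\ref{theorem:consistency}, now specialized to the linear target $f=\bX\beta$. First I would control the right-hand side of Theorem~\ref{theorem:recursion2} at $K_n=\log_\gamma(n)$, understood (since $\gamma<1$) as the depth for which $\gamma^{K_n}\asymp 1/n$, i.e.\ $K_n=\log_{1/\gamma}(n)$. Because $|Y|\leqslant B$ and all fits are truncated to $[-3B,3B]$, the quantity $L^0_n-L^*_n$ is bounded, so $\sqrt{\maE[(L^0_n-L^*_n)^2]}=\mathcal{O}(1)$ and the first term is $\mathcal{O}(1/n)$. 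The stopping rule of Condition~1 forces every leaf to contain at least $n_{\mbox{\tiny min}}=n^\alpha$ cases, so $N_{\text{leaves}}\leqslant n^{1-\alpha}$, whence the second term is $\mathcal{O}(n^{1-\alpha}\log(n)/n)=\mathcal{O}(\log(n)/n^\alpha)$. Combining, $\maE[L^{K_n}_n-L^*_n]=\mathcal{O}(\log(n)/n^\alpha)$.

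Second, I would pass from this excess empirical risk over least squares to the empirical distance between the PILOT fit and the true function. Writing $Y=\bX\beta+\epsilon$ and using that the least-squares residual is empirically orthogonal to the column space of $\bX$ (so that $||\epsilon||^2_n=L^*_n+||\hat f_{\mathrm{LS}}-\bX\beta||^2_n$), one obtains the pointwise identity
\begin{equation*}
||Y-\hat f(\mathcal{T}_{K_n})||^2_n-||Y-\bX\beta||^2_n\;=\;\big(L^{K_n}_n-L^*_n\big)\;-\;||\hat f_{\mathrm{LS}}-\bX\beta||^2_n\;\leqslant\;L^{K_n}_n-L^*_n.
\end{equation*}
Thus the training error relative to the true function, $R_{K_n}:=||Y-\hat f(\mathcal{T}_{K_n})||^2_n-||Y-\bX\beta||^2_n$, satisfies $\maE[R_{K_n}]=\mathcal{O}(\log(n)/n^\alpha)$, and the nonnegative subtracted term lets us bypass any separate treatment of the \textsc{con} nodes that was needed in Theorem~\ref{theorem:recursion1}.

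Third, I would reuse the oracle inequality from the proof of Theorem~\ref{theorem:consistency} with $f=\bX\beta$: Theorems~11.4 and~9.4 and Lemma~13.1 of \cite{distribution} bound $\maE[||\bX\beta-\hat f(\mathcal{T}_{K_n})||^2]$ by (a universal constant times) $\maE[R_{K_n}]$ plus a complexity penalty proportional to the logarithm of the covering number of the class of depth-$K_n$ PILOT trees. Since $p$ is fixed and the partition has at most $N_{\text{leaves}}\leqslant n^{1-\alpha}$ cells, this penalty is $\mathcal{O}(N_{\text{leaves}}\log(n)/n)=\mathcal{O}(\log(n)/n^\alpha)$, while the least-squares generalization error contributes only $\mathcal{O}(1/n)$. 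Summing the three contributions gives the claimed $\mathcal{O}(\log(n)/n^\alpha)$.

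The main obstacle is the third step. Although the depth $K_n$ nominally permits up to $2^{K_n}$ leaves---a power of $n$ that may well exceed $n$---it is the stopping rule $n_{\mbox{\tiny min}}=n^\alpha$ that keeps the \emph{effective} number of leaves, and hence the covering number, polynomially small; making the complexity penalty scale with $N_{\text{leaves}}$ rather than with $2^{K_n}$ is the crux. A secondary subtlety is that PILOT is a greedy, $L^2$-boosted estimator rather than the empirical risk minimizer over its hypothesis class, so the passage from the empirical to the population norm must be carried out uniformly over the class (as in the consistency proof) instead of invoking a minimizer property; controlling the resulting noise cross-term $\langle\epsilon,\hat f(\mathcal{T}_{K_n})-\bX\beta\rangle_n$ is exactly what these empirical-process bounds supply.
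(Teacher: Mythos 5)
Your proposal is correct and follows essentially the same route as the paper's proof: plug the bound of Theorem~\ref{theorem:recursion2} in at depth $K_n$ with $\gamma^{K_n}\asymp 1/n$, relate $L^*_n$ to $\lVert Y-\bX\beta\rVert^2_n$ up to an $\mathcal{O}(1/n)$ term, and rerun the oracle inequality from the proof of Theorem~\ref{theorem:consistency} with $2^{K}$ replaced by $N_{\text{leaves}}\leqslant n^{1-\alpha}$, which you correctly identify as the crux. Your use of the least-squares orthogonality identity to get the one-sided bound $R_{K_n}\leqslant L^{K_n}_n-L^*_n$ is a minor (and slightly cleaner) variation on the paper's two-sided $\maE[L^*_n-\lVert Y-\bX\beta\rVert^2_n]=\mathcal{O}(1/n)$ step, not a different approach.
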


The choice of the tuning parameter $\alpha$ thus determines the convergence rate. When we use a low $\alpha$ we get smaller nodes hence more nodes, yielding looser bounds in the oracle inequality and for the errors in \textsc{con} nodes, leading to a slower convergence. This is natural, as we have fewer cases for the estimation in each node. If we choose $\alpha$ relatively high we obtain a faster rate, which is intuitive since for $\alpha \rightarrow 1$ we would have only a single node. 

\cite{obtree} prove a polynomial convergence rate for oblique decision trees with a more general function class, and it may be possible to adapt this work to PILOT. However, we note that their result makes the strong assumption that the training cases should be approximately evenly distributed over the leaf nodes. Our conditions here focus on the rules of the algorithm and the distribution of the data itself.

\section{Empirical evaluation}\label{sec:empirical}
In this section we evaluate the proposed PILOT algorithm empirically and compare its results with those of some popular competitors. We start by describing the data and methods under comparison. 

\subsection{Data sets and methods}
We analyzed 20 benchmark data sets, with the number of cases ranging from 71 to 21263 and the number of predictors ranging from 6 to 4088. From the UCI repository \citep{Dua:2019} we used the data sets \texttt{Abalone}, \texttt{Airfoil}, \texttt{Bike}, \texttt{Communities}, \texttt{Concrete}, \texttt{Diabetes}, \texttt{Electricity},   \texttt{Energy}, \texttt{Residential}, \texttt{Skills}, 
\texttt{Superconductor}, \texttt{Temperature} and \texttt{Wine}. 
From Kaggle we obtained \texttt{Graduate Admission}, \texttt{Bodyfat}, \texttt{Boston Housing} and \texttt{Walmart}. The \texttt{California Housing} data came from the StatLib repository (\url{http://lib.stat.cmu.edu/datasets/}). The \texttt{Ozone} data is in the \texttt{R}-package \texttt{hdi} \citep{hdi}, and \texttt{Riboflavin} came from the \texttt{R}-package \texttt{missMDA} \citep{missmda}. Table \ref{tab:datasets} gives an alphabetical list of the data sets with their sizes. 

\begin{table}[!ht]
\centering
\small
\caption{The data sets used in the empirical study.}
\label{tab:datasets}
\begin{tabular}{lccc} \hline
Source & Dataset& $n$ & $p$\\ 
\hline
UCI repository & Abalone &4177&8\\
Kaggle & Graduate Admission & 400&9\\
UCI repository & Airfoil & 1503&6\\
UCI repository & Bike &17389&16\\
Kaggle & Bodyfat&252&14\\
Kaggle &Boston Housing & 506&13\\
StatLib repository & California Housing & 20640&8\\
UCI repository & Communities&1994&128\\
UCI repository & Concrete &1030&9\\
UCI repository & Diabetes & 442&10\\
UCI repository & Electricity&10000&14\\
UCI repository & Energy&768&8\\
\texttt{hdi} & Ozone & 112 &11\\
UCI repository & Residential& 372&105\\
\texttt{missMDA} & Riboflavin & 71&4088\\
UCI repository & Skills&3395&20\\
UCI repository & Superconductor&21263&81\\
UCI repository & Temperature& 7750&25\\
Kaggle & Walmart &6435&8\\
UCI repository & Wine & 4898&12 \\
\hline
\end{tabular}
\end{table}

In the comparative study we ran the following tree-based methods: the proposed \mbox{PILOT} algorithm we implemented in Python with NUMBA acceleration in the split function, FRIED \citep{fried} implemented by us, M5 \citep{M5} from the \texttt{R} package Rweka \citep{hornik2009open}, and CART. By default we let $K_{\text{max}}=12$, $n_{\text{fit}}=10$, and $n_{\text{leaf}}=5$ which are common choices for regression trees. We also ran ridge linear regression from the Scikit-Learn package \citep{scikit-learn} and the lasso \citep{tibshirani1996regression}. When running methods that were not devised to deal with categorical variables, we one-hot encoded such variables first.
 
 We randomly split each dataset into 5 folds and computed the cross-validated mean square error, that is, the average test MSE of each method on each of the 5 folds. The final score of each method is its MSE divided by the lowest MSE on that dataset. A score of 1 thus corresponds with the method that performed best on that dataset, and the scores of the other methods say how much larger their MSE was.

\subsection{Results}

The results are summarized in Table~\ref{tab:raw}. The blue entries correspond to scores of at most 1.05, that is, methods whose MSE was no more than 1.05 times that of the best MSE on that dataset. PILOT outperformed FRIED 16 times out of 19. It also did better than CART on 17 datasets, and outperformed M5 on 15. In the bottom two lines of the table we see that PILOT achieved the best average score with the lowest standard deviation.

\begin{table}[!ht]
\centering
\small
\caption{MSE ratios relative to best, for the datasets of Table~\ref{tab:datasets}.}
\begin{tabular}{lcccccccc} 
\hline
 & PILOT& Ridge& LASSO & FRIED & CART  &M5\\ 
\hline
Abalone &\textbf{\textcolor{blue}{1.00}}	&\textbf{\textcolor{blue}{1.02}}	&\textbf{\textcolor{blue}{1.01}}&	1.11&	1.12&	1.56\\
 
Admission &1.19	&\textbf{\textcolor{blue}{1.00}}	&\textbf{\textcolor{blue}{1.00}}	&1.11&	1.41&	(1.21)\\ 
 
Airfoil & 1.98	&4.40	&4.40	&\textbf{\textcolor{blue}{1.00}}&	1.75&	9.52\\

Bike &\textbf{\textcolor{blue}{1.00}}	&3.03	&3.05	&1.84	&\textbf{\textcolor{blue}{1.05}}	&1.45\\ 

Bodyfat &1.15	&\textbf{\textcolor{blue}{1.00}}	&\textbf{\textcolor{blue}{1.03}}&	1.47&	1.55&	(1.16)\\ 

Boston &\textbf{\textcolor{blue}{1.02}}	&\textbf{\textcolor{blue}{1.00}}&	1.06&	1.17&	1.16&	3.99\\ 

California & \textbf{\textcolor{blue}{1.00}}&	1.95&	1.95&	1.08&	1.42&	(11.74)\\ 

Communities &1.08&	\textbf{\textcolor{blue}{1.01}}&\textbf{\textcolor{blue}{1.00}}&	1.17&	1.32&	\textbf{\textcolor{blue}{1.04}}\\ 

Concrete &\textbf{\textcolor{blue}{1.00}}&	2.61&	2.62&	3.42&	1.38	&2.48\\ 

Diabetes & 1.07&	\textbf{\textcolor{blue}{1.00}}&	\textbf{\textcolor{blue}{1.00}}&	1.24&	1.31&	1.12\\

Electricity &\textbf{\textcolor{blue}{1.00}}&	2.75	&2.75	&1.28&	1.88&	2.19\\ 

Energy &1.17&	3.60&	3.57&	\textbf{\textcolor{blue}{1.00}}&	1.29&	1.12\\ 

Ozone & 1.36&	\textbf{\textcolor{blue}{1.05}}&	\textbf{\textcolor{blue}{1.00}}&	1.37&	1.18&	1.22\\

Residential &1.20	&1.08&	\textbf{\textcolor{blue}{1.01}}&	116.31&	1.99&	\textbf{\textcolor{blue}{1.00}}\\ 

Riboflavin &2.21	&\textbf{\textcolor{blue}{1.00}}&	1.14&	3.43&	2.89&	2.44&\\ 

Skills &\textbf{\textcolor{blue}{1.00}}&	\textbf{\textcolor{blue}{1.02}}&	\textbf{\textcolor{blue}{1.03}}	&1.06	&1.17&	(1.06)\\

Superconductor &\textbf{\textcolor{blue}{1.00}}&	2.11&	2.12& ** &** &			(9.74)\\ 

Temperature &1.07&	1.48&	1.54&	1.36&	2.19&	\textbf{\textcolor{blue}{1.00}}\\ 

Walmart & \textbf{\textcolor{blue}{1.00}} &	1.36 &	1.37 &	11.96 &	12.61 &	1.30\\

Wine &\textbf{\textcolor{blue}{1.00}}&	1.11&	1.11&	1.07&	1.08&	1.35\\

\hline
Mean & \textbf{1.18} & 1.73 & 1.74 & 8.08 & 2.09 & (2.88) \\
Std & \textbf{0.33} & 1.03 & 1.02 & 26.33 & 2.59 & (3.32) \\
\hline
\end{tabular}
\begin{center}
Each entry is the MSE of that method on that dataset, divided
by the\\ lowest MSE of that row.
Entries of at most 1.05 are shown in \textcolor{blue}{blue}.\\
**: Exceeded preset maximal wall time.\\
(xxx): M5 results based on the rescaled predictors.\\
\end{center}
\label{tab:raw}
\end{table}

Table~\ref{tab:raw} has some unusual entries that require an explanation. The column of FRIED has an extremely high test error for the \texttt{Residential} dataset. A careful examination revealed that this was caused by extrapolation. At one node, the value of the predictor of a test case was outside the range of the training cases, so \textsc{lin}  extrapolated and gave an unreasonable prediction. In the last column, the M5 results were partially adjusted because its predictions are not invariant to scaling the predictors. Therefore we ran M5 both with and without predictor scaling, and present the best result here. Those based on scaling are shown in parentheses. Unfortunately, the performance of M5 often remained poor. On the \texttt{Superconductor} data the training time for FRIED and CART exceeded the preset wall time, despite applying optimized functions (from Scikit-Learn) for cross validation and pruning. 

A closer look at the results reveals that PILOT struggled on the \texttt{Airfoil} data. A potential explanation is the fact that the \texttt{Airfoil} data has a rather complicated underlying model, which is often modeled using ensemble methods such as gradient boosting and random forests \citep{patri2015random}. The tree-based methods including PILOT gave relatively poor predictions on the \texttt{Riboflavin} data, which is high-dimensional with only 78 observations but more than 7000 predictors, and whose response has a strong linear relationship with the regressors. This explains why ridge regression and the lasso performed so well on these data. The tree-based methods suffered from the relatively small number of cases in the training set, which resulted in a higher test error. Still, PILOT did better than  CART, FRIED and M5 on these data, suggesting that it is indeed less prone to overfitting and better at capturing linear relationships.
 
There are several other datasets on which Ridge and/or Lasso do very well, indicating that the data have linear patterns. The results show that PILOT tended to do quite well on those datasets and to outperform CART on them. This confirms that the linear components of PILOT make it well suited for such data, whereas we saw that it usually did at least as well as the other tree-based methods on the other datasets.

\subsection{Results after transforming predictors}

It has been observed that linear regression methods tend to perform better if the numerical predictors are not too skewed. In order to address potentially skewed predictors, we reran the empirical experiments after preprocessing all numerical predictors by the Yeo-Johnson (YJ) transformation \citep{yeo2000new}. The parameter of the YJ transformation was fit using maximum likelihood on the training set, after which the fitted transformation parameter was applied to the test set. One way in which such a transformation could help is by mitigating the extrapolation issue as this is expected to occur mainly on the long tailed side of a skewed predictor, which becomes shorter by the transformation.

\begin{table}[!hbt]
\centering
\small
\caption{MSE ratios relative to best, after transforming the predictors.}
\begin{tabular}{lcccccccc} 
\hline
	 & PILOT & Ridge & LASSO & FRIED & CART &M5\\ 
\hline
Abalone &0.98&	1.02	&1.00	&1.21	&1.12&	1.08

\\

Admission &1.19	&1.00&	1.00	&1.12	&1.39&	1.20
\\	

Airfoil &1.66&	4.70&	4.71&	1.11	&1.78&	9.98
\\

Bike &1.01&	3.04&	3.05&	1.29	&1.05	&2.20
\\

Bodyfat&1.13	&0.96	&0.94	&1.49	&1.58&	1.16
\\

Boston & 1.06 & 	0.93 & 	0.96 & 	1.09 & 	1.12 & 	5.97
\\	

California & 1.03 & 	1.87 & 	1.87 & 	1.15 & 	1.43 & 	24.45
\\

Communities&1.05&	1.04	&1.03	&1.16	&1.30&	1.01
\\

Concrete &0.95&	1.24	&1.24	&1.23&	1.39&	2.48
\\ 

Diabetes & 1.09&	1.01&	1.01&	1.22	&1.32&	1.20
\\

Electricity&1.00	&2.75&	2.75&	1.29	&1.87& 2.71
\\

Energy&1.34&	3.84&	3.86&	1.01	&1.29	&1.11
\\

Ozone & 1.25 &	1.07 &	1.00 &	1.16 &	1.01 &	1.22
\\

Residential&0.78&	0.98&	0.81&	116.31	&2.01&	0.81
\\

Riboflavin & 1.53	&1.01	&1.28&	3.57	&3.00	&2.65

\\

Skills&0.96&	0.93&	0.93&	1.03	&1.17&	1.06
\\

Superconductor & 1.05&	2.04&	2.05&	***&	***	&9.74
\\

Temperature & 1.10& 	1.53& 	1.61	& 1.42& 	2.17	& 1.60

\\

Walmart & 1.00&	1.36&	1.38&	11.89&	11.82	&1.30
\\	

Wine & 1.01	& 1.09& 	1.09& 	1.07	& 1.08& 	1.35
\\
\hline
Mean &\textbf{1.11}&	1.67&	1.68&	7.94&	2.05	&3.71
 \\
Std & \textbf{0.20}	&1.08	&1.09	&26.36	&2.42&	5.59
 \\
\hline
\end{tabular}
\begin{center}
Each entry is the MSE of that method on that dataset,\\ divided
by the lowest MSE of the same row in Table~\ref{tab:raw}.\\
**: Exceeded preset maximal wall time.\\
\end{center}
\label{tab:transformed}
\end{table}

Table~\ref{tab:transformed} shows the results after transforming the predictors. The MSE ratios are relative to the lowest MSE of each row in Table~\ref{tab:raw}, so the entries in Table~\ref{tab:transformed} can be compared directly with those in Table~\ref{tab:raw}. We see that the transformation often enhanced the performance of PILOT as well as that of some other methods. On the high dimensional \texttt{Riboflavin} data set the score of PILOT came closer to Ridge and Lasso, whereas the other decision trees kept struggling. Also on the other data sets with linear structure PILOT was comparable with both linear methods. The average and the standard deviation of its score were reduced by the transformation as well. The overall performance of PILOT remained the best.

\section{Conclusion}\label{sec:conclusions}

In this paper we presented a new linear model tree algorithm called PILOT. It is computationally efficient as it has the same time and space complexity as CART. PILOT is regularized by a model selection rule to avoid overfitting. This regularization adds no computational complexity, and no pruning is required once the tree has been built. This makes PILOT faster than most existing linear model trees. The prediction can easily be interpreted due to its tree structure and its additive piecewise linear nature. To guide variable selection, a measure of feature importance can be defined in the same way as in CART. PILOT applies two truncation procedures to the predictions, in order to avoid the extreme extrapolation errors that were sometimes observed with other linear model trees. An empirical study found that PILOT often outperformed the CART, M5, and FRIED decision trees on a variety of datasets. When applied to roughly linear data, PILOT behaved more similarly to high-dimensional linear methods than other tree-based approaches did, indicating a better ability to discover linear structures. We proved a theoretical guarantee for its consistency on a general class of functions. When applied to data generated by a linear model, the convergence rate of PILOT is polynomial. To the best of our knowledge, this is the first linear model tree with proven theoretical guarantees.

We feel that PILOT is particularly well suited for fields that require both performance and explainability, such as healthcare \citep{healthcare}, business analytics \citep{ba, dtapp1} and public policy \citep{policy}, where it could support decision making.
 
A future research direction is to integrate PILOT trees as base learners in ensemble methods, as it is well known that the accuracy and diversity of base learners benefit the performance of the ensemble. On the one hand we have seen that PILOT gave accurate predictions on a number of datasets. On the other hand, the wide choice of models available at each node allows for greater diversity of the base learner. For these reasons and the fact that PILOT requires little computational cost, we expect it to be a promising base learner for random forests and gradient boosting.

%\bibliographystyle{plainnat}
%\bibliography{reference}

\clearpage
\vspace{1cm}
\appendix
\begin{center}
{\bf APPENDIX}
\end{center}
\section{Preliminary results}
In this section we provide notations and preliminary results for the theorems in Section \ref{sec:theory}.
\subsection{Notation}
We follow the notations in Sections \ref{sec:description} and \ref{sec:theory}. The $n$ response values form the vector $Y$ with mean $\overline{Y}$. The values of the predictors of one case are combined in $X\in\maR^p$. The design matrix of all $n$ cases is denoted as $\bX\in\maR^{n\times p}$. Given some tree node $T$ with $t$ cases, we denote by $\bX_T:=(X_{T_1},\dots,X_{T_t})^\top\in\maR^{t\times p}$ the restricted data matrix and by $X^{(j)}_T$ its $j$-th column. The variance of the $j$-th column is given by $\hat\sigma_{j,T}^2:=\sum_{k=1}^t(X_{T_k}^{(j)}-\overline{X^{(j)}_T})^2/t$. We also let $(\hat\sigma_{j,T}^U)^2$ be the classical unbiased variance estimates with denominator $t-1$. $T$ is omitted in these symbols when $T$ is the root node or if there is no ambiguity.

We denote by $\maT_k$ the set of nodes of depth $k$ together with the leaf nodes of depth less than $k$. The PILOT prediction on these nodes is denoted as $\hat f(\maT_k)\in\mathbb R^{n}$, and $\hat f^k_T$ denotes the prediction of the selected model (e.g.\ \textsc{plin}) on some $T\in\mathcal{T}_k$\,, obtained by fitting the residuals $Y-\hat f(\maT_{k-1})$ of the previous level. The impurity gain of a model on a node $T$ with $t$ cases can be written as
$$\widehat\Delta^{k+1}(T) = ||Y - \hat f(\maT_k)||_t^2 - P(t_l)||Y - \hat f(\maT_k) - \hat f^{k+1}_{T_l}||_{t_l}^2 - P(t_r)||Y - \hat f(\maT_k) - \hat f^{k+1}_{T_r}||_{t_r}^2,$$
where $T_l$, $T_r$ are the left and right child nodes, containing $t_l$ and $t_r$ cases, and $P(t_l) = t_l/t$, $P(t_r) = t_r/t$. 

Since $\hat f(\maT_k)$ is additive and we assume that all $f\in\mathcal{F}$ are too, we can write the functions on the $j$-th predictor as $\hat f_j(\maT_k)$ and $f_j$\,. The total variation of a function $f$ on $T$ is denoted as $||f||_{TV(T)}$. For nonzero values $A_n$ and $B_n$ which depend on $n\rightarrow\infty$, we write $A_n\precsim B_n$ if and only if $A_n/B_n\le \mathcal{O}(1)$, and $\succsim$ is analogous. We write $A_n\asymp B_n$ if $A_n/B_n= \mathcal{O}(1)$. The difference of the sets $C_1$ and $C_2$ is denoted as $C_1\backslash C_2$.

\subsection{Representation of the gain and residuals}
In this section we show that the impurity gain of \textsc{lin} and \textsc{pcon} can be written as the square of an inner product. It is also shown that \textsc{plin} can be regarded as a combination of these two models. These formulas form the starting point of the proof of the consistency of PILOT and its convergence rate on linear models.

\begin{proposition}\label{p1}
Let $\hat\beta$ be the least squares estimator of a linear model fitting $Y\in \mathbb R^p$ in function of a design matrix $\bX\in\maR^{n\times p}$. Then we have
\begin{equation}
\langle Y,\bX\hat\beta \rangle_n = ||\bX\hat\beta||^2_n\;.
\end{equation}
\end{proposition}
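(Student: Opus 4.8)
The plan is to reduce the identity to the normal equations satisfied by the least squares fit. Recall that $\hat\beta$ minimizes $||Y-\bX\beta||_n^2$ over $\beta$, so setting the gradient to zero gives $\bX^\top(Y-\bX\hat\beta)=0$, equivalently $\bX^\top Y = \bX^\top\bX\hat\beta$. Geometrically this says that the residual vector $Y-\bX\hat\beta$ is orthogonal, in the empirical inner product, to every column of $\bX$, and hence to the fitted vector $\bX\hat\beta$, which lies in the column span of $\bX$. The whole statement is just a rewriting of this orthogonality.

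Concretely, I would first spell out the empirical inner product on vectors as $\langle u,v\rangle_n = \tfrac1n u^\top v$, so that $\langle Y,\bX\hat\beta \rangle_n = \tfrac1n (\bX\hat\beta)^\top Y = \tfrac1n \hat\beta^\top \bX^\top Y$. Then I would substitute the normal equation $\bX^\top Y = \bX^\top\bX\hat\beta$ to obtain $\tfrac1n \hat\beta^\top\bX^\top\bX\hat\beta = \tfrac1n (\bX\hat\beta)^\top(\bX\hat\beta) = ||\bX\hat\beta||^2_n$, which is exactly the claim. An equivalent one-line route is to note $\langle Y-\bX\hat\beta,\,\bX\hat\beta\rangle_n = 0$ (since the residual is orthogonal to the fit) and expand the inner product.

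There is essentially no obstacle here; the only point worth flagging is the case where $\bX^\top\bX$ is singular, so that the minimizer $\hat\beta$ need not be unique. In that case the fitted vector $\bX\hat\beta$, being the orthogonal projection of $Y$ onto the column space of $\bX$, is still uniquely determined, and the normal equations hold for every least squares solution, so the identity is unaffected. I would therefore phrase the conclusion for an arbitrary least squares solution $\hat\beta$ and remark that it depends on $\hat\beta$ only through the well-defined quantity $\bX\hat\beta$.
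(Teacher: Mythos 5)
Your proof is correct and follows essentially the same route as the paper, which simply substitutes $\hat\beta=(\bX^\top\bX)^{-1}\bX^\top Y$ into $\langle Y,\bX\hat\beta\rangle_n=Y^\top\bX\hat\beta/n$; your version via the normal equations $\bX^\top Y=\bX^\top\bX\hat\beta$ is the same computation, with the minor added benefit of covering the case where $\bX^\top\bX$ is singular.
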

\begin{proof}
This follows from $\hat\beta = (\bX^\top \bX)^{-1}\bX^\top Y$ and $\langle Y,\bX\hat\beta \rangle_n = Y^\top \bX\hat\beta/n$.
\end{proof}

\begin{lemma}[Representation of LIN]\label{lem:lin_pre}
Let $T$ be a node with $t$ cases. For a \textsc{lin} model on the variable $X^{(j)}$ it holds that:
\begin{equation}\label{lin_pre}
\widehat\Delta_{lin}^{k+1} = \abs*{\Bigg\langle Y-\hat f(\maT_k),\frac{X^{(j)}_T-\overline{X^{(j)}_T}}{\hat\sigma_{j,T}} \Bigg\rangle_t}^2+\left(\overline{Y}-\overline{\hat f(\maT_k)}\right)^2.
\end{equation}
Moreover, if we normalize the second term in the last expression and denote
$$ \widetilde f^{T} _{lin} := \frac{X^{(j)}-\overline{X^{(j)}_T}}{\sqrt{w(t)}\hat\sigma_{j,T}}\mathbbm 1\{X\in T\}$$
where $w(t) = t/n$, we have
$$\hat f^{k+1}_T= \langle Y-\hat f(\maT_k),\widetilde f^{T} _{lin}\rangle_n\; \widetilde f^{T} _{lin}+\overline{Y-\hat f(\maT_k)}\;.$$
\end{lemma}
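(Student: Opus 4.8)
The plan is to treat \textsc{lin} as an ordinary least squares fit of the current residual vector $\tilde Y := Y - \hat f(\maT_k)$ on the single predictor $X^{(j)}$, and to exploit the orthogonal decomposition of its fitted values. First I would note that \textsc{lin} performs no split, so in the impurity-gain formula of the notation section we may take $t_l = t$ and $t_r = 0$, whence
\[
\widehat\Delta^{k+1}_{lin} = \norm{\tilde Y}_t^2 - \norm{\tilde Y - \hat f^{k+1}_T}_t^2.
\]
Since $\hat f^{k+1}_T$ is the least squares projection of $\tilde Y$ onto the column space of $[\mathbbm 1, X^{(j)}]$, the residual $\tilde Y - \hat f^{k+1}_T$ is $\langle\cdot,\cdot\rangle_t$-orthogonal to $\hat f^{k+1}_T$ (the node-level analogue of Proposition~\ref{p1}), so the gain equals $\norm{\hat f^{k+1}_T}_t^2$. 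This reduces the first claim to computing the squared norm of the fit.

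Second, I would replace the basis $\{\mathbbm 1, X^{(j)}\}$ by the $\langle\cdot,\cdot\rangle_t$-orthogonal basis $\{\mathbbm 1, X^{(j)} - \overline{X^{(j)}_T}\}$, using that $\langle \mathbbm 1, X^{(j)} - \overline{X^{(j)}_T}\rangle_t = 0$. The two regression coefficients can then be read off immediately: the intercept is $\overline{\tilde Y} = \overline{Y} - \overline{\hat f(\maT_k)}$ and the slope is $\hat b = \langle \tilde Y, X^{(j)} - \overline{X^{(j)}_T}\rangle_t / \hat\sigma_{j,T}^2$, where I use $\norm{X^{(j)} - \overline{X^{(j)}_T}}_t^2 = \hat\sigma_{j,T}^2$ and $\norm{\mathbbm 1}_t^2 = 1$. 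Because the basis is orthogonal, $\norm{\hat f^{k+1}_T}_t^2$ splits as the sum of the two squared-coefficient contributions; substituting $\hat b$ turns the slope term into the normalized squared inner product appearing in \eqref{lin_pre}, while the intercept term yields $(\overline{Y} - \overline{\hat f(\maT_k)})^2$. This establishes the first identity.

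For the prediction identity, the key observation is that the prefactor $\sqrt{w(t)}$ in $\widetilde f^T_{lin}$ is chosen exactly so that $\widetilde f^T_{lin}$ has unit global norm; one checks directly that $\norm{\widetilde f^T_{lin}}_n^2 = \tfrac{1}{w(t)\hat\sigma_{j,T}^2}\,w(t)\,\hat\sigma_{j,T}^2 = 1$. I would then convert the global inner product to the node-level one by pulling the indicator and the factor $w(t) = t/n$ through the sum, obtaining
\[
\langle \tilde Y, \widetilde f^T_{lin}\rangle_n = \frac{\sqrt{w(t)}}{\hat\sigma_{j,T}}\,\langle \tilde Y, X^{(j)} - \overline{X^{(j)}_T}\rangle_t.
\]
Multiplying this scalar by $\widetilde f^T_{lin}$ cancels the two $\sqrt{w(t)}$ factors and reproduces exactly the slope part $\hat b\,(X^{(j)} - \overline{X^{(j)}_T})$ of the fit on $T$; adding the mean $\overline{Y - \hat f(\maT_k)}$ supplies the intercept, which is the claimed representation of $\hat f^{k+1}_T$.

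I expect the only delicate point to be the bookkeeping between the node-level normalization (the inner product $\langle\cdot,\cdot\rangle_t$ and the variance $\hat\sigma_{j,T}^2$) and the global normalization (the inner product $\langle\cdot,\cdot\rangle_n$ together with the indicator $\mathbbm 1\{X\in T\}$), where the weight $w(t) = t/n$ must be inserted consistently. Once the unit-norm property of $\widetilde f^T_{lin}$ is verified, both statements follow from the same orthogonal-projection computation, so no genuine obstacle arises beyond this careful normalization; everything else is the standard Pythagorean identity for least squares.
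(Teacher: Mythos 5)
Your proposal is correct and follows essentially the same route as the paper: both reduce the gain to the squared norm of the fitted values via the least-squares orthogonality relation (Proposition~\ref{p1}), then split the fit into its intercept and centered-slope components to read off the two terms of \eqref{lin_pre}. Your treatment of the second identity (verifying $\norm{\widetilde f^{T}_{lin}}_n = 1$ and tracking the $w(t)$ factor between $\langle\cdot,\cdot\rangle_t$ and $\langle\cdot,\cdot\rangle_n$) is in fact more explicit than the paper's, which simply asserts that it follows from the definitions of $\hat\alpha$ and $\hat\beta$.
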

\begin{proof}
By Proposition \ref{p1} we have
\begin{align*}
\widehat\Delta_{lin}^{k+1} &= ||Y - \hat f(\maT_k)||_t^2 - ||(Y - \hat f(\maT_k)) - (\hat\alpha + \hat\beta X^{(j)}_T)||_t^2\\
&=2\langle Y-\hat f(\maT_k), \hat\alpha + \hat\beta X^{(j)}_T \rangle_t - || \hat\alpha + \hat\beta X^{(j)}_T ||^2_t\\
&=\langle Y-\hat f(\maT_k), \hat\alpha + \hat\beta X^{(j)}_T \rangle_t\;.
\end{align*}
For $\hat\beta$ we don't need to take the mean of the residuals into account, hence
$$\hat\beta = \frac{(X^{(j)}_T - \overline{X^{(j)}_T})^\top(Y-\hat f(\maT_k))}{(X^{(j)}_T - \overline{X^{(j)}_T})^\top(X^{(j)}_T - \overline{X^{(j)}_T})}= \frac{\langle Y-\hat f(\maT_k), X^{(j)}_T - \overline{X^{(j)}_T}\rangle_t}{\hat\sigma_{j,T}^2}\;.$$
On the other hand $\hat\alpha = \overline{Y-\hat f(\maT_k)}-\hat\beta \overline{X^{(j)}_T}$, therefore 
\begin{align*}
\widehat\Delta_{lin}^{k+1}
&=\hat\beta\langle Y-\hat f(\maT_k),  X^{(j)} - \overline{X^{(j)}} \rangle_t+\left(\overline{Y}-\overline{\hat f(\maT_k)}\right)^2\\
&=\abs*{\Bigg\langle Y-\hat f(\maT_k),\frac{X^{(j)}-\overline{X^{(j)}}}{\hat\sigma_{j,T}} \Bigg\rangle_t}^2+\left(\overline{Y}-\overline{\hat f(\maT_k)}\right)^2\;.
\end{align*}
The second result follows from the definition of $\hat \alpha$, $\hat \beta$ and the above formula.
\end{proof}

Next we show a similar result for \textsc{pcon}. Note that here our $\hat f(\maT_k)$ is not a constant on each node, so we need to adapt the result for CART in \cite{UCDT} to our case.
\begin{lemma}[Representation of PCON]\label{lem:pcon_pre}
Let $T$ be a node with $t$ cases and $T_l$, $T_r$ be its left and right children. We then have 
$$\widehat\Delta_{pcon}^{k+1} = \abs*{\Bigg\langle Y-\hat f(\maT_k),\frac{\mathbbm 1\{X_T\in T_l\}t_r-\mathbbm1\{X_T\in T_r\}t_l}{\sqrt{t_rt_l}} \Bigg\rangle_t}^2+\left(\overline{Y}-\overline{\hat f(\maT_k)}\right)^2.$$
Moreover, if we normalize the second term in the inner product and denote
$$\widetilde f^{T} _{pcon}:=\frac{\mathbbm 1\{X_T\in T_l\}P(t_r)-\mathbbm1\{X_T\in T_r\}P(t_l)}{\sqrt{w(t)P(t_r)P(t_l)}},$$
where $w(t)=t/n$, we have
$$\hat f^{k+1}_T = \langle Y - \hat f(\maT_k), \widetilde f^{T} _{pcon} \rangle_n\;\widetilde f^{T} _{pcon}+\overline{Y-\hat f(\maT_k)}\;.$$
\end{lemma}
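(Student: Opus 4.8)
The plan is to mirror the proof of Lemma~\ref{lem:lin_pre}, replacing the linear direction by the piecewise-constant contrast between the two children. Write $r := Y-\hat f(\maT_k)$ for the current residual vector and note that the \textsc{pcon} model fits the least-squares constant on each child, so its predictions are $\hat f^{k+1}_{T_l}\equiv\bar r_l$ and $\hat f^{k+1}_{T_r}\equiv\bar r_r$, where $\bar r_l$, $\bar r_r$ are the means of $r$ over $T_l$, $T_r$. First I would split the node norm as $\|r\|_t^2 = P(t_l)\|r\|_{t_l}^2+P(t_r)\|r\|_{t_r}^2$ and apply the elementary identity $\|r\|_{t_l}^2-\|r-\bar r_l\|_{t_l}^2=\bar r_l^2$ (and likewise on $T_r$). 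Plugging these into the definition of $\widehat\Delta^{k+1}(T)$ gives at once
$$\widehat\Delta_{pcon}^{k+1} = P(t_l)\bar r_l^2 + P(t_r)\bar r_r^2.$$

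Next I would recast this through the variance-decomposition identity
$$P(t_l)\bar r_l^2 + P(t_r)\bar r_r^2 = P(t_l)P(t_r)(\bar r_l-\bar r_r)^2 + \big(P(t_l)\bar r_l+P(t_r)\bar r_r\big)^2,$$
which is just completing the square using $P(t_l)+P(t_r)=1$. The last term is exactly $\bar r^2=(\overline{Y}-\overline{\hat f(\maT_k)})^2$, the square of the mean residual over $T$. A direct computation of the inner product with $g:=\big(\mathbbm 1\{X_T\in T_l\}t_r-\mathbbm 1\{X_T\in T_r\}t_l\big)/\sqrt{t_rt_l}$, using $\sum_{T_l}r_i=t_l\bar r_l$ and $\sum_{T_r}r_i=t_r\bar r_r$, yields $\langle r,g\rangle_t=\sqrt{t_rt_l}\,(\bar r_l-\bar r_r)/t$, whose square equals $P(t_l)P(t_r)(\bar r_l-\bar r_r)^2$. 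Combining the three observations gives the first displayed formula of the lemma.

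For the representation of the fit I would first check that $\widetilde f^{T}_{pcon}$ has unit empirical norm, $\|\widetilde f^{T}_{pcon}\|_n=1$: a short count shows $\tfrac1n(t_lP(t_r)^2+t_rP(t_l)^2)=\tfrac1n\cdot t_rt_l/t$, which cancels the normalizing factor $w(t)P(t_r)P(t_l)$ chosen precisely for this purpose (this parallels the \textsc{lin} basis being unit norm). Then I would compute $\langle r,\widetilde f^{T}_{pcon}\rangle_n=\sqrt{w(t)P(t_r)P(t_l)}\,(\bar r_l-\bar r_r)$, multiply by the value of $\widetilde f^{T}_{pcon}$ on each child, and add $\bar r=P(t_l)\bar r_l+P(t_r)\bar r_r$. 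On $T_l$ this collapses to $P(t_r)(\bar r_l-\bar r_r)+\bar r=\bar r_l$ and on $T_r$ to $-P(t_l)(\bar r_l-\bar r_r)+\bar r=\bar r_r$, again using $P(t_l)+P(t_r)=1$. Since the right-hand side therefore equals $\bar r_l$ on $T_l$ and $\bar r_r$ on $T_r$, it coincides with the \textsc{pcon} fit $\hat f^{k+1}_T$, proving the second claim.

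The computations themselves are routine; the only real care is bookkeeping. The main point is keeping track of the two normalizations side by side — the contrast $g$ is natural for the $\langle\cdot,\cdot\rangle_t$ picture that produces the gain, while $\widetilde f^{T}_{pcon}$ is its $\langle\cdot,\cdot\rangle_n$-normalized counterpart that produces the projection formula — and remembering that $\bar r$ is the mean over the node $T$, not over all $n$ cases. Relative to the CART identity of \cite{UCDT}, the essential difference to handle is that $\hat f(\maT_k)$ is no longer constant on $T$, so every mean and norm is taken of the accumulated residual $r$ rather than of $Y$ itself; once that substitution is made, the argument goes through verbatim.
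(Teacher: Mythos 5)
Your proposal is correct and follows essentially the same route as the paper: both reduce the gain to $P(t_l)\bar r_l^2+P(t_r)\bar r_r^2$, apply the same completing-the-square identity using $P(t_l)+P(t_r)=1$ to split off $\bar r^2$, and verify the projection formula by evaluating the normalized contrast on each child. Your presentation of the first step (splitting the node norm and using the per-child identity $\|r\|_{t_l}^2-\|r-\bar r_l\|_{t_l}^2=\bar r_l^2$) and the explicit unit-norm check on $\widetilde f^{T}_{pcon}$ are minor streamlinings of the paper's direct expansion, not a different argument.
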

\begin{proof}
By the definition of \textsc{pcon}, we have
$$\hat f^{k+1}= (\overline{Y}_l - \overline{\hat f(\maT_k)}_l)\mathbbm 1(X_T\in T_l)+(\overline{Y}_r - \overline{\hat f(\maT_k)}_r)\mathbbm 1(X_T\in T_r)$$
where $\overline{Y}_l$ is the mean of $Y$ in $T_l$, $\overline{\hat f(\maT_k)}_l$ is the mean of $\hat f(\maT_k)$ in $T_l$, and similarly for the cases in the right child node. In the following, we also denote the constant predictions for the left and right node as $\hat f^{k+1}_l:=\hat f^{k+1}\mathbbm 1(X_T\in T_l)$ and $\hat f^{k+1}_r:=\hat f^{k+1}\mathbbm 1(X_T\in T_r)$. Thus we have
\begin{align*}
\widehat\Delta_{pcon}^{k+1} &= \frac{1}{t}\Big[2\sum_{X_T \in T_l}(Y-\hat f(\maT_k))\hat f^{k+1} - \sum_{X_T \in T_l}(\hat f^{k+1})^2\\
&+ 2\sum_{X_T\in T_r}(Y-\hat f(\maT_k))\hat f^{k+1} - \sum_{X_T \in T_r}(\hat f^{k+1})^2 \Big]\\
&= \frac{1}{t}[2t_l(\hat f^{k+1}_l)^2- t_l(\hat f^{k+1}_l)^2 + 2t_r(\hat f^{k+1}_r)^2 - t_r(\hat f^{k+1}_r)^2]\\
&=\frac{1}{t}[t_l\left(\overline{Y}_l - \overline{\hat f(\maT_k)}_l\right)^2 + t_r\left(\overline{Y}_r - \overline{\hat f(\maT_k)}_r\right)^2]\\
&=\frac{t_lt_r}{t^2}\Big[\left(\overline{Y}_l - \overline{\hat f(\maT_k)}_l\right)^2 +\left(\overline{Y}_r - \overline{\hat f(\maT_k)}_r\right)^2 + \frac{t_l}{t_r}\left(\overline{Y}_l - \overline{\hat f(\maT_k)}_l\right)^2+ \frac{t_r}{t_l}\left(\overline{Y}_r - \overline{\hat f(\maT_k)}_r\right)^2 \Big]\\
&\overset{(i)}{=}\frac{t_lt_r}{t^2}\Big[\left(\overline{Y}_l - \overline{\hat f(\maT_k)}_l\right)^2 + \left(\overline{Y}_r - \overline{\hat f(\maT_k)}_r\right)^2 - 2(\overline{Y}_l - \overline{\hat f(\maT_k)}_l)(\overline{Y}_r - \overline{\hat f(\maT_k)}_r)\Big]\\
&\quad\quad+\left(\overline{Y}-\overline{\hat f(\maT_k)}\right)^2\\
&=\frac{t_lt_r}{t^2}\Big[(\overline{Y}_l - \overline{\hat f(\maT_k)}_l) - (\overline{Y}_r - \overline{\hat f(\maT_k)}_r)\Big]^2+\left(\overline{Y}-\overline{\hat f(\maT_k)}\right)^2
\end{align*}
where (i) follows from the fact that
\begin{align*}
\frac{t_l}{t_r}\left(\overline{Y}_l - \overline{\hat f(\maT_k)}_l\right)^2&+ \frac{t_r}{t_l}\left(\overline{Y}_r - \overline{\hat f(\maT_k)}_r\right)^2 +2(\overline{Y}_l - \overline{\hat f(\maT_k)}_l)(\overline{Y}_r - \overline{\hat f(\maT_k)}_r)\\
&=\frac{1}{t_lt_r}\Big(\sum_{X_T\in T_l}(Y-\hat f(\maT_k))+\sum_{X_T\in T_r}(Y-\hat f(\maT_k))\Big)^2\\
&=\frac{t^2}{t_lt_r}\left(\overline{Y}-\overline{\hat f(\maT_k)}\right)^2.
\end{align*}
Therefore,
\begin{align*}
\widehat\Delta_{pcon}^{k+1} &= \abs*{\frac{1}{t}\frac{t_lt_r(\overline{Y}_l - \overline{\hat f(\maT_k)}_l)-t_lt_r(\overline{Y}_r - \overline{\hat f(\maT_k)}_r)}{\sqrt{t_lt_r}}}^2+\left(\overline{Y}-\overline{\hat f(\maT_k)}\right)^2\\
&=\abs*{\Bigg\langle Y-\hat f(\maT_k),\frac{\mathbbm 1\{X_T\in T_l\}t_r-\mathbbm1\{X_T\in T_r\}t_l}{\sqrt{t_rt_l}} \Bigg\rangle_t}^2+\left(\overline{Y}-\overline{\hat f(\maT_k)}\right)^2
\end{align*}
Finally we have
\begin{align*}
&\quad\langle Y - \hat f(\maT_k), \widetilde f^{T}_{pcon} \rangle_n\widetilde f^{T}_{pcon} + \overline{Y-\hat f(\maT_k)}\\
&= \frac{1}{t}[(\overline{Y}_l - \overline{\hat f(\maT_k)}_l) - (\overline{Y}_r - \overline{\hat f(\maT_k)}_r)](\mathbbm 1\{X_T\in T_l\}t_r-\mathbbm1\{X_T\in T_r\}t_l)+\overline{Y-\hat f(\maT_k)}\\
&=\frac{t_l+t_r}{t}[(\overline{Y}_l - \overline{\hat f(\maT_k)}_l)\mathbbm 1\{X_T\in T_l\} +(\overline{Y}_r - \overline{\hat f(\maT_k)}_r)\mathbbm 1\{X_T\in T_r\} ]\\
&\quad-\overline{Y-\hat f(\maT_k)}(\mathbbm 1\{X_T\in T_l\}+\mathbbm 1\{X_T\in T_r\})+\overline{Y-\hat f(\maT_k)}\\
& = \hat f^k_T
\end{align*}
\end{proof}

Note that \textsc{plin} is equivalent to a combination of the two preceding models (\textsc{pcon} followed by two \textsc{lin} models). As a result, we have a similar expansion for its prediction.
\begin{proposition}[Representation of PLIN]
For \textsc{plin}, there exists a function $\Tilde{f}^T_{\text{plin}}$ depending on $X$, $T$, $T_l$ and $T_r$ such that 
$$\hat f^{k+1}_T = \langle Y - \hat f(\maT_k), \widetilde f^{T} _{plin} \rangle_n\;\widetilde f^{T} _{plin}+\overline{Y-\hat f(\maT_k)}$$
\end{proposition}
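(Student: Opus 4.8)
The plan is to exploit the hint that \textsc{plin} is the composition of a \textsc{pcon} split followed by a separate \textsc{lin} fit on each of the two children, and to show that the relevant basis directions are mutually orthogonal, so that the full least-squares fit splits into a sum of rank-one projections exactly as in Lemmas~\ref{lem:lin_pre} and~\ref{lem:pcon_pre}.

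First I would write down the design space of \textsc{plin} on a node $T$ split into $T_l$ and $T_r$. It is spanned by $\mathbbm 1\{X\in T_l\}$, $\mathbbm 1\{X\in T_r\}$, $X^{(j)}\mathbbm 1\{X\in T_l\}$ and $X^{(j)}\mathbbm 1\{X\in T_r\}$. Performing a change of basis that preserves this span, I would replace these four generators by the constant $\mathbbm 1\{X\in T\}$, the \textsc{pcon} contrast $\widetilde f^{T}_{pcon}$ of Lemma~\ref{lem:pcon_pre}, and the two children's centered-and-normalized linear directions $\widetilde f^{T_l}_{lin}$ and $\widetilde f^{T_r}_{lin}$ of Lemma~\ref{lem:lin_pre}, where the latter two are built from $X^{(j)}$ centered by its mean \emph{on the corresponding child} and restricted to that child.

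The core of the argument is to verify that, after removing the constant component (which accounts for $\overline{Y-\hat f(\maT_k)}$), these three directions form an orthonormal system in $\langle\cdot,\cdot\rangle_n$. The pairwise orthogonalities fall out cheaply: $\widetilde f^{T_l}_{lin}$ and $\widetilde f^{T_r}_{lin}$ have disjoint supports $T_l$ and $T_r$; each child's linear direction has zero mean on its own child while $\widetilde f^{T}_{pcon}$ is constant on each child, so their inner product telescopes to zero; and all three have zero overall mean and are thus orthogonal to the constant. With orthonormality in hand, the least-squares projection of the residual $Y-\hat f(\maT_k)$ onto the \textsc{plin} space equals the sum of the individual rank-one projections, which is precisely the desired representation, with $\widetilde f^{T}_{plin}$ read as this orthonormal triple (equivalently, the displayed right-hand side is understood as a sum of one projection term per direction).

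The step I expect to be the main obstacle is bookkeeping rather than depth: one must keep the normalizations consistent when passing between the node-level inner product $\langle\cdot,\cdot\rangle_t$ and the global $\langle\cdot,\cdot\rangle_n$ through the weight $w(t)=t/n$, and one must be careful that the \textsc{lin} directions are centered by the child means rather than the parent mean, since this is exactly what makes them orthogonal to $\widetilde f^{T}_{pcon}$. The only conceptual point worth flagging is reconciling the single-symbol notation $\widetilde f^{T}_{plin}$ in the statement with the genuinely three-dimensional fit, i.e.\ interpreting the formula as the sum over the orthonormal directions obtained above.
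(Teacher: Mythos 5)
Your proposal is correct and follows essentially the same route as the paper: decompose \textsc{plin} into a \textsc{pcon} step followed by a \textsc{lin} fit on each child, and use the pairwise orthogonality of $\widetilde f^{T}_{pcon}$, $\widetilde f^{T_l}_{lin}$, $\widetilde f^{T_r}_{lin}$ (and the constant) to write the fit as a sum of rank-one projections plus the mean. The notational point you flag at the end is the same one the paper resolves by absorbing the three orthogonal directions into a single combination $A\widetilde f^{T}_{pcon}+B\widetilde f^{T_l}_{lin}+C\widetilde f^{T_r}_{lin}$, so your reading of the displayed formula as a sum over the orthonormal directions matches the intended content.
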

\begin{proof}
As \textsc{plin} can be regarded as a \textsc{pcon} followed by two \textsc{lin} fits on the same predictor, we have
\begin{align*}
  \hat f^{k+1}_T &= \langle Y-\hat f(\maT_{k}), \widetilde f^T_{pcon} \rangle_n\; \widetilde f^T_{pcon} +  \langle Y-\hat f(\maT_k)-\hat{f}_{pcon}, \widetilde f^{T_l}_{lin} \rangle_n\;\widetilde f^{T_l}_{lin}+ \\
  &\quad +  \langle Y-\hat f(\maT_k)-\hat f_{pcon}, \widetilde f^{T_r}_{lin} \rangle_n\; \widetilde f^{T_r}_{lin}+\overline{Y-\hat f(\maT_k)}\\
  &=\langle Y-\hat f(\maT_{k}), \widetilde f^T_{pcon} \rangle_n\; \widetilde f^T_{pcon} +  \langle Y-\hat f(\maT_{k}), \widetilde f^{T_l}_{lin} \rangle_n\; \widetilde f^{T_l}_{lin} +  \langle Y-\hat f(\maT_{k}), \widetilde f^{T_r}_{lin} \rangle_n\; \widetilde f^{T_r}_{lin}\\
  &\quad +\overline{Y-\hat f(\maT_k)}
\end{align*}
where the last equation follows from $\langle 1,\widetilde f_{lin}\rangle_n=0$ and the fact that the prediction of \textsc{pcon} is constant on two child nodes. Moreover, since $\widetilde f^{T_r}_{lin}$, $\widetilde f^{T_l}_{lin}$, $\widetilde f^T_{pcon}$ are orthogonal to each other, we can deduce that
\begin{align*}
\hat f^{k+1}  &= \langle Y-\hat f(\maT_{k}), A\widetilde f^{T}_{pcon}+B\widetilde f^{T_l}_{lin}+C\widetilde f^{T_r}_{lin} \rangle_n\; (A\widetilde f^{T}_{pcon}+B\widetilde f^{T_l}_{lin}+C\widetilde f^{T_r}_{lin})\\
&\quad +\overline{Y-\hat f(\maT_k)}
\end{align*}
for some $A,B$ and $C$ depending on $X$ and the nodes $T$, $T_l$ and $T_r$.
\end{proof}
\allowdisplaybreaks 

\section{Proof of the universal consistency of PILOT}\label{app:B}
In this section we show the consistency of PILOT for general additive models. As discussed in Section \ref{sec:theory}, we begin with the estimation of the impurity gains of PILOT. Then we derive a recursion formula which allows us to develop an oracle inequality and prove the consistency at the end. For all the intermediate results, we assume the conditions in Theorem \ref{theorem:consistency} holds.

\subsection{A lower bound for \texorpdfstring{$\Delta_{pcon}$}{TEXT}}\label{app:B.1}
The key to the proof of the consistency is to relate the impurity gain $\Delta^{k+1}$ to the training error $R_k:=||Y-\hat f(\mathcal{T}_k)||^2_n-||Y-f||^2_n$ so that a recursion formula on $R_k$ can be derived. Recently, \cite{UCDT} developed such a recursion formula for CART. He showed in Lemma 7.1 that for CART, $\Delta^{k+1}$ is lower bounded by $R_k^2$, up to some factors. However, the proof used the fact that $\langle Y-\overline{Y}_t,\overline{Y}_t\rangle=0$, and this formula no longer holds when $\overline{Y}_t$ is replaced by $\hat f(\maT_k)$. The reason is that we cannot assume that the predictions in the leaf nodes of a PILOT tree are given by the mean response in that node, due to earlier nodes. To overcome this issue, we provide a generalized result for \textsc{pcon} in the following.

\begin{lemma}\label{l3}
Assuming $R_k(T)>0$ in some node $T$, then the impurity gain of \textsc{pcon} on this node satisfies:
\begin{equation*}
   \hat\Delta^{k+1}_{pcon}(\hat s,\hat j, T)\geqslant \frac{R_{k}^2(T)}{(||f||_{TV}+6B)^2}
\end{equation*}
where $\hat s$, $\hat j$ is the optimal splitting point and prediction.
\end{lemma}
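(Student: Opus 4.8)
The plan is to establish the one-sided inequality
\[
R_k(T)\;\le\;\bigl(\|f\|_{TV}+6B\bigr)\sqrt{\widehat\Delta^{k+1}_{pcon}(\hat s,\hat j,T)},
\]
which, after squaring and using $R_k(T)>0$, is exactly the claim. I would work entirely with the empirical inner product $\langle\cdot,\cdot\rangle_t$ on $T$ and write $r:=Y-\hat f(\maT_k)$ and $g:=f-\hat f(\maT_k)$ for the restrictions to $T$. Expanding both empirical norms gives the exact identity
\[
R_k(T)=\|r\|_t^2-\|r-g\|_t^2=2\langle r,g\rangle_t-\|g\|_t^2 .
\]
The complication flagged in the text is that $\hat f(\maT_k)|_T$ is \emph{not} the node mean, so $r$ is not centered and the CART orthogonality $\langle Y-\overline Y_t,\overline Y_t\rangle=0$ is unavailable; the two remaining tasks are to handle the non-zero mean of $r$ and to bound the total variation of $g$.

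For the mean, I would split $r=\bar r\,\mathbbm 1+\tilde r$ into its node mean $\bar r=\overline{Y}-\overline{\hat f(\maT_k)}$ and a centered part $\tilde r$, and write $\overline g:=\langle\mathbbm 1,g\rangle_t$. Using $\|g\|_t^2\ge\overline g^2$ and completing the square in $\overline g$ yields
\[
R_k(T)\le 2\bar r\,\overline g+2\langle\tilde r,g\rangle_t-\overline g^2
=\bar r^2-(\bar r-\overline g)^2+2\langle\tilde r,g\rangle_t\le \bar r^2+2\langle\tilde r,g\rangle_t .
\]
The gain here is that $\bar r^2=(\overline{Y}-\overline{\hat f(\maT_k)})^2$ is \emph{precisely} the constant term appearing in the \textsc{pcon} gain of Lemma~\ref{lem:pcon_pre}, so $\bar r^2\le\widehat\Delta^{k+1}_{pcon}(\hat s,\hat j,T)$ for free, and together with $|\bar r|\le 4B$ this term contributes only at the $\sqrt{\widehat\Delta}$ scale.

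For the centered term I would use the additive structure $g=\sum_{j=1}^p g_j$ with $g_j=f_j-\hat f_j(\maT_k)$ and the layer-cake (Lebesgue--Stieltjes) representation $g_j(x)=g_j(l_j)+\int\mathbbm 1\{s<x\}\,dg_j(s)$, whose total mass is $\|g_j\|_{TV}$. Since $\tilde r$ is centered the constant drops out and Fubini gives $\langle\tilde r,g_j\rangle_t=\int \tilde S_j(s)\,dg_j(s)$, where $\tilde S_j(s)=\tfrac1t\sum_{X_i^{(j)}>s}(r_i-\bar r)$ is the centered cumulative residual of the split at $s$. The computation behind Lemma~\ref{lem:pcon_pre} shows the split part of the \textsc{pcon} gain equals $\tilde S_j(s)^2/\bigl(P(t_l)P(t_r)\bigr)$, and $P(t_l)P(t_r)\le\tfrac14$ gives $\tilde S_j(s)^2\le\tfrac14\,\widehat\Delta^{k+1}_{pcon}(\hat s,\hat j,T)$ uniformly in $s$ and $j$ (the optimal split only does better). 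Hence $2\langle\tilde r,g\rangle_t\le\sqrt{\widehat\Delta^{k+1}_{pcon}(\hat s,\hat j,T)}\,\|g\|_{TV(T)}$.

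It remains to bound $\|g\|_{TV(T)}\le\|f\|_{TV}+\|\hat f(\maT_k)\|_{TV(T)}$, and I expect this to be the main obstacle. The key structural observation is that on the box $T$ every ancestor contributes only a single monotone (affine) univariate piece: a genuine break point of an ancestor model either lies outside the coordinate range of $T$ or is exactly the split that created $T$, so $\hat f(\maT_k)|_T$ is additive with monotone components. For an additive function with monotone components on a box the total variation equals the oscillation over the box, and the truncation step keeps the prediction inside $[-3B,3B]$, forcing the oscillation to be at most $6B$; thus $\|\hat f(\maT_k)\|_{TV(T)}\le 6B$. Combining the three estimates and collecting constants into $\|f\|_{TV}+6B$ yields $R_k(T)\le(\|f\|_{TV}+6B)\sqrt{\widehat\Delta^{k+1}_{pcon}(\hat s,\hat j,T)}$, and squaring finishes the proof. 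The delicate point is making the monotone-additive-on-$T$ claim rigorous in the presence of the per-level truncation, since clipping a partial sum need not preserve additivity; I would argue directly at the level of the oscillation, bounding $\|\hat f(\maT_k)\|_{TV(T)}$ by the range of the truncated prediction over $T$.
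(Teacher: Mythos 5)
Your route is sound and, at its core, dual to the paper's: the paper lower-bounds the optimal gain by averaging $\widehat\Delta(s,j)$ over a probability measure $\Pi$ whose density is proportional to $|f_j'(s)-a_j|\sqrt{P(t_l)P(t_r)}$ and then applies Jensen, which after identifying the numerator as $\langle Y-\hat f(\maT_k),f-\hat f(\maT_k)\rangle_t$ and the normalizing constant as $\tfrac12\|f-\hat f(\maT_k)\|_{TV(T)}$ is exactly your $L^\infty$--$L^1$ pairing $|\langle \tilde r,g\rangle_t|\le \sup_{s,j}|\tilde S_j(s)|\cdot\sum_j\|g_j\|_{TV}$ combined with $\tilde S_j(s)^2=P(t_l)P(t_r)\cdot(\text{split gain})\le\tfrac14\widehat\Delta^{k+1}_{pcon}(\hat s,\hat j,T)$. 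Your version is more elementary (no measure construction, no Jensen), and your use of the exact identity $R_k(T)=2\langle r,g\rangle_t-\|g\|_t^2$ replaces the paper's AM--GM step $|\langle r,g\rangle_t|\ge\tfrac12R_k(T)$. The $TV$ bound $\|\hat f(\maT_k)\|_{TV(T)}\le 6B$ is handled in the paper by the same vertex-walk/oscillation argument you sketch, and you are right that the interaction with the per-level truncation is delicate; the paper simply treats $\hat f(\maT_k)|_T$ as additive linear, so you are not worse off there.

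The one concrete gap is the constant. Your treatment of the mean term is $\bar r^2\le\widehat\Delta^{k+1}_{pcon}$ together with $|\bar r|\le 4B$, which gives $\bar r^2\le 4B\sqrt{\widehat\Delta^{k+1}_{pcon}}$; added to $2\langle\tilde r,g\rangle_t\le(\|f\|_{TV}+6B)\sqrt{\widehat\Delta^{k+1}_{pcon}}$ this yields $R_k(T)\le(\|f\|_{TV}+10B)\sqrt{\widehat\Delta^{k+1}_{pcon}}$, not the claimed $(\|f\|_{TV}+6B)$: the constants do not "collect into $\|f\|_{TV}+6B$" as you assert. To recover the stated constant you must keep the two pieces of the gain separate rather than adding square roots: write $R_k(T)=R_{k(1)}+R_{k(2)}$ with $R_{k(2)}=\bar r^2$, note that your centered estimate gives $\widehat\Delta^{k+1}_{pcon}\ge R_{k(1)}^2/F+R_{k(2)}$ with $F=(\|f\|_{TV}+6B)^2$ (since $R_{k(1)}\le 2\langle\tilde r,g\rangle_t$ after dropping $-\|g-\bar g\|_t^2$), and then use $F\ge 36B^2\ge 2R_k(T)$ to absorb the cross term via $(R_{k(1)}+R_{k(2)})^2\le R_{k(1)}^2+R_{k(2)}F$. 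This is exactly how the paper closes the argument, and it slots into your framework with no other changes; as written, however, your proposal proves the lemma only with a strictly larger denominator.
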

\begin{proof}
Throughout this proof, the computations are related to a single node $T$ so that we sometimes drop the $T$ in this proof for notational convenience. We first consider the case where the mean of the previous residuals in the node is zero. We will show that
$$\hat \Delta^{k+1}_{pcon}(\hat s,\hat j, t)\geqslant\frac{|\langle Y-\hat f(\maT_k), f-\hat f(\maT_k)\rangle_t|^2}{4||f-\hat f(\maT_k)||^2_{TV}}\;.$$

Note that $\hat f(\maT_k)$ is the sum of $p_n$ linear functions defined on $p_n$ predictors. Let $\hat\beta_j$ be the slope of the linear function on $j$-th predictor. Then we define a probability measure $\Pi(s,j)$ on the space $\mathbb R\times \{1,\dots,p_n\}$ by the following Radon-Nikodym derivative:
$$\frac{d\Pi(s,j)}{d(s,j)} = \frac{|f'_j(s)-a_j|\sqrt{P(t_l)P(t_r)}}{\sum_{j=1}^{p_n}\int|f'_j(s')-a_j|\sqrt{P(t'_l)P(t'_r)}ds'}\;.$$
Here, $(t_l,t_r)$ and $(t'_l,t'_r)$ are the child nodes due to the split point $s$ and predictor $j$, and $a_j$ and $f'_j(x)$ are defined as 
$$a_j=\hat\beta_j,\quad f'_j(s)=
\begin{dcases}
\frac{f_j(X_{(i)}^{(j)})-f_j(X_{(i-1)}^{(j)})}{X_{(i)}^{(j)}-X_{(i-1)}^{(j)}} & X_{(i-1)}^{(j)}<s<X_{(i)}^{(j)}\\
0 & \text{otherwise}
\end{dcases}$$
where $X_{(1)}^{(j)},\dots X_{(t)}^{(j)}$ are the ordered data points along the $j$-th predictor.

The idea is that the optimal impurity gain $\hat\Delta_{pcon}(\hat s,\hat j,t)$ is always larger than the average impurity gain with respect to this probability measure: 
\begin{align*}
\hat\Delta_{pcon}(\hat s,\hat j,t)\geqslant&\int\hat\Delta(s,j,t)d\Pi(s,j)\\
=&\int |\langle Y-\hat f(\maT_k),\sqrt{w(t)}\widetilde f^{T} _{pcon}\rangle_t|^2d\Pi(s,j)\\
\geqslant & \left(\int |\langle Y-\hat f(\maT_k),\hat f_t\rangle_t|d\Pi(s,j)\right)^2
\end{align*}
where the last inequality follows from Jensen's inequality, and the representation in Lemma \ref{lem:pcon_pre} is used.
We now focus on the term inside the brackets, for which we obtain an analog of (29) in \cite{UCDT}:
   $$\int |\langle Y-\hat f(\maT_k),\hat f_t\rangle_t|d\Pi(s,j)\geqslant\frac{|\langle Y-\hat f(\maT_k),\sum_{j=1}^{p_n}\int(f'_j(s)-a_j)\mathbbm 1_{\{X^{(j)}>s\}}ds\rangle_t|}{\sum_{j=1}^p\int|f'_j(s')-a_j|\sqrt{P_{t'_l}P_{t'_r}}ds'}\,.$$
\noindent For the numerator we have
\begin{align*}
&\Big\langle Y-\hat f(\maT_k),\sum_{j=1}^{p_n}\int(f'_j(s)-a_j)\mathbbm 1_{\{X^{(j)}>s\}}ds\Big\rangle_t \\
\overset{(i)}{=} &\Big\langle
Y-\hat f(\maT_k),\sum_{j=1}^{p_n}\int(f'_j(s)-a_j)\mathbbm 1_{\{X^{(j)}>s\}}ds+\sum_{j=1}^{p_n}(f_j(X_{(1)}^{(j)})-\hat f_j(\maT_k)|_{x=X_{(1)}^{(j)}})\Big\rangle_t\\
=&\Big\langle Y-\hat f(\maT_k),\sum_{j=1}^{p_n}\Big(f_j(X^{(j)}_{(1)})+\sum_{X^{(j)}_{(2)}\leqslant X^{(j)}_{(i)}\leqslant X^{(j)}}\frac{f_j(X^{(j)}_{(i)})-f_j(X^{(j)}_{(i-1)})}{X^{(j)}_{(i)}-X^{(j)}_{(i-1)}}(X^{(j)}_{(i)}-X^{(j)}_{(i-1)})\\
&-\hat f_j(\maT_k)|_{x=X_{(1)}^{(j)}}-a_j(X^{(j)}-X^{(j)}_{(1)})\Big)\Big\rangle_t\\
=&\Big\langle Y-\hat f(\maT_k),\sum_{j=1}^{p_n}(f_j-\hat f_j(\maT_k))\Big\rangle_t\\
=&\Big\langle Y-\hat f(\maT_k),f-\hat f(\maT_k)\Big\rangle_t
 \end{align*}
where (i) again used $\langle1, Y-\hat f(\maT_k)\rangle_t=0$. 

\noindent For the denominator restricted to one predictor, we have
\begin{align*}
   &\int|f'_j(s')-a_j|\sqrt{P_{t'_l}P_{t'_r}}ds'\\=&\sum_{i=0}^{N(t)}\int_{N(t)P'(t'_l)=i}|f'_j(s')-a_j|\sqrt{(i/N(t))(1-i/N(t))}ds'\\
   =&\sum_{i=1}^{N(t)-1}\int_{X_{(i)}^{(j)}}^{X_{(i+1)}^{(j)}}|f'_{j}(s')-a_j|ds'\sqrt{(i/N(t))(1-i/N(t))}\\
   =&\sum_{i=1}^{N(t)-1}|f_{j}(X_{(i+1)}^{(j)})-a_jX_{(i+1)}^{(j)}-f_{j}(X_{(i)}^{(j)})+a_jX_{(i)}^{(j)}|\sqrt{(i/N(t))(1-i/N(t))}\\
   \leqslant&\text{ }\frac{1}{2} \sum_{i=1}^{N(t)-1}|f_{j}(X_{(i+1)}^{(j)})-a_jX_{(i+1)}^{(j)}-f_{j}(X_{(i)}^{(j)})+a_jX_{(i)}^{(j)}|\\
   =&\text{ }\frac{1}{2} \sum_{i=1}^{N(t)-1}|(f_{j}(X_{(i+1)}^{(j)})-\hat f_j(\maT_k)|_{x=X_{(i+1)}^{(j)}})-(f_{j}(X_{(i)}^{(j)})-\hat f_j(\maT_k)|_{x=X_{(i)}^{(j)} })|\\
   =&\text{ }\frac{1}{2}||f_j-\hat f(\maT_k)_j||_{TV}(T)\,. 
\end{align*}
If we sum over all $j$ predictors we have
\begin{align*}
\sum_{j'=1}^{p_n}\int|f'_j(s')-a_j|\sqrt{P_{t'_L}P_{t'_R}}ds'&\leqslant \frac{1}{2}||f-\hat f(\maT_k)||_{TV(T)}\;.
\end{align*}
\noindent Putting the bounds on the numerator and denominator together, we obtain:
\begin{align*}\int |\langle Y-\hat f(\maT_k),\hat f_t\rangle_t|d\Pi(s,j)\geqslant&\frac{|\langle Y-\hat f(\maT_k),\sum_{j=1}^{p_n}\int(f'_j(s)-a_j)\mathbbm 1_{\{X^{(j)}>s\}}ds\rangle_t|}{\sum_{j=1}^p\int|f'_j(s')-a_j|\sqrt{P_{t'_l}P_{t'_r}}ds'}\\
\geqslant& \frac{\Big\langle Y-\hat f(\maT_k),f-\hat f(\maT_k)\Big\rangle_t}{\frac{1}{2}||f-\hat f(\maT_k)||_{TV(T)}}\;.
\end{align*}
\noindent Now note that 
\begin{align*}
|\langle Y-\hat f(\maT_k),f-\hat f(\maT_k)\rangle_t|=& |\langle Y-\hat f(\maT_k),Y-\hat f(\maT_k)\rangle_t+\langle Y-\hat f(\maT_k),f-Y\rangle_t|\\
\overset{(i)}{\geqslant}& ||Y-\hat f(\maT_k)||^2_t-|\langle Y-\hat f(\maT_k),f-Y\rangle_t|\\
\overset{}{\geqslant}&||Y-\hat f(\maT_k)||^2_t-||Y-f||_t||Y-\hat f(\maT_k)||_t\\
\overset{}{\geqslant}&\frac{1}{2}||Y-\hat f(\maT_k)||^2_t-\frac{1}{2}||Y-f||^2_t\\
=&  \frac{1}{2}R_k(t) \overset{(ii)}{>}0
\end{align*}
where (i) is the triangle inequality and (ii) follows from the assumption that  $R_k(T)$ is strictly positive. 

Finally, we deal with $\frac{1}{2}||f-\hat f(\maT_k)||_{TV(T)}$ for which we need an upper bound. We have $\frac{1}{2}||f-\hat f(\maT_k)||_{TV(T)}\, \leqslant \frac{1}{2}||f||_{TV}+\frac{1}{2}||\hat f(\maT_k)||_{TV(T)}\,$. It thus remains to bound the total variation of $\hat f(\maT_k)$ on the node $T$. We now show that the sum of the total variations along all the predictors is bounded by $(\max\hat f(\maT_k)-\min\hat f(\maT_k))\leqslant 6B$. Since $\hat f(\maT_k)$ is linear, the maximum and minimum values are always attained on vertices of the node (or cube). We can therefore assume without loss of generality that the cube is $[0,1]^{p_n}$ and $\hat f(\maT_k)=\sum_{j=1}^{p_n}\hat\beta_jX^{(j)}$ s.t.\ for $\forall j<j'$, $\hat\beta_j<\hat\beta_{j'}$ and $\hat\beta_j<0\Leftrightarrow j<p_0$. Now we start with the vertex $v_0\in[0,1]^{p_n}$ such that its $j$-th variable is $1$ if and only if $j<p_0$ (otherwise $0$). Then we move to another vertex $v_1$ (along the edge) which is identical to $v_0$ except on the first entry, i.e.\ its first entry is 0. In addition, we have $||\hat f_1(\maT_k)||_{TV}\,=\hat f(\maT_k)|_{X = v_1}-\hat f(\maT_k)|_{X = v_0}\,$. Similarly, for $\forall j$, we let the vertex $v_{j+1}$ be identical to $v_{j}$ except for the $j$-th variable (by changing $1$ to $0$ or $0$ to $1$) and we have $||\hat f_j(\maT_k)||_{TV}\,=\hat f(\maT_k)|_{X=v_j}-\hat f(\maT_k)|_{X=v_{j-1}}\,$. Furthermore, it holds that $\sum_{j=1}^{p_n}||\hat f(\maT_k)_j||_{TV}\,=\max \hat f(\maT_k)-\min\hat f(\maT_k)\leqslant 6B$. Therefore, we obtain $\left(\frac{1}{2}||f-\hat f(\maT_k)||_{TV(T)}\right)^2 \leqslant  \frac{1}{4}(||f||_{TV}+6B)^2$.

Putting everything together yields
$$\hat \Delta^{k+1}_{pcon}(\hat s,\hat j, t)\geqslant\frac{|\langle Y-\hat f(\maT_k), f-\hat f(\maT_k)\rangle_t|^2}{\frac{1}{4}||f-\hat f(\maT_k)||^2_{TV}}\geqslant \frac{R_k^2(T)}{(||f||_{TV}+6B)^2}.$$

Finally, we treat the general case where the mean of the residuals in the node is not necessarily zero (which can happen  after fitting a \textsc{blin} model). Note that $R_k(T)=\{||Y-\hat f(\maT_{k})-(\overline{Y}-\overline{\hat f(\maT_k)})||^2_t-||Y-f||^2_t\}+(\overline{Y}-\overline{\hat f(\maT_k)})^2:=R_{k(1)}+R_{k(2)}$, where $R_{k(1)}$ is the squared training error after regressing out the mean and $R_{k(2)}$ is the squared mean. 
Since the first truncation procedure ensures that $R_k(T)\leqslant16B^2$, we have
$(||f||_{TV}+6B)^2\geqslant36B^2\geqslant2R_k(T)=2(R_{k(1)}+R_{k(2)})$, so that $$\frac{R_{k(1)}^2}{(||f||_{TV}+6B)^2}+R_{k(2)}\geqslant\frac{(R_{k(1)}+R_{k(2)})^2}{(||f||_{TV}+6B)^2}.$$
Thus, by the preceding results and the fact that \textsc{pcon} makes the mean of the residuals zero, we have
$$\Delta_{pcon}^{k+1}(\hat s, \hat j, t)\ge\frac{R_{k(1)}^2}{(||f||_{TV}+6B)^2}+R_{k(2)}\geqslant\frac{R_k(T)^2}{(||f||_{TV}+6B)^2}\,.$$
\end{proof}

\subsection{Proof of Proposition \ref{prop:bic}}
As \textsc{plin} generalizes \textsc{pcon}, we always have $\hat\Delta^k_{plin}\geqslant\hat\Delta^k_{pcon}$. For other models, however, it is not possible to develop a lower bound by similarly constructing a carefully designed probability measure as that in Lemma \ref{l3}. This is because we do not have an indicator function which naturally generates $f$ when associated with $f'$ in the integral (see the estimation of the numerator). Fortunately, we can rely on the BIC criterion to obtain bounds on the relative impurity gains between the different models. More precisely, Proposition \ref{prop:bic} shows that if \textsc{blin} or \textsc{lin} are chosen, their gain has to be comparable (i.e., differ by at most a constant factor) to that of \textsc{pcon}. This ensures no underfitting occurs. Moreover, if \textsc{con} is chosen at one node, all the subsequent models on that node would also be \textsc{con} which justifies the use of the \textsc{con} stopping rule. The proof is given in the following.\\

\begin{proof}
Without loss of generality, we may assume no model fits perfectly. Let us start with the first claim. By the assumption that $BIC_1>BIC_2$ we have for any $v_1,v_2$ that
\begin{align}\label{p2e1}
BIC_2 &< BIC_1\nonumber\\
t\log\left(\frac{R_0 - t\Delta_2}{t}\right)  + v_2 \log(t) &< t\log\left(\frac{R_0 - t\Delta_1}{t}\right) + v_1 \log(t)\nonumber\\
\log\Big(1-\frac{t\Delta_2}{R_0}\Big)-\log\Big(1-\frac{t\Delta_1}{R_0}\Big)&<(v_1-v_2)\frac{\log t}{t}\nonumber\\
\Big(1-\frac{t\Delta_2}{R_0}\Big)\Big/\Big(1-\frac{t\Delta_1}{R_0}\Big)&<\exp\Big((v_1-v_2)\frac{\log t}{t}\Big)\nonumber\\
\frac{t\Delta_2}{R_0}&> 1+ \exp\Big((v_1-v_2)\frac{\log t}{t}\Big)\Big(\frac{t\Delta_1}{R_0}-1\Big)
\end{align}
Therefore, for any model $i$ that is selected over \textsc{con} by the BIC criterion, we have
\begin{equation}\label{p2e2}
\frac{t\Delta_i}{R_0}> 1- \exp\Big((1-v_i)\frac{\log t}{t}\Big)    
\end{equation}
Since $t\geqslant2$, $\log(t)/t$ is always positive, so is the lower bound in the preceding equations.\\

We now proceed with the second claim. If (\ref{p2e2}) holds for model 1, we have by (\ref{p2e1}),
\begin{align*}
  \frac{\Delta_2}{\Delta_1}&>\Big(1-\exp\Big((v_1-v_2)\frac{\log t}{t}\Big)\Big)\frac{R_0}{t\Delta_1}+\exp\Big((v_1-v_2)\frac{\log t}{t}\Big)\\
  &> \Big(1-\exp\Big((v_1-v_2)\frac{\log t}{t}\Big)\Big)\Big/\Big(1-\exp\Big((1-v_1)\frac{\log t}{t}\Big)\Big) + \exp\Big((v_1-v_2)\frac{\log t}{t}\Big)\\
  &=\frac{1-\exp((1-v_2)\log t/t)}{1-\exp((1-v_1)\log t/t)}\\
  &:=C^*\left(\frac{\log t}{t}\right)> 0
\end{align*}
On the other hand, if (\ref{p2e2}) does not hold for model 1, we still get ${\Delta_2}> C^*{\Delta_1}$ by using (\ref{p2e1}) for model 2 and \textsc{con}, its inverse inequality for model 1 and \textsc{con}, and the fact that function $f(x)=(1+ax)/(1+bx)$ is monotonically increasing for $x\in[-1,0]$ and $1>a>b>0$.

Next we find the minimum of $C^*$. We compute the derivative of the numerator and denominator of $C^*$ with respect to $\log t/t$ to get $(v_2-1)\exp((1-v_2)\log t/t)$ and $(v_1-1)\exp((1-v_1)\log t/t)$. Therefore, for any $\log t/t:= s\in(0,1/2]$ we have
\begin{align*}
C^*(s)&=\frac{\int_0^s(v_2-1)\exp((1-v_2)r)dr}{\int_0^s(v_1-1)\exp((1-v_1)r)dr}\\
&\geqslant \frac{(v_2-1)\int_0^s\exp((1-v_1)r)dr}{(v_1-1)\int_0^s\exp((1-v_1)r)dr}=\frac{v_2-1}{v_1-1}
\end{align*}
due to $v_1\geqslant v_2$. Therefore we conclude that $C^*\geqslant (v_2-1)/(v_1-1)$ for any $t\geqslant 2$.

By previous lemmas we know that the impurity gain of the models can be divided into two parts. The first is from regressing out the mean and the second is from the model assumption, which does not depend on the mean of the response variable. Therefore, we can let $\Delta_2=\Delta_1+\Delta_2'$ in (\ref{p2e1}) when model 1 is \textsc{con}. Here, $\Delta_2'$ is the gain after regressing out the mean. Now, if \textsc{con} is better, we have by the inverse inequality of (\ref{p2e1}) that
\begin{align}\label{p2e3}
\nonumber\frac{t\Delta_2'}{R_0}+\frac{t\Delta_1}{R_0}&<1- \exp\Big((1-v_2)\frac{\log t}{t}\Big)\Big)\Big(1-\frac{t\Delta_1}{R_0}\Big)\\
\nonumber\Longleftrightarrow\frac{t\Delta_2'}{R_0}&< \Big(1- \exp\Big((1-v_2)\frac{\log t}{t}\Big)\Big)\Big(1-\frac{t\Delta_1}{R_0}\Big)\\\
&< 1- \exp\Big((1-v_2)\frac{\log t}{t}\Big),
\end{align}
which means that the subsequent node still prefers \textsc{con}, even if its gain is 0.
\end{proof}

As an example, for our choice of degrees of freedom, $\Delta_{lin}\geqslant\Delta_{pcon}/4$ if \textsc{lin} is chosen. Similarly, if \textsc{blin} is the preferred model, we must have $\Delta_{blin}\geqslant\Delta_{pcon}$ since their degrees of freedom are the same.

\subsection{Proof of Theorem \ref{theorem:recursion1}}
The remaining issue is that the RSS in \textsc{con} nodes does not get improved as the depth increases, since subsequent nodes will select \textsc{con} again. Therefore, we first construct a recursion formula which excludes the terms corresponding to the RSS in the \textsc{con} nodes. Then we will show that the training error in these \textsc{con} nodes vanishes asymptotically, which justifies the \textsc{con} stopping rule.

Recall the definitions of  $C_k^+=\{T|T=\textsc{con}, T\in \maT_{k-1}, R_{k-1}(T)>0\}$, the set of nodes on which \textsc{con} is fitted before the $k$-th step, and $C_k^-$ for those with $R_k(T) \leqslant  0$. Further define $C_k^*:=C_k^+\backslash C_{k-1}^+$ and $C_k^{\#}:=C_k^-\backslash C_{k-1}^-$. Finally, let $A_k^+:=\{T|T\in \maT_{k},T\neq\textsc{con},R_k(T)>0\}$ and $A_k^-:=\{T|T\in \maT_{k},T\neq\textsc{con},R_k(T)\leqslant0\}$. Note that with these disjoint sets of nodes we now have 
$$\maT_{k -1} = C_{k -1}^+ \cup C_{k -1}^- \cup A_{k -1}^+ \cup A_{k -1}^- \cup C_{k}^* \cup C_{k}^{\#}\;.$$
\noindent Next we can calculate the errors in these sets of nodes. Let $R_{C_k^+}:=\sum_{T\in C_k^+}w(T)R(T)$ and define $R_{C_k^-}$, $R_{C_k^*}$, $R_{C_k^{\#}}$, $ R_{A_{k-1}^+}$, $R_{A_{k-1}^-}$ in similar fashion. This yields
$$R_{k-1} = R_{C_{k-1}^+} + R_{C_{k-1}^-} + R_{A_{k-1}^+} + R_{A_{k-1}^-} + R_{C_k^{\#}} + R_{C_k^{*}}\;.$$
Finally, we let $\widetilde{R}_k:=R_k-R_{C_k^+}$.

Throughout the proof we assume without loss of generality that the gain of the \textsc{con} node fitted at depth k is already included into $R_k$, therefore $R_{C^+_k}$ can be regarded as the remaining error after regressing out the mean for \textsc{con} nodes before depth $k$.\\

\begin{proof}
Without loss of generality we may assume that $R_{C_k^+} \leqslant R_k$ so that $R_k>0$. Otherwise the claim follows immediately. We have
\begin{align*}
\widetilde R_k&=R_k-R_{C_k^+}\\
&\leqslant R_{k-1}-R_{C_k^+}-\sum_{T\in A_{k-1}^+}w(T)\Delta(T)\\
&\leqslant\widetilde R_{k-1}-R_{C_k^*}-\sum_{T\in A_{k-1}^+}\frac{w(T)}{F}R^2_{k-1}(T)\\
&\overset{(i)}{\leqslant}\widetilde R_{k-1}-R_{C_k^*}-\frac{1}{F}\Big(\sum_{T\in A_{k-1}^+}{w(T)}R_{k-1}(T)\Big)^2\\
&=\widetilde R_{k-1}-R_{C_k^*}-\frac{1}{F}\Big(R_{k-1}-R_{C_{k-1}^+}-R_{C_{k-1}^-}-\sum_{T\in A_{k-1}^-}w(T)R_{k-1}(T)-R_{C_k^{\#}}-R_{C_k^*}\Big)^2\\
&=:\widetilde R_{k-1}-R_{C_k^*}-\frac{1}{F}\Big(R^*_{k-1}-R_{C_k^*}\Big)^2
\end{align*}
where the second inequality follows from the preceding Lemma and Proposition, and (i) is Jensen's inequality. Since $R_{k-1}-R_{C_{k-1}^+}$ is positive and the other three terms in $R^*_{k-1}$ are negative, we may replace $R^*_{k-1}$ by $R_{k-1}-R_{C_{k-1}^+}=\widetilde R_{k-1}>0$ on the right hand side of the above inequality. Furthermore, we have by definition, 
\begin{equation}\label{recineq}
-R_{C_k^*}-(\widetilde R_{k-1}-R_{C_k^*})^2/F\leqslant\begin{dcases}
  -\widetilde R_{k-1}+F/4<-\widetilde R_{k-1}/2 &\text{if }\widetilde R_{k-1}>F/2\\
-\widetilde R_{k-1}^2/F & \text{otherwise.}
\end{dcases}    
\end{equation}

In fact, by assuming that the first regression model is not \textsc{con}, we already have $\widetilde R_1\leqslant F/4$ for any initial $R_0$ by either inequality in (\ref{recineq}). Therefore, all the subsequent estimations follow the second inequality and we have for any $k\geqslant 2$,
\begin{equation*}
\widetilde R_k\leqslant \widetilde R_{k-1}-\frac{1}{F}\widetilde R_{k-1}^2.    
\end{equation*}
By a similar induction argument to Lemma 4.1 of \cite{UCDT}, we get the estimation of $R_K$ after moving $R_{C^+_K}$ to the right hand side. 

It remains to control the errors in the \textsc{con} nodes. We can assume that the \textsc{con} models have regressed out the mean in each $C_k^+$. The first step is to use Proposition \ref{prop:bic} and Lemma \ref{l3} to get the following upper bound for any \textsc{con} node $T$:
\begin{equation}\label{conbound}
    \frac{R(T)^2}{FR_0}\leqslant \frac{\Delta_{\text{pcon}}}{R_0}\leqslant\frac{1}{t}\Big(1-\exp\Big((1-v_{pcon})\frac{\log t}{t}\Big)\Big)
\end{equation}
where we used Lemma \ref{l3} for the first inequality and (\ref{p2e3}) for the second.
Since we assume the response is almost surely bounded by $\pm B$, we have
$$R(T)\leqslant \sqrt{B^2F}\sqrt{1-\exp\Big((1-v_{pcon})\frac{\log t}{t}\Big)}\precsim\sqrt{\frac{(v_{pcon}-1)\log t}{t}}$$

\noindent Therefore the weighted sum of $R(T)$ on $C_K^+$ is asymptotically
$$\sum_{T\in C_K^+} w(T)R(T) \precsim \frac{1}{n}\sum_{T\in C_K^+} \sqrt{t\log t}\;.$$
By the Cauchy-Schwarz inequality we obtain
$$\sum_{T\in C_K^+}\sqrt{t\log t}\leqslant \sqrt{2^K}\Big({\sum_{T\in C_K^+} t\log t}\Big)^{1/2}\leqslant\sqrt{2^K}\Big({\log n\sum_{T\in C_K^+} t}\Big)^{1/2}=\sqrt{2^Kn\log n}\;.$$
Therefore if we let $K=\log_2(n)/r$ with $r>1$, $R_{C_K^+}$ is of order $\mathcal{O}(\sqrt{\log (n)/n^{(r-1)/r}})$, which tends to zero as the number of cases goes to infinity. 
\end{proof}

\subsection{Proof of Theorem \ref{theorem:consistency}}
Now we can apply Theorem \ref{theorem:recursion1} to derive an oracle inequality for our method and finally prove its consistency. We will use Theorem 11.4 together with Theorem 9.4 and Lemma 13.1 from \cite{distribution}, since these results do not require the estimator to be an empirical risk minimizer. \\

\begin{proof}
Let $f$ be the true underlying function in $\mathcal{F}$, and $\mathcal{F}_n$ be the class of the linear model tree. We first write the $L^2$ error as $||f-\hat f(\maT_K)||^2=E_1+E_2$ to apply Theorem 11.4, where
$$E_1:=||f-\hat f(\maT_K)||^2-2(||Y-\hat f(\maT_K)||^2_n-||Y-f||^2_n)-\alpha-\beta$$
and 
$$E_2:=2(||Y-\hat f(\maT_K)||_n^2-||Y-f||_n^2)+\alpha+\beta.$$
By our assumption, we can define $B_0=\max\{3B,1\}$ so that the class of functions and $Y$ are a.s.\ bounded by $[-B_0,B_0]$, which fulfills the condition of Theorem 11.4. Thus by Theorem 11.4 with $\epsilon=1/2$ in their notation, it holds that
$$P(\{\exists \hat f({\maT_K})\in{\mathcal{F}_n}\text{ s.t. } E_1>0\})\leqslant 14\sup_{\mathbf x^n}\mathcal{N}\Big(\frac{\beta}{40B_0},\mathcal{F}_n,L_1(\nu_{\mathbf {x}^n})\Big)\exp\Big(-\frac{\alpha n}{2568B_0^4}\Big)$$
where $\alpha,\beta\rightarrow 0 $ as $n\rightarrow\infty$.

Theorem 9.4 gives the estimation for the covering numbers of $\mathcal{G}_n$ which denotes the class of functions in the leaf nodes. We note that the condition of the theorem $\beta/40B_0<B_0/2$ is automatically satisfied for sufficiently large $n$ if $\beta\rightarrow 0$ is well-defined.

Combining Theorem 9.4 and Lemma 13.1, we get the estimation for the covering number of $\mathcal{F}_n$. Specifically we have,
\begin{align*}
\mathcal{N}\Big(\frac{\beta}{40B_0},\mathcal{F}_n,L_1(\nu_{\boldsymbol x^n})\Big)&\leqslant \Gamma_n(\Lambda_n)\Big[3\Big(4eB_0\frac{40B_0}{\beta}\log\Big(6eB_0\frac{40B_0}{\beta}\Big)\Big)^{V(\mathcal{G}_n)}\Big]^{2^K}\\
&\leqslant\Gamma_n(\Lambda_n)\Big[3\Big(\frac{160eB_0^2}{\beta}\log\frac{240eB_0^2}{\beta}\Big)^{V(\mathcal{G}_n)}\Big]^{2^K}\\
&\leqslant(np)^{2^K}\Big[3\Big(\frac{160eB_0^2}{\beta}\log\frac{240eB_0^2}{\beta}\Big)^{p+1}\Big]^{2^K}.
\end{align*}
Here $\Lambda_n$ is the set of all possible binary trees on the training set of $n$ cases, and $\Gamma_n(\Lambda_n)$ is the upper bound for the number of different partitions on that set that can be induced by binary trees. The upper bound on $\Gamma_n(\Lambda_n)$ follows from \cite{UCDT} and \cite{CoRF}. The exponent $V(\mathcal{G}_n)$ is the VC dimension of $\mathcal{G}_n$, and since we have multivariate linear predictions $V(\mathcal{G}_n) = p+1$. 

If we further let $\beta \asymp \frac{B_0^2}{n}$ and $\alpha \asymp 3B_0^4\log(np\log n)2^{K+\log (p+1)}/n$ we have
$$
\mathcal{N}\Big(\frac{\beta}{40B_0},\mathcal{F}_n,L_1(\nu_{\boldsymbol x^n})\Big)\precsim(np)^{2^K}(n\log n)^{2^{K+\log(p+1)}}
$$
and
\begin{align*}
P(E_1>0)&\leqslant P(\{\exists \hat f({\maT_K})\in{\mathcal{F}_n}\text{ s.t. } E_1>0\})\\
&\precsim (np)^{2^K}(n\log n)^{2^{K+\log(p+1)}}\frac{1}{(np\log n)^{3\times2^{K+\log(p+1)}}}\\
&\precsim \frac{1}{(np\log n)^{2^{K+\log(p+1)}}}\\
&\leqslant \mathcal{O}\Big(\frac{1}{n}\Big).
\end{align*}
Moreover, $E_1\leqslant C_1$ for some constant $C_1$ by the fact that $||f||_{TV}\;\leqslant A$ and both $Y$ and $f(\maT_k)$ are bounded in $[-B_0,B_0]$ for any $k$. Thus $\maE[E_1]\leqslant C_1/n$ for some $C_1>0$. By Theorem \ref{theorem:recursion1} we also have for any $f\in\mathcal{F}$ that
\begin{align*}
\maE[E_2]&\leqslant\frac{2F}{K+3}+\frac{C_2\sqrt{2^Kn\log n}}{n}+\alpha+\beta,
\end{align*}
and therefore summing up everything we have,
\begin{align}\label{oracle}
\maE[||f-\hat f(\maT_k)||^2]\leqslant &\frac{2F}{K+3}+\frac{C_3\sqrt{2^Kn\log n}}{n}+\frac{C_4\log(np\log n)2^{K+\log(p+1)}}{n}
\end{align}
where $C_3$ and $C_4$ only depend on $A$ and $B$.

By our assumption, the expected error tends to zero. Moreover, if we pick $K_n=\log(n)/r$ for some $r>1$ and $p_n=n^{s}$ such that $0<s<1-1/r$, the first term in~\eqref{oracle} is $\mathcal{O}(1/\log(n))$ and the second and third terms turn out to be $o(1/\log(n))$. Therefore, the overall convergence rate becomes $\mathcal{O}(1/\log(n))$.
\end{proof}

\section{Convergence rate for linear model data}\label{app:C}
\subsection{Preliminaries and Ideas}
In the previous section we derived the consistency of PILOT in case the underlying is a general additive function. The convergence rate obtained in that setting is $\mathcal{O}(1/\log(n))$, the same as CART but rather slow. Of course this is merely an upper bound, and it is not unlikely that we can obtain much faster convergence if the true underlying function is somehow well-behaved.

In this section, we consider such a scenario. In particular, we aim to show a faster convergence rate of PILOT for linear functions. In order to tackle this problem, we cannot use the same approach as for the general consistency result. In particular, a counterpart of Lemma \ref{l3} for the gain of \textsc{lin} would still result in a squared error in the left hand side which would restrain the convergence rate. Therefore, we take a different approach and turn to a recent tool proposed by \cite{NPBLR} in the context of boosting models. In our case, we consider a linear regression of $Y_T$ on $\boldsymbol X_T$ in each node $T$ and estimate the gradient of a related quadratic programming (QP) problem, thereby estimating the impurity gain. 

Throughout the proof we use the same notation as that in \cite{NPBLR} to indicate a subtle difference in the excess error compared to \cite{UCDT}. We first define the notation of the relevant quantities related to the loss. Let $L^*_n(T):=\min_{\hat\beta}||Y-\boldsymbol X_T\hat\beta||^2_T$ be the least squares loss on the node $T$ and $L^*_n:=\min_{\hat\beta}||Y-\boldsymbol X\hat\beta||^2_n$ be the least squares loss on the full data. We further denote the loss of a $k$ depth PILOT by $L^k_n:=||Y-\hat f(\maT_k)||^2_n$. Finally, $L^k_n(T):=||Y_T-\boldsymbol X_T\beta_T||^2_t$ is its loss in node $T$ for some $\beta_T$ (we can write the loss like this, since on $T$ the prediction function is linear). When the depth $k$ is not important, we omit the superscript. 

In the following we use the notation  $\boldsymbol{\tilde X}_T$ for the standardized predictor matrix obtained by dividing each (non-intercept) column $j$ in $\bX$ by $\sqrt{n}\hat\sigma^U_j$ where $\hat\sigma^U_j$ is the usual estimate of the standard deviation of column $j$. We then consider $L_n(T):=||Y_T-\boldsymbol{\tilde X}_T\tilde\beta_T||^2_t$ and write its gradient as $\nabla L_n(T)|_{\tilde\beta = \tilde\beta_T}\,$.

Theorem 2.1 of \cite{NPBLR} used a fixed design matrix so that a global eigenvalue condition can be imposed on it. As we are fitting linear models locally, we need a more flexible result. To this end we will impose a distributional condition that makes use of the following matrix concentration inequality:

\begin{theorem}[Theorem 1.6.2 of \cite{matrix_ineq}]\label{matrixBernstein}
Let $\boldsymbol S_1,\dots,\boldsymbol S_n$ be independent, centered random matrices with common dimension $d_1\times d_2$, and assume that $\maE[\boldsymbol S_i]=0$ and $||\boldsymbol S_i||\leqslant L$. Let $\boldsymbol Z = \sum_{k=1}^n\boldsymbol S_i$ and define
$$v(\boldsymbol Z)=\max\{||\maE[\boldsymbol Z\boldsymbol Z^\top]||,||\maE[\boldsymbol Z^\top\boldsymbol Z]||\}.$$
Then
$$P[||Z||\geqslant t]\leqslant(d_1+d_2)\exp\Big(\frac{-t^2/2}{v(\boldsymbol Z)+Lt/3}\Big)\;\;\;\text{ for all }t\geqslant 0.$$
\end{theorem}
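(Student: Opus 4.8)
The statement to establish is the (rectangular) matrix Bernstein inequality, and the plan is to follow the matrix Laplace transform method. First I would reduce to the Hermitian case via the Hermitian dilation $\mathcal{H}(\boldsymbol Z) = \begin{pmatrix} 0 & \boldsymbol Z \\ \boldsymbol Z^\top & 0\end{pmatrix}$, a self-adjoint matrix of dimension $(d_1+d_2)\times(d_1+d_2)$ whose largest eigenvalue equals $\|\boldsymbol Z\|$ and whose square is block-diagonal with blocks $\boldsymbol Z\boldsymbol Z^\top$ and $\boldsymbol Z^\top\boldsymbol Z$. Since $\mathcal{H}$ is linear, $\mathcal{H}(\boldsymbol Z) = \sum_i \mathcal{H}(\boldsymbol S_i)$ is a sum of independent, centered, self-adjoint matrices each bounded by $L$ in operator norm, and the matrix variance of this sum is exactly $v(\boldsymbol Z)$. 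It therefore suffices to prove the Hermitian version of the bound for $\lambda_{\max}$ and read off the result for $\|\boldsymbol Z\|$.

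For the Hermitian case I would begin from the matrix Chernoff bound $\maE\big[\lambda_{\max}(\boldsymbol Z) \geqslant t\big] \leqslant \inf_{\theta > 0} e^{-\theta t}\, \maE[\mathrm{tr}\, e^{\theta \boldsymbol Z}]$, which follows from Markov's inequality combined with the monotonicity $\lambda_{\max}(e^{\theta\boldsymbol Z}) \leqslant \mathrm{tr}\, e^{\theta \boldsymbol Z}$. The crucial step is to control the trace moment generating function of the sum. Because the $\boldsymbol S_i$ do not commute, $e^{\theta\sum_i \boldsymbol S_i}$ does not factor; instead I would invoke Lieb's concavity theorem to obtain subadditivity of the matrix cumulant generating function,
\begin{equation*}
\maE\big[\mathrm{tr}\, e^{\theta \boldsymbol Z}\big] \leqslant \mathrm{tr}\exp\Big(\textstyle\sum_i \log \maE[e^{\theta \boldsymbol S_i}]\Big).
\end{equation*}

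Next I would bound each matrix cumulant. Using $\maE[\boldsymbol S_i]=0$, $\|\boldsymbol S_i\|\leqslant L$, and the scalar estimate $e^s \leqslant 1 + s + \frac{s^2/2}{1-|s|/3}$ for $|s|<3$ lifted to matrices through the transfer rule, one obtains $\maE[e^{\theta\boldsymbol S_i}] \preceq \exp\!\big(g(\theta)\,\maE[\boldsymbol S_i^2]\big)$ with $g(\theta)=\frac{\theta^2/2}{1-\theta L/3}$; taking the (operator monotone) logarithm and summing gives $\sum_i \log\maE[e^{\theta\boldsymbol S_i}] \preceq g(\theta)\sum_i\maE[\boldsymbol S_i^2]$. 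Substituting back, bounding the trace by $(d_1+d_2)$ times the exponential of the largest eigenvalue of the exponent, and using $\|\sum_i \maE[\boldsymbol S_i^2]\| = v(\boldsymbol Z)$, yields $\maE[\mathrm{tr}\, e^{\theta\boldsymbol Z}] \leqslant (d_1+d_2)\exp\!\big(g(\theta)\,v(\boldsymbol Z)\big)$. Finally I would optimize $e^{-\theta t + g(\theta)v(\boldsymbol Z)}$ over $\theta\in(0,3/L)$; the choice $\theta = t/(v(\boldsymbol Z)+Lt/3)$ produces the exponent $-\tfrac{t^2/2}{v(\boldsymbol Z)+Lt/3}$, which is the claimed bound.

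The main obstacle is the noncommutativity of the summands: the clean scalar Chernoff argument collapses because the exponential of a sum is not the product of the exponentials. The entire weight of the argument rests on Lieb's theorem (concavity of $A\mapsto\mathrm{tr}\exp(H+\log A)$ on positive-definite $A$), which is what salvages subadditivity of the trace MGF; establishing or importing that concavity result is the genuinely deep ingredient, whereas the scalar-to-matrix transfer of the Bernstein moment bound, the dimensional factor from the trace, and the final scalar optimization over $\theta$ are all routine.
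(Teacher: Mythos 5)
This statement is imported verbatim from Tropp (Theorem 1.6.2 of \cite{matrix_ineq}); the paper offers no proof of its own, and your sketch correctly reproduces the standard argument from that monograph — Hermitian dilation to reduce to the self-adjoint case, the matrix Laplace transform bound, Lieb's concavity theorem to obtain subadditivity of the trace moment generating function, the Bernstein-type bound $\maE[e^{\theta \boldsymbol S_i}]\preceq \exp(g(\theta)\maE[\boldsymbol S_i^2])$, and the choice $\theta = t/(v(\boldsymbol Z)+Lt/3)$. Apart from the slip of writing $\maE$ where you mean $P$ in the Chernoff step, the proposal is correct and matches the cited proof, so there is nothing to compare against within the paper itself.
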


\subsection{A probabilistic lower bound for \texorpdfstring{$\Delta_{lin}$}{TEXT}}
Our first step is to show a probabilistic bound for the impurity gain of $\Delta_{lin}$ based on the ideas in the preceding section.

\begin{lemma}\label{l4}
Let $T$ be a node with $n$ cases. We define $L_n(T)$ as the training loss of PILOT and $L^*_n(T)$ as that of least squares regression. In addition to conditions 1 and 2, we assume that the residuals of the response on $T$ have zero mean. Then there exists a constant $C_{}$ such that the \textsc{lin} model satisfies
 $$P\Bigg[\Delta_{\text{lin}}\geqslant \frac{\lambda_0(L_n(T)-L^*_n(T))}{4p}\Bigg]
 \geqslant 1-\exp(C_{\lambda_0,\sigma_0,p}n).$$
\end{lemma}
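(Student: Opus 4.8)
The plan is to exploit the correspondence between the \textsc{lin} gain and the gradient of a local quadratic program, in the spirit of \cite{NPBLR}, and then turn the required eigenvalue condition on the node's design matrix into a high-probability statement via the matrix Bernstein inequality (Theorem \ref{matrixBernstein}). First I would rewrite the gain. Write $z^{(j)} := (X^{(j)}_T-\overline{X^{(j)}_T})/\hat\sigma_{j,T}$, so that $\langle z^{(j)},z^{(j)}\rangle_t = 1$, and let $\widehat{Cor}_T$ be the empirical correlation matrix on $T$, i.e.\ $[\widehat{Cor}_T]_{jl}=\langle z^{(j)},z^{(l)}\rangle_t$. Since we assume the current residuals $r$ on $T$ have zero mean, the second term in Lemma \ref{lem:lin_pre} vanishes and the gain of \textsc{lin} fitted on the best predictor is
\begin{equation*}
\Delta_{\text{lin}} = \max_{1\leqslant j\leqslant p}\abs*{\big\langle r,z^{(j)}\big\rangle_t}^2 = \frac{1}{4}\big\|\nabla L_n(T)\big\|_\infty^2,
\end{equation*}
because the $j$-th partial derivative of $L_n(T)=\|Y_T-\boldsymbol{\tilde X}_T\tilde\beta\|_t^2$ is exactly $-2\langle r,z^{(j)}\rangle_t$. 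This identifies the best simple-linear gain with the squared sup-norm of the gradient, which is the quantity controlled in the boosting/QP analysis.

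Next I would establish the deterministic inequality relating the gradient to the suboptimality $L_n(T)-L^*_n(T)$. The loss $L_n(T)$ is a convex quadratic in $\tilde\beta$ with Hessian $2\,\widehat{Cor}_T$, so expanding around the least squares minimizer $\tilde\beta^*$ gives $\nabla L_n(T)=2\,\widehat{Cor}_T(\tilde\beta-\tilde\beta^*)$ and $L_n(T)-L^*_n(T)=(\tilde\beta-\tilde\beta^*)^\top\widehat{Cor}_T(\tilde\beta-\tilde\beta^*)$, whence
\begin{equation*}
\|\nabla L_n(T)\|_2^2 = 4\,(\tilde\beta-\tilde\beta^*)^\top\widehat{Cor}_T^2(\tilde\beta-\tilde\beta^*)\geqslant 4\,\lambda_{\min}(\widehat{Cor}_T)\,\big(L_n(T)-L^*_n(T)\big).
\end{equation*}
Combining this with the elementary bound $\|\nabla L_n(T)\|_\infty^2\geqslant\|\nabla L_n(T)\|_2^2/p$ and the identity above yields, on any event on which $\lambda_{\min}(\widehat{Cor}_T)\geqslant \lambda_0$,
\begin{equation*}
\Delta_{\text{lin}} \;\geqslant\; \frac{\lambda_{\min}(\widehat{Cor}_T)}{p}\big(L_n(T)-L^*_n(T)\big)\;\geqslant\;\frac{\lambda_0}{4p}\big(L_n(T)-L^*_n(T)\big),
\end{equation*}
the spare factor of $4$ leaving room to absorb the discrepancy between the normalizations $\hat\sigma_{j,T}$ and $\hat\sigma^U_{j,T}$ and any constant lost in passing from the population to the empirical correlation.

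It remains to show the event $\{\lambda_{\min}(\widehat{Cor}_T)\geqslant\lambda_0\}$ has probability at least $1-\exp(C_{\lambda_0,\sigma_0,p}n)$ with $C_{\lambda_0,\sigma_0,p}<0$. Here Condition 2 supplies a population margin, $\lambda_{\min}(Cor(X\mid X\in T))\geqslant 2\lambda_0$, so it suffices to control $\|\widehat{Cor}_T-Cor(X\mid X\in T)\|$ in operator norm and absorb the factor of two. Treating the data on the (fixed) cube $T$ as i.i.d.\ draws from the conditional law, I would apply Theorem \ref{matrixBernstein} to the centered bounded matrices $\boldsymbol S_i=(X_i-\mu_T)(X_i-\mu_T)^\top-\Sigma_T$, where $\mu_T,\Sigma_T$ are the conditional mean and covariance and $\widehat\Sigma_T$ the sample covariance. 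Since $X\in[0,1]^p$ forces $\|\boldsymbol S_i\|$ and the variance proxy $v(\boldsymbol Z)$ to be of order $p$ and $np^2$ respectively, this gives $\|\widehat\Sigma_T-\Sigma_T\|\leqslant\epsilon$ except with probability $2p\exp(-c\,n)$ for some $c=c(\epsilon,p)>0$. Converting the covariance deviation into a correlation deviation uses that Condition 1 keeps every predictor variance bounded below by $2\sigma_0^2$ (and above by $1/4$ on $[0,1]$), so the diagonal normalizers are bounded away from zero and also concentrate; choosing $\epsilon$ small enough that $\lambda_{\min}(\widehat{Cor}_T)$ stays above $\lambda_0$ produces a bound of the claimed form.

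The main obstacle is this last, probabilistic step rather than the deterministic algebra. The subtleties are that the summands built from the sample mean $\overline{X_T}$ are not independent, so I would first concentrate around the population mean $\mu_T$ and treat the rank-one mean-correction separately, and that the object to be controlled is a \emph{correlation} matrix whose random standardizations couple the entries; Condition 1's variance floor $2\sigma_0^2$ is exactly what decouples these and lets the scalar concentration of the $\hat\sigma_{j,T}$ feed cleanly into the matrix bound. Assembling the covariance concentration, the variance concentration, and the population margin of Condition 2 into a single eigenvalue statement with an honest, dimension-dependent exponent is the technical heart of the lemma.
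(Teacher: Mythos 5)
Your proposal follows essentially the same route as the paper's proof: you identify the best \textsc{lin} gain with the squared sup-norm of the gradient of the local quadratic program, pass to the $\ell^2$ norm at the cost of a factor $p$ and apply the strong-convexity inequality for quadratics (which the paper obtains by citing Proposition A.1 of \cite{NPBLR} and you prove directly), and reduce the claim to the event that the smallest eigenvalue of the empirical standardized Gram matrix exceeds $\lambda_0$, which you control exactly as the paper does, by combining the matrix Bernstein inequality for the second-moment matrix with scalar concentration of the variance estimates under the floor $2\sigma_0^2$ from Condition 1 and the margin $2\lambda_0$ from Condition 2. The only differences are bookkeeping: you center at the population mean and handle the rank-one mean correction separately, while the paper works with uncentered second moments via $\lambda_{min}(\maE[SS^\top])\geqslant\lambda_{min}(Cor(X))\geqslant 2\lambda_0$; both routes are sound.
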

\begin{proof}
By the definition of the gradient we have,
\begin{align}\label{gtog}
\begin{split}
    \frac{n||\nabla L_n(T)|_{\tilde\beta = \tilde\beta_T}||_{\infty}}{2}&=||\tilde {\boldsymbol X}^\top r||_\infty=\max_{j\in1\dots p}\{|r^\top \tilde X^{(j)}|\}\\
    &=\max_{j\in1\dots p}\frac{|r^\top X^{(j)}|}{\sqrt n\hat\sigma_j^U}\\
    &\leqslant \max_{j\in1\dots p}\frac{|r^\top (X^{(j)}-\overline X^{(j)})|}{\sqrt n\hat\sigma_j}\\
    &=\sqrt{n\Delta_{lin}}
\end{split}
\end{align}
where $r$ denotes the residuals $Y-\tilde \bX\tilde\beta_T$ and the last equality follows from (\ref{lin_pre}) and the assumption that the residuals of the response on $T$ have zero mean.

By letting $h(\cdot)=L_n(\cdot)$, $Q=\frac{2}{n}\tilde{\bX}^\top\tilde{\bX}$ in Proposition A.1 of \cite{NPBLR} and (\ref{gtog}), we have (assuming the difference in the loss is positive),
\begin{align*}
 P\Bigg[\Delta_{\text{lin}}\geqslant \frac{\lambda_0(L_n(T)-L^*_n)}{4p}\Bigg]&
  \geqslant P\Bigg[\frac{n||\nabla L_n|_{\tilde\beta = \tilde\beta_T}||^2_{\infty}}{4}\geqslant \frac{\lambda_0(L_n(T)-L^*_n)}{4p}\Bigg]\\
 &\geqslant P\Bigg[\frac{n}{p}||\nabla L_n|_{\tilde\beta = \tilde\beta_T}||_{2}^2\geqslant \frac{\lambda_0(L_n(\tilde \beta_T)-L^*_n)}{p}\Bigg]\\
&\geqslant P[\lambda_{min}(\tilde {\boldsymbol X}^\top\tilde {\boldsymbol X})>\lambda_0]\,.
\end{align*}
All that is left to show is that 
\begin{equation*}
 P[\lambda_{min}(\tilde {\boldsymbol X}^\top\tilde {\boldsymbol X})>\lambda_0]
 \geqslant 1-\exp(-C_{\lambda_0,\sigma_0,p}n).
\end{equation*}
In order to do so, we compute the minimum eigenvalue of $\tilde {\boldsymbol X}^\top \tilde {\boldsymbol X}$. We define the vector $S_i\in\mathbb R^p$ by its coordinates $S_i^{(j)}=X_i^{(j)}/\sigma_j$ where $\sigma_j$ is the true standard deviation of the $j$-th predictor. By the properties of eigenvalues and singular values, we can split the estimation into three parts 
\begin{align*}
 \lambda_{min}(\maE[SS^\top])&=\sigma_{min}(\maE[SS^\top])\\
 &=\sigma_{min}\Big[\tilde {\boldsymbol X}^\top\tilde {\boldsymbol X}+\underbrace{\frac{1}{n}\sum_{i=1}^n S_iS_i^\top-\tilde {\boldsymbol X}^\top\tilde {\boldsymbol X}}_{\tilde A_n}+\underbrace{\frac{1}{n}\sum_{i=1}^n\Big(\maE[SS^\top]-S_iS_i^\top}_{\tilde B_n}\Big)]\\
 &\leqslant\lambda_{min}(\tilde {\boldsymbol X}^\top\tilde {\boldsymbol X})+\sigma_{max}(\tilde{A}) + \sigma_{max}(\tilde{B})
\end{align*}

Therefore, we may lower bound the eigenvalues of the matrix $\tilde {\boldsymbol X}^\top\tilde {\boldsymbol X}$ using the singular values of 3 other matrices. The left hand side is already bounded by $2\lambda_0$ by our assumption. For $\sigma_{max}(\tilde A_n)$, we note that the two matrices inside differ only in the estimation of the standard deviation. If we can show that the difference in the entries are uniformly bounded by some constant, then we could also upper bound its largest singular value. To do this, we use Theorem 10 in \cite{ebb}, which states that for i.i.d. random variables $X_1,\dots,X_n$ in [0,1] and $\delta>0$, the classical unbiased variance estimates $(\hat\sigma^U)^2$ and the true variance $\sigma^2$ satisfy
$$P\Bigg[|\sigma-\hat\sigma^U|>\sqrt{\frac{2\log(1/\delta)}{n-1}}\Bigg]<2\delta\;.$$
Due to
\begin{align*}
    \frac{1}{n}\sum_{i=1}^n S_iS_i^\top-\tilde {\boldsymbol X}^\top\tilde {\boldsymbol X}&=\frac{1}{n}\sum_{i=1}^n[S_iS_i^\top-\tilde { X_i}\tilde { X_i}^\top]\\
    &=\Bigg[\frac{\sum_{i=1}^nX^{(k)}_iX^{(l)}_i}{n}\Big(\frac{1}{\sigma_k\sigma_l}-\frac{1}{\hat\sigma^U_k\hat\sigma^U_l}\Big)\Bigg]_{kl}
\end{align*}
we only need to control the difference in the variance term, as each $X_i^{(k)}$ is bounded by 1. If we put $\sqrt{2\log(1/\delta)/(n-1)}=\lambda_0\sigma^4_0/2p$ and let $E_k$ be the event that $|\sigma_k-\hat\sigma^U_k|\leqslant\frac{\lambda_0\sigma^4_0}{2p}$, we have 
$$P[E_k]\geqslant 1-\exp(-C_{\lambda_0,\sigma_0,p}n)\;\;\;\text{ and }\;\;\;\sigma_k>\sigma_0\text{ on }E_k$$
due to $\sigma_0<1$, $\lambda_0<2\lambda_0\leqslant tr(\maE[(S-\overline{S})(S-\overline{S})^\top])\leqslant p$ and our assumption that the estimated variance should be lower bounded by $2\sigma_0^2$. On $E:=\bigcup_k E_k$ we have  
$$\max_{k,l}(|\tilde A_{kl}|)\leqslant\max_{k,l}\frac{|\sigma_k\sigma_l-\hat\sigma_k^U\hat\sigma_l^U|}{\sigma_k\sigma_l\hat\sigma_k^U\hat\sigma_l^U}\leqslant\max_{k,l}\frac{\sigma_k(|\sigma_l-\hat\sigma_l^U|)+\hat\sigma_l^U(|\sigma_k-\hat\sigma_k^U|)}{2\sigma_0^4}\leqslant\frac{\lambda_0}{2p}$$
where the last inequality is due to $\sigma_k, \sigma_l^U\leqslant1$. Thus $P[\max_{k,l}(|\tilde A_{kl}|)\leqslant\lambda_0/(2p)]\geqslant 1-p^2\exp(-C_{\lambda_0,\sigma_0,p}n)$ by the union bound. As $\max_{||x||_2=1}||\tilde Ax||_2\,\leqslant\max_{k,l}(|\tilde A_{kl}|)p$ we have
\begin{align*}
 P[\sigma_{max}(\tilde A)\leqslant\lambda_0/2]&= P[\max_{||x||_2=1}||\tilde Ax||_2\leqslant\lambda_0/2]\\
 &\geqslant P[\max_{k,l}(|\tilde A_{kl}|)\leqslant\lambda_0/2p]\geqslant 1-p^2\exp(-C_{\lambda_0,\sigma_0,p}n)
\end{align*}

For $\lambda_{max}(\tilde B_n)$ we can apply Theorem \ref{matrixBernstein} to obtain that it is lower bounded by $\lambda_0/2$ with high probability. In fact, on $E$ the squared $L^2$ norm of $S_i$ is bounded by $p/\sigma_0^2$, and therefore $||S_iS_i^\top||=\lambda_{max}(S_iS_i^\top)\leqslant p/\sigma_0^2$ so that $||\maE[SS^\top]||\leqslant\maE[||S_i||^2]\leqslant p/\sigma_0^2$ by Jensen's inequality.  Moreover, for $\boldsymbol S_i:=\maE[SS^\top]-S_iS_i^\top$ we have $\maE[\boldsymbol S_i]=0$ and 
$$||\boldsymbol S_i||\leqslant\frac{1}{n}(||S_iS_i^\top||+||\maE[SS^\top]||)\leqslant\frac{2p}{n\sigma_0^2}\,.$$
Using the same argument in Section 1.6.3 of \cite{matrix_ineq} we find that $v(\Tilde{B}_n)\leqslant p^2/n\sigma_0^4$ so that on $E$,
\begin{align*}
P(||\tilde B_n||>\frac{\lambda_0}{2})&\leqslant P\left(||\tilde B_n||>\frac{\lambda_0}{2} \text{ on } E\right)+P(E^c)\\
&\leqslant 2p\exp\Bigg(\frac{-\lambda_0^2\sigma_0^4n/8}{p^2 + p\lambda_0\sigma^2_0/3}\Bigg)+p^2\exp(-C_{\lambda_0,\sigma_0,p}n).
\end{align*}
At the end we note that $$2\lambda_{0}\leqslant\lambda_{min}(Cor(X))=\lambda_{min}(\maE[(S-\overline{S})(S-\overline{S})^\top])\leqslant\lambda_{min}(\maE[SS^\top]).$$
Summing up all the estimations, we know that by a union bound there exists a constant $C_{\lambda_0,\sigma_0,p}>0$ such that
$P[\lambda_{min}(\tilde {\boldsymbol X}^\top\tilde {\boldsymbol X})>\lambda_0]\geqslant 1-\exp(-C_{\lambda_0,\sigma_0,p}n)$. 
\end{proof}

\subsection{Proof of Theorem \ref{theorem:recursion2}}
We can now prove the recursion formula for the expectation of the excess error.\\

\begin{proof}
We first deal with the case where \textsc{con} is not included. We know that whatever non-\textsc{con} model is chosen on $T$, its impurity gain is always larger than that of \textsc{lin} (because the degrees of freedom of \textsc{lin} is the smallest). Moreover, the first truncation procedure does not deteriorate the gain of the model, since it only eliminates bad predictions. Therefore when $L^k_n-L^*_n$ is given and positive,  by Lemma \ref{l4}, with probability at least $\sum_{T\in \maT_k}\exp(-C_{\lambda_0,\sigma_0,p}t)$, it holds that
\allowdisplaybreaks 
\begin{align*}
L_n^{k+1}-L^*_n &= L_n^{k+1} - L_n^k + L_n^k-L^*_n = L_n^k-L^*_n - (L_n^{k+1} - L_n^k)\\
&\leqslant L_n^k-L^*_n-\sum_{T\in \maT_K}w(T)\Delta_{lin}(T)\\
&= L_n^k-L^*_n-\sum_{T\in \maT_K}w(T) (L^{k+1}_{lin,n}(T)-L^k_{n}(T))\\ 
&\overset{(i)}{=} L_n^k-L^*_n - \sum_{T\in \maT_K}w(T)\Big((\overline{Y}-\overline{X_T\beta_T})^2+\Big\langle Y-X_T\beta_T,\frac{X^{(j)}-\overline{X^{(j)}}}{\sigma_{j}}\Big\rangle_t^2\Big)\\
&\overset{(ii)}{=} L_n^k-L^*_n - \sum_{T\in \maT_K}w(T)(\bar r^k_T)^2-\sum_{T\in \maT_K}w(T)\Tilde{\Delta}_{lin}(T)\\
&\overset{(iii)}{\leqslant} L_n^k-L^*_n - \sum_{T\in \maT_K}w(T)(\bar r^k_T)^2\\
&\quad -\sum_{T\in \maT_K}\frac{\lambda_0}{4p}w(T)(||r^k_T-\bar r^k_T||^2_t-L^*_n(T))\\
&\overset{(iv)}{\leqslant} L_n^k-L^*_n-\sum_{T\in \maT_K}\frac{\lambda_0}{4p}w(T)(L^k_n(T)-L^*_n(T))\\
&\overset{(v)}{\leqslant} L_n^k-L^*_n-\frac{\lambda_0}{4p}(L^k_n-L^*_n)\\
&=(L_n^k-L^*_n)\Big(1-\frac{\lambda_0}{4p}\Big)
\end{align*}
where $r^k_T:=Y-X_T\beta_T$ are the residuals on $T$. Here, (i) follows from equation (\ref{lin_pre}). The $\Tilde{\Delta}_{lin}(T)$ in (ii) means the gain of \textsc{lin} after regressing out the mean. The last equation in (iii) follows from Lemma \ref{l4}.
(iv) is due to $||r^k_T-\bar r^k_T||^2_t+||\bar r^k_T||^2_t=||r^k_T||^2_t$ and the fact that the coefficient $\frac{\lambda_0}{4p}$ is less than 1.
(v) follows from the fact that the weighted sum of the optimal loss in each leaf node $\sum_{T\in\maT_k}w(T)L^*_{n}(T)$ is less than $L^*_n$ for each $k$. As argued in the previous lemma, the factor $1-\frac{\lambda_0}{4p}$ is always positive.

It remains to control the `bad' probability, i.e. the case where $\lambda_{min}(T)<\lambda_0$. We have that
$$P(\exists T\in \maT_K\text{ s.t. }\lambda_{min}(T)<\lambda_0)\leqslant\sum_{T\in \cup_{k=1}^K\maT_k}\exp(-C_{\lambda_0,\sigma_0,p}t)\leqslant K\sum_{T\in \maT_K}\exp(-C_{\lambda_0,\sigma_0,p}t).$$
By condition 1 it holds that
\begin{align*}
K\sum_{T\in \maT_K}\exp(-C_{\lambda_0,\sigma_0,p}t)&\leqslant Kn^{1-\alpha}\exp(-C_{\lambda_0,\sigma_0,p}n^{\alpha})\\
&=\exp(-C_{\lambda_0,\sigma_0,p}n^{\alpha}+\log K+(1-\alpha)\log n)\\ 
&\precsim\exp(-C'_{\lambda_0,\sigma_0,p}n^{\alpha})
\end{align*}
as long as $\log K\precsim n^\alpha$ which is satisfied by choosing $K$ of order $\log n$. To sum up everything, if we let $G:=\{\forall T\in \maT_K, \lambda_
{pmin}(T)>\lambda_0\}$ and assume $\maE[\epsilon^4]<\infty$, we have for sufficiently large $n$,
\begin{align*}
\maE[L^k_n-L^*_n]&=\maE_X[\mathbbm 1_{G}\maE_{\epsilon|X}[L^k_n-L^*_n]+\mathbbm 1_{G^c}\maE_{\epsilon|X}[L^k_n-L^*_n]]\\
&\leqslant\sqrt{P(X\in G)}\sqrt{\maE_{X}[(\maE_{\epsilon|X\in G}[L^k_n-L^*_n])^2]}\\
&\quad +\sqrt{P(X\in G^c)}\sqrt{\maE_{X}[(\maE_{\epsilon|X\in G^c}[L^0_n-L^*_n])^2]}\\
&\leqslant  (1-\exp(-C'_{\lambda_0,\sigma_0,p}n^{\alpha}))^{1/2}\Big(1-\frac{\lambda_0}{4p}\Big)^{k}\sqrt{\maE[(L^0_n-L^*_n)^2]}\\
&\quad + \exp(-C''_{\lambda_0,\sigma_0,p}n^{\alpha})\sqrt{\maE[(L^0_n-L^*_n)^2]}\\
&\leqslant\Big(1-\frac{\lambda_0}{4p}\Big)^{k}\sqrt {\maE[(L^0_n-L^*_n)^2]}+\mathcal{O}(1/n).
\end{align*}
Since we have an exponential decay in the probability, the error can be bounded by $\mathcal O(1/n)$ if we let $K\asymp\log n$.

When \textsc{con} is included, we can derive an analog of (\ref{conbound}) by using Lemma \ref{l4} and Proposition \ref{prop:bic} to get for all $T$ with \textsc{con}:
$$P_{X,\epsilon}\Bigg[\lambda_0\frac{L_n(T)-L^*_n(T)}{4pR_0}<\frac{1}{t}\Big(1-\exp\Big((1-v_{lin})\frac{\log t}{t}\Big)\Big)\Bigg]\geqslant 1-\exp(C_{\lambda_0,\sigma_0,p}n).$$
Moreover, since the error term has a finite fourth moment and the true underlying $f$ is linear, we have with probability at least $1-\exp(C_{\lambda_0,\sigma_0,p}n)$ that
$$\maE_{\epsilon|X}[L_n(T)-L^*_n(T)]\precsim \maE_{\epsilon|X}[Y^2]{\Big(1-\exp\Big((1-v_{lin})\frac{\log t}{t}\Big)\Big)}\precsim{\frac{(v_{lin}-1)\log t}{t}}.$$
Let us denote by $\maT_K^{\textsc{con}}$ all the \textsc{con} nodes in the tree up to depth $K$. Then the expectation of the weighted sum of the errors on all of these nodes satisfies
\begin{align*}
\maE_{X,\epsilon}\Big[\sum_{T\in \maT_K^{\textsc{con}}}w(T)(L_n(T)-L^*_n(T))\Big]&\leqslant
\maE_X\Big[\mathbbm 1_{G^c}\sum_{T\in \maT_K^{\textsc{con}}}\maE_{\epsilon|X}[w(T)(L_n(T)-L^*_n(T))]\Big]\\
&+\maE_X\Big[\mathbbm 1_{G}\sum_{T\in \maT_K^{\textsc{con}}}\frac{(v_{lin}-1)\log(t)}{n}\Big]\\
&\precsim \mathcal{O}\Big(\frac{1}{n}\Big) + \mathcal{O}\Big(\frac{N_{leaves}\log(n)}{n}\Big).
\end{align*}
Noticing that in our previous estimation for non-\textsc{con} nodes, the estimation of  $L^{k+1}_n(T)-L^*_n(T)$ is always positive if $L^k_n(T)-L^*_n(T)>0$ due to $\lambda_0\leqslant p$ (see the proof of Lemma \ref{l4}), we were able to first ignore the \textsc{con} nodes to get the recursion formula, and at the end to add back the errors in all \textsc{con} nodes to get the final results.
\end{proof}

\subsection{Proof of Corollary \ref{cor:rate}}
Finally, by applying the preceding results and the proof of the consistency theorem, we get the convergence rate of PILOT on linear model data.\\

\begin{proof}
We know that $\maE[L^*_n-||\bX\beta||^2_n]=\mathcal O(1/n)$ for the true parameter $\beta$. So if we choose $K_n= log_\gamma(n)$, by the preceding theorem we have $\maE[L^{K_n}_n-||\bX\beta||^2_n]=\mathcal O(N_{leaves}\log(n)/n)$. Finally we can follow the proof of Theorem \ref{theorem:consistency} in which we replace $2^K$ by $N_{leaves}$ and use Theorem \ref{theorem:recursion2} to derive a similar oracle inequality as (\ref{oracle}):
\begin{align*}
\maE[||\boldsymbol X\beta-\hat f(\mathcal{T}_K)||^2]&\precsim \mathcal O\Big(\frac{N_{leaves}\log(n)}{n}\Big)+2\maE[|L^{K_n}_n-||\bX\beta||^2_n|]\\
&\precsim\mathcal{O}\Big(\frac{\log(n)}{n^{\alpha}}\Big).  
\end{align*}
\end{proof}

As a side remark, in the situation where some predictors are correlated with each other, in practice \textsc{plin} and \textsc{pcon} can lead to even faster convergence than pure $L^2$ boosting. 
\end{document}